\documentclass[11pt]{article}

\usepackage[margin=1in]{geometry}
\usepackage{graphicx} % Required for inserting images
\usepackage{times,natbib,authblk}
\usepackage[utf8]{inputenc} % allow utf-8 input
\usepackage[T1]{fontenc}    % use 8-bit T1 fonts
\PassOptionsToPackage{hyphens}{url}
\usepackage{hyperref}       % hyperlinks
\usepackage{xurl,xcolor}
\hypersetup{
    colorlinks = true,
    linkbordercolor = {blue},
    citecolor = {blue}
}
\usepackage{url}
\usepackage{graphicx,wrapfig}
\usepackage{amsmath,amssymb,amsfonts,amstext,amsthm,mathrsfs}
\usepackage{subcaption}
\usepackage{booktabs}       % professional-quality tables
\usepackage{nicefrac}       % compact symbols for 1/2, etc.
\usepackage{microtype}      % microtypography
\usepackage{enumerate}
\usepackage{cleveref}
\usepackage{dsfont}
\usepackage{enumitem}
\usepackage{thm-restate}
\usepackage{color}
\usepackage{algorithmic,algorithm}
\usepackage[algo2e]{algorithm2e}
\usepackage{bbm}
\usepackage{makecell}
\usepackage[normalem]{ulem}
\usepackage[textsize=tiny]{todonotes}

\usepackage{footnote}
\makesavenoteenv{tabular}

% Add the table of conten to appendix
\usepackage[toc,page,header]{appendix} 
\usepackage{minitoc} 
\usepackage{multirow}

%##########################################################

%\newtheorem{theorem}{Theorem}
%\newtheorem{definition}{Definition}
\newtheorem{proposition}{Proposition}
\newtheorem{lemma}{Lemma}
\newtheorem{thm}{Theorem}

\newtheorem{assum}{Assumption}

\usepackage[most]{tcolorbox}
\newtcolorbox{bluebox}{
  colback=blue!3!white,
  colframe=blue!60!black,
  % title=Research Question
}

% \makeatletter
% \newcommand*\bigcdot{\mathpalette\bigcdot@{.5}}
% \newcommand*\bigcdot@[2]{\mathbin{\vcenter{\hbox{\scalebox{#2}{$\m@th #1\bullet$}}}}}
% \makeatother

\newcommand{\argmin}{\mathop{\mathrm{argmin}}}

\newcommand{\red}[1]{{\color{red} #1}}

\allowdisplaybreaks%[4]   %Allow equation to break into multiple pages

% \newenvironment{breakablealgorithm}  %To break algorithm into multiple pages
% {% \begin{breakablealgorithm}
% 	\begin{center}
% 		\ref stepcounter{algorithm}% New algorithm
% 		\hrule height.8pt depth0pt \kern2pt% \@fs@pre for \@fs@ruled
% 		\renewcommand{\caption}[2][\relax]{% Make a new \caption
% 			{\raggedright\textbf{\ALG@name~\thealgorithm} ##2\par}%
% 			\ifx\relax##1\relax % #1 is \relax
% 			\addcontentsline{loa}{algorithm}{\protect\numberline{\thealgorithm}##2}%
% 			\else % #1 is not \relax
% 			\addcontentsline{loa}{algorithm}{\protect\numberline{\thealgorithm}##1}%
% 			\fi
% 			\kern2pt\hrule\kern2pt
% 		}
% 	}{% \end{breakablealgorithm}
% 		\kern2pt\hrule\relax% \@fs@post for \@fs@ruled
% 	\end{center}
% }
% \makeatother

% Todonotes is useful during development; simply uncomment the next line
%    and comment out the line below the next line to turn off comments
%\usepackage[disable,textsize=tiny]{todonotes}
\usepackage[textsize=tiny]{todonotes}

%\title{Provably Mitigating Corruption, Overoptimization, and Verbosity Simultaneously in Offline and Online RLHF/DPO Alignment}

\author{Ziyi Chen$^1$, Junyi Li$^2$, Peiran Yu$^3$, Heng Huang$^1$\\ %\thanks{ Use footnote for providing further information about author (webpage, alternative address)---\emph{not} for acknowledgingfunding agencies.  Funding acknowledgements go at the end of the paper.} \\
$^1$Department of Computer Science, University of Maryland, College Park \\
\texttt{\{zc286,heng\}@umd.edu}\\
$^2$Amazon\\
\texttt{junyili.ai@gmail.com}\\
$^3$Department of Computer Science and Engineering, University of Texas Arlington\\
\texttt{peiran.yu@uta.edu}
}
\date{}

\allowdisplaybreaks%[4]   %Allow equation to break into multiple pages
\title{Provably Mitigating Corruption, Overoptimization, and Verbosity Simultaneously in Offline and Online RLHF/DPO Alignment}

\begin{document}
\maketitle

%################################################
\doparttoc % Tell to minitoc to generate a toc for the parts
\faketableofcontents % Run a fake tableofcontents command for the partocs
%\part{} 
%################################################

\begin{abstract}
Reinforcement learning from human feedback (RLHF) and direct preference optimization (DPO) are important techniques to align large language models (LLM) with human preference. However, the quality of RLHF and DPO training is seriously compromised by \textit{\textbf{C}orrupted} preference, reward \textit{\textbf{O}veroptimization}, and bias towards \textit{\textbf{V}erbosity}. To our knowledge, most existing works tackle only one of these important issues, and the few other works require much computation to estimate multiple reward models and lack theoretical guarantee of generalization ability. In this work, we propose RLHF-\textbf{COV} and DPO-\textbf{COV} algorithms that can simultaneously mitigate these three issues, in both offline and online settings. This ability is theoretically demonstrated by obtaining length-regularized generalization error rates for our DPO-COV algorithms trained on corrupted data, which match the best-known rates for simpler cases with clean data and without length regularization. Moreover, our DPO-COV algorithm is simple to implement without reward estimation, and is proved to be equivalent to our RLHF-COV algorithm, which directly implies the equivalence between the vanilla RLHF and DPO algorithms. Experiments demonstrate the effectiveness of our DPO-COV algorithms under both offline and online settings.
\end{abstract}
\section{Introduction}
Reinforcement learning from human feedback (RLHF) has been widely used in robotics \citep{christiano2017deep,bukharin2024robust}, autonomous driving \citep{wang2024reinforcement,cao2024reinforcement}, large language models (LLM) \citep{ouyang2022training,bai2022training,rafailov2023direct}, image and video generation \citep{wallace2023diffusion,liang2024step,liu2024sora}, etc. 
This work will focus on the application of RLHF to LLM alignment which makes LLM more helpful, honest, and harmless \citep{ouyang2022training,bai2022training}. LLM alignment has two critical steps. The first step is reward modeling, which estimates the reward model that measures the quality of LLM responses, based on human preference data. The second step is reinforcement learning (RL), which fine-tunes the LLM policy to generate responses with an improved expected value of the learned reward \citep{ouyang2022training}. Direct preference optimization (DPO) \citep{rafailov2023direct} further simplifies the standard RLHF process by directly fine-tuning the optimal policy without reward estimation.  

However, the LLM aligned by RLHF and DPO sometimes yields undesirable responses, due to the \textbf{corruption}, \textbf{overoptimization}, and \textbf{verbosity} issues, as introduced below. 

\textbf{Corruption. } The quality of preference data is essential in RLHF and DPO. However, preference labels given by human may be corrupted due to inexperience, inattention, personal bias, unclear context, and even malicious falsification \citep{bukharin2024robust}. For instance, when fine-tuning LLM for automated content moderation on social media, malicious annotators may mislabel harmful contents like misinformation and hate speech as preferable, which misleads the LLM to generate such harmful contents. Therefore, robustness of RLHF and DPO to such corruption is critical, but is tackled by only a few recent works to our knowledge. For example, \cite{cheng2024rime,mandal2024corruption,gao2024impact} use confidence-based data filtering. \cite{ethayarajh2024kto} maximizes the utility function defined based on the prospect theory of human decision making \citep{tversky1992advances} to filter out noisy data. \cite{coste2024reward,rame2024warm} estimate an ensemble of rewards. The recently proposed robust RLHF and robust DPO approaches in \citep{bukharin2024robust} use noise modeling to automatically select the outliers and the estimated reward provably converges to the true reward. % \cite{dong2024rlhf} preprocesses the data by heuristically removing pairwise comparisons with small margins that is likely to be noisy, which only applies to data with absolute scores. 

\textbf{Overoptimization. } 
RLHF and DPO may overoptimize the reward model, yielding LLM responses of high estimated reward but low actual quality \citep{gao2023scaling,casperopen2023}. Various methods have been proposed to tackle such overoptimization issue (a.k.a. reward hacking). For example, \cite{gao2023scaling} uses larger reward model which significantly increases the computational cost of pretraining. \cite{moskovitz2024confronting} applies constraints to RLHF. The $\Phi$Po method \citep{azar2024general} optimizes a general preference function. \cite{eisenstein2024helping,coste2024reward,rame2024warm,fisch2024robust,zhai2023uncertainty} use an ensemble of estimated rewards. 

An emerging and popular strategy with provable generalization ability to solve overoptimization is to adopt a pessimistic (resp. an optimistic) approach for RLHF and DPO with offline (resp. online) data. Specifically, in the offline setting where only precollected offline preference data is available for training, there are many out-of-distribution samples about which we cannot obtain any information. Therefore, \cite{zhu2023principled,zhu2024iterative,liu2024provably,cen2024value,ji2024self,yang2024regularizing,huang2024correcting,xiong2024iterative,ye2024online,fisch2024robust} apply pessimistic principle to RLHF or DPO which penalizes LLM from generating such unknown out-of-distribution responses and thus to mitigate overoptimization. {Such pessimism principle has also been used in conventional offline RL \citep{xie2021bellman,jin2021pessimism,rashidinejad2021bridging,bai2022pessimistic,cheng2022adversarially}.} In contrast, in the online setting where online data can be collected from the up-to-date policy during the training process, optimistic approaches have been used to encourage the collection of unexplored samples to enrich data diversity in RLHF and DPO \citep{cen2024value,xie2024exploratory,zhang2024self,ye2024online,xiong2024iterative} as well as {conventional RL \citep{wei2017online,zhong2023theoretical,liu2023optimistic,liu2023maximize}.} 

\textbf{Verbosity.} LLM aligned by vanilla RLHF and DPO is likely to prefer verbose but possibly low-quality responses \citep{singhal2023long,chenodin2024,liu2024iterative,dong2024rlhf,fisch2024robust}. Multiple methods have been used to tackle verbosity. For example, \cite{shen2023loose,chenodin2024} disentangle length-related reward component. \cite{guo2024direct} instructs the LLM to prefer concise response. \cite{eisenstein2024helping,fisch2024robust,chakraborty2024maxmin} estimate an ensemble of reward models. \cite{singhal2023long,liu2024iterative,dong2024rlhf,park2024disentangling} use length penalty and similarly \cite{meng2024simpo} uses length normalization. 

\textbf{Our Motivation. } However, to our knowledge, most existing works primarily tackle only one of these three issues (corruption, overoptimization and verbosity). The only method to our knowledge that has been used to tackle all these issues is to estimate an ensemble of reward models \citep{coste2024reward,fisch2024robust,eisenstein2024helping,rame2024warm}, which, however, requires much computation and lacks theoretical guarantee of generalization ability. Therefore, we are motivated to ask the following research question. 
\begin{bluebox}
\vspace{-3pt}
\textit{\textbf{Q:} Can we design RLHF and DPO algorithms that  solve \textbf{corruption}, \textbf{overoptimization} and \textbf{verbosity} simultaneously with simple implementation and theoretical guarantee of generalization ability?} 
\vspace{-3pt}
\end{bluebox}
\subsection{Our Contributions}
We answer the above question affirmatively, by proposing RLHF-\textbf{COV} and DPO-\textbf{COV} algorithms that simultaneously mitigate \textit{\textbf{C}orruption}, \textit{\textbf{O}veroptimization} and \textit{\textbf{V}erbosity} issues, in both offline and online settings. Specifically, we tackle \textit{\textbf{C}orruption} by noise modeling, tackle \textit{\textbf{O}veroptimization} by pessimistic and optimistic regularizers in the offline and online settings respectively, and tackle \textit{\textbf{V}erbosity} by length regularizer. Our DPO-COV algorithms are almost as simple to implement as the vanilla DPO algorithm without reward model estimation. We prove that our RLHF-COV and DPO-COV are equivalent in the reward-induced policy space in both the offline and online settings. Since our RLHF-COV and DPO-COV algorithms generalize the vanilla RLHF and DPO algorithms respectively, our equivalence result implies that the vanilla RLHF and DPO algorithms are also equivalent. Moreover, we obtain the length-regularized generalization error rates of our DPO-COV algorithms on both offline and online datasets obtained from corrupted preference, and the rates match the existing results in the simple special case with clean dataset and without verbosity regularization. This theoretically demonstrates that our algorithms can simultaneously mitigate the \textit{\textbf{C}orruption}, \textit{\textbf{O}veroptimization} and \textit{\textbf{V}erbosity} issues. 

In particular, the effect of noise modeling on the generalization error of learned policy for corrupted data has not been studied to our knowledge, which requires novel proof techniques. The true and estimated noise terms have very different effects on the generalization error, and thus have to be analyzed at different stages. To elaborate, the estimated noise has to be bounded before applying concentration inequality, such that this unbounded estimated noise term can be canceled out by the noise regularizer. In contrast, the true noise has to be bounded after applying the concentration inequality, since the concentration inequality bounds the distance between the true data distribution (with the true noise term) and the estimated data distribution.

\section{Preliminaries}
\textbf{Reinforcement learning from human feedback (RLHF).} A large language model (LLM) provides a random language response $a\in\mathcal{X}$ to any given language prompt $x\in\mathcal{X}$ (for example, instruction or question) following the LLM's policy $\pi(\cdot|x)$. Fine-tuning LLM by reinforcement learning from human feedback (RLHF) consists of two critical steps: training reward model and reinforcement learning (RL) \citep{ouyang2022training}. The reward model is denoted by a function $r(x,a)\in\mathbb{R}$ which measures the quality of the response $a$ given the prompt $x$. To train the reward model, preference data $\mathcal{D}=\{x_i,a_i^w,a_i^{\ell}\}_{i=1}^N$ of size $N$ is collected where a pair of responses $a_i^w, a_i^{\ell}$ are generated given each $i$-th prompt $x_i$, and the response $a_i^w$ is more preferable than $a_i^{\ell}$ (i.e. $a_i^w\succ a_i^{\ell}$). Such a pairwise preference is widely assumed to follow the Bradley-Terry model \citep{bradley1952rank}, that is, given prompt $x$, the generated response $a'$ is more desirable than $a$ with the following probability. 
\begin{align}
    \mathbb{P}(a'\succ a|x)=\sigma[r^*(x,a')-r^*(x,a)]\label{eq:BTmodel}
\end{align}
where $\sigma(x)\overset{\rm def}{=}1/(1+e^{-x})$ and $r^*$ is the unknown true reward model. $r^*$ can be estimated by maximum likelihood estimation (MLE), that is, to minimize the following negative log-likelihood function over a certain reward model family $\mathcal{R}$. 
\begin{align}
    \min_{r\in\mathcal{R}} -\frac{1}{N}\sum_{i=1}^N
    \log\sigma[r(x_i,a_i^w)-r(x_i,a_i^{\ell})].\label{eq:lik}
\end{align}
Finally, given the estimated reward model $r\in\mathcal{R}$, the optimal policy is obtained by the following optimization problem over the whole policy space $\Pi\!\overset{\rm def}{=}\!\!\{\pi|\pi(\cdot|x) {\rm~is~a~distribution~over~} \mathcal{A} {\rm~for~any~}x\}$. 
\begin{align}
    \max_{\pi\in\Pi} &\mathbb{E}_{x\sim \rho,a\sim \pi(\cdot|x)}[r(x,a)]-\beta\mathbb{E}_{x\sim \rho} {\rm KL}\big[\pi(\cdot|x)\big\|\pi_{\rm ref}(\cdot|x)\big],\label{eq:RL_obj}
\end{align}
where $\rho$ is the prompt distribution, $\pi_{\rm ref}$ is the reference policy obtained by supervised fine-tuning, and ${\rm KL}(p\|q)=\sum_{a\in\mathcal{A}}p(a)\log\frac{p(a)}{q(a)}$ denotes the KL divergence between any pair of response distributions $p,q$ and $\beta>0$ is the regularizer coefficient which controls the trade-off between generating responses with high expected reward and bounded distance from the reference policy $\pi_{\rm ref}$. 

\textbf{Direct preference optimization (DPO).} As introduced above, classical RLHF requires two large-scale optimization problems to learn the reward model $r$ and the optimal policy $\pi$ respectively. DPO \citep{rafailov2023direct} is introduced to remove the reward learning step and thus reducing computation. To elaborate, note that the optimization problem \eqref{eq:RL_obj} has the following analytical solution.
\begin{align}
    \pi(a|x)=\frac{\pi_{\rm ref}(a|x)}{Z(x)}\exp\Big[\frac{r(x,a)}{\beta}\Big],\label{eq:pi_r_plain}
\end{align}
where $Z(x):=\sum_{a'\in\mathcal{A}}\pi_{\rm ref}(a'|x)\exp[r(x,a')/\beta]$ is the normalization factor. Conversely, given the optimal policy $\pi$, $r(x,a)=\beta\log\frac{\pi(a|x)}{\pi_{\rm ref}(a|x)}$ is a solution to Eq. \eqref{eq:BTmodel}. Substituting this reward model into the MLE objective \eqref{eq:RL_obj}, \cite{rafailov2023direct} develops the following simple DPO objective which only requires policy training. 
\begin{align}
    \min_{\pi\in\Pi} -\frac{1}{N}\sum_{i=1}^N
    \log\sigma\Big[&\beta\log\frac{\pi(a_i^w|x_i)}{\pi_{\rm ref}(a_i^w|x_i)}-\beta\log\frac{\pi(a_i^{\ell}|x_i)}{\pi_{\rm ref}(a_i^{\ell}|x_i)}\Big].\label{eq:DPO}
\end{align} 
However, this DPO objective and the aforementioned vanilla RLHF process are prone to suffer from \textit{corrupted} preference, reward \textit{overoptimization}, and bias towards \textit{verbose} response. We will propose our novel variants of RLHF and DPO to solve the three issues simultaneously, for both offline and online settings, in Sections \ref{sec:offline} and \ref{sec:online} respectively.

\section{Our Offline DPO-COV Algorithm}\label{sec:offline}
In this section, we will derive our proposed offline RLHF-\textbf{COV} objective and offline DPO-\textbf{COV} algorithm (Algorithm \ref{offline_alg}) which simultaneously solve the \textit{\textbf{C}orruption}, \textit{\textbf{O}veroptimization} and \textit{\textbf{V}erbosity} issues, and then obtain the generalization error rates of our offline DPO-COV algorithm. 

\subsection{Our Offline RLHF-COV Objective}
\textbf{Offline Data from \textit{Corrupted} Preference.}
\begin{assum}\label{assum:offline_data} %\citep{dong2024rlhf}
    The offline data $\mathcal{D}\overset{\rm def}{=}\{x_i,a_i^{(1)},a_i^{(-1)},y_i\}_{i=1}^N=\{x_i,a_i^w,a_i^{\ell},y_i\}_{i=1}^N$ is generated from the following model with corrupted preference. 
    \begin{align}
        x_i\sim &\rho,\quad a_i^{(-1)},a_i^{(1)}\sim\pi_b(\cdot|x_i),\label{eq:noisy_xa}\\ 
        \mathbb{P}(a_i^{(1)}\!\succ\!a_i^{(-1)})&\!=\!\sigma[r^*(x_i,a_i^{(1)})\!-\!r^*(x_i,a_i^{(-1)})\!+\!\xi_i^*],\label{eq:noisy_y}
    \end{align}
    where $\pi_{b}$ denotes the behavior policy and $\xi_i^*\in\mathbb{R}$ denotes the true preference noise for the $i$-th sample. If $a_i^{(1)}\succ a_i^{(-1)}$, assign the label $y_i=1$ and denote $a_i^w=a_i^{(1)}$ as the more preferable response and $a_i^{\ell}=a_i^{(-1)}$ as the less preferable response; Otherwise, let $y_i=-1$, $a_i^w=a_i^{(-1)}$, $a_i^{\ell}=a_i^{(1)}$. %The online data is generated in a similar way with the only difference in the policies that generate the responses $a_i^{(-1)},a_i^{(1)}$. 
\end{assum}
The above assumption is very similar to that of offline vanilla RLHF and DPO, except that we add noise $\xi_i^*$ to the Bradley-Terry model \eqref{eq:BTmodel} for each possibly corrupted sample $i$ \citep{bukharin2024robust}. 

Based on Assumption \ref{assum:offline_data}, $\mathbb{P}(y_i|a_i^{(1)},a_i^{(-1)})=\sigma[r^*(x_i,a_i^{w})-r^*(x_i,a_i^{\ell})+y_i\xi_i^*], y_i\in\{-1,1\}$\footnote{We corrected the mistake in \citep{bukharin2024robust} which uses $\mathbb{P}(y_i|a_i^{(1)},a_i^{(-1)})=\sigma[r^*(x_i,a_i^{w})-r^*(x_i,a_i^{\ell})+\xi_i^*], y_i\in\{-1,1\}$ that yields $\sum_{y_i\in\{-1,1\}}\mathbb{P}(y_i|a_i^{(1)},a_i^{(-1)})\ne 1$.}. Hence, we define a penalized negative log-likelihood function of the labels $\{y_i\}_{i=1}^N$ as follows.
\begin{align}
    \mathcal{L}_{N,\lambda}(r,\xi)\overset{\rm def}{=}&-\frac{1}{N}\sum_{i=1}^N
    \log\sigma[r(x_i,a_i^w)-r(x_i,a_i^{\ell})+y_i\xi_i]+\frac{\lambda}{N}\|\xi\|_1,\label{eq:lik_corrupted}
\end{align}
which, compared with the standard non-corrupted negative log-likelihood function \eqref{eq:lik}, adds the estimated preference noise $\xi=[\xi_1,\ldots,\xi_N]\in\mathbb{R}^N$ and the noise regularizer $\|\xi\|_1=\sum_{i=1}^N|\xi_i|$ with coefficient $\lambda>0$ to encourage the sparsity of the noise. 

\textbf{Reward Estimation via Pessimistic MLE to Solve \textit{Overoptimization}.} 
After collecting offline data, the next step is to learn the reward model $r$. One may consider corrupted MLE objective $\min_{r\in\mathcal{R},\xi\in\mathbb{R}^N} \mathcal{L}_{N,\lambda}(r,\xi)$ \citep{bukharin2024robust} which generalizes the non-corrupted MLE objective \eqref{eq:lik}. However, this corrupted MLE objective tend to overfit limited offline data \citep{gao2023scaling,zhu2024iterative,liu2024provably,cen2024value,xiong2024iterative}, producing an inaccurately estimated reward that leads to overoptimization. Therefore, we consider the following pessimistic MLE inspired by \citep{liu2024provably,cen2024value,ji2024self,yang2024regularizing}.
\begin{align}
    \min_{r\in\mathcal{R},\xi\in\mathbb{R}^N} \Big\{\mathcal{L}_{N,\lambda}(r,\xi)+\eta\max_{\pi\in\Pi}V_{\beta}(\pi,r)\Big\}, \label{eq:pess_MLE}
\end{align}
where the pessimistic hyperparameter $\eta\ge 0$ and
\begin{align}
    V_{\beta}(\pi,r)\overset{\rm def}{=}&\mathbb{E}_{x\sim\rho,a\sim\pi(\cdot|x),a'\sim\pi_{\rm base}(\cdot|x)}\big[r(x,a)-r(x,a')\big]-\beta\mathbb{E}_{x\sim \rho} {\rm KL}\big[\pi(\cdot|x)\big\|\pi_{\rm ref}(\cdot|x)\big]\label{eq:relV}
\end{align}
denotes the relative value of the policy $\pi$ to a certain baseline policy $\pi_{\rm base}$ given the reward $r$. %Note that given $r$, ${\arg\max}_{\pi\in\Pi}V_{\beta}(\pi,r)$ yields the optimal policy defined by the standard RL objective \eqref{eq:RL_obj} and its analytical solution \eqref{eq:pi_r_plain}. Hence, 
The regularizer $\max_{\pi\in\Pi}V_{\beta}(\pi,r)$ in Eq. \eqref{eq:pess_MLE} can be seen as the relative value of the optimal policy, and will help reduce the reward value $r(x,a)$ of any sample $x,a$ with small $\pi_{\rm base}(a|x)$, so that the optimal policy $\pi(a|x)$ given by Eq. \eqref{eq:pi_r_plain} will also be reduced. In other words, such samples $x,a$ are considered pessimistic and are thus discouraged from being generated by the learned policy $\pi$. Hence, the regularizer $\max_{\pi\in\Pi}V_{\beta}(\pi,r)$ is called the pessimistic regularizer. Furthermore, if we select $\pi_{\rm base}$ to represent the offline data distribution (see the end of Section \ref{subsec:offline_DPOCOV} for the choice of $\pi_{\rm base}$), then these samples $x,a$ with small $\pi_{\rm base}(a|x)$ can be seen as out-of-distribution, so that such pessimism on the out-of-distribution samples mitigates the overoptimization issue which often results from overestimation of the reward on low-quality out-of-distribution samples \citep{liu2024provably}. 

\textbf{Policy Training with Penalized \textit{Verbosity}. } 
The vanilla RLHF usually yields reward model $r(x,a)$ that has bias towards long and detailed responses. To suppress verbose responses in the policy optimization step $\max_{\pi\in\Pi}V_{\beta}(\pi,r)$, we can replace the reward model $r(x,a)$ with the proxy reward model $r_{\omega}(x,a)=r(x,a)-\omega|a|$ where $|a|$ is the length (i.e., number of tokens) of the response $a$ and the hyperparameter $\omega\ge0$ controls the length penalty strength \citep{singhal2023long,liu2024iterative,dong2024rlhf,park2024disentangling}. In this way, the policy training objective $V_{\beta}(\pi,r)$ (defined by Eq. \eqref{eq:relV}) is generalized to the following length-regularized relative value function.
\begin{align}
    V_{\beta,\omega}(\pi,r)\overset{\rm def}{=}&\mathbb{E}_{x\sim\rho,a\sim\pi(\cdot|x),a'\sim\pi_{\rm base}(\cdot|x)}\big[r(x,a)\!-\!\omega|a|\!-\!r(x,a')+\omega|a'|\big]\nonumber\\
    &\!-\!\beta\mathbb{E}_{x\sim \rho} {\rm KL}\big[\pi(\cdot|x)\big\|\pi_{\rm ref}(\cdot|x)\big].\label{eq:relV_omega}
\end{align}
Replacing $V_{\beta}(\pi,r)$ with $V_{\beta,\omega}(\pi,r)$ in the pessimistic MLE objective \eqref{eq:pess_MLE}, we propose offline RLHF-COV objective below. 
\begin{align}
    &\textbf{(Offline RLHF-COV):} \quad \min_{r\in\mathcal{R},\xi\in\mathbb{R}^N} \max_{\pi\in\Pi} \big[\mathcal{L}_{N,\lambda}(r,\xi)+\eta V_{\beta,\omega}(\pi,r)\big]. \label{eq:offlineRLHF_COV}
\end{align}
\textbf{Remark:} Our offline RLHF-COV objective shown above simultaneously tackles the \textit{\textbf{C}orruption}, \textit{\textbf{O}veroptimization} and \textit{\textbf{V}erbosity} issues, via noise modeling, pessimism and length penalty with controllable hyperparameters $\lambda$, $\eta$, $\omega$ respectively. Specifically, the length penalty is only added to  $V_{\beta,\omega}$ not $\mathcal{L}_{N,\lambda}$, because in the pessimistic MLE we still want to obtain a reward $r$ possibly with length bias, and then verbosity is only suppressed in the policy optimization part $\max_{\pi\in\Pi}V_{\beta,\omega}(\pi,r)$. When $\lambda\ge 1$ and $\eta=\omega=0$, our offline RLHF-COV objective above reduces to the reward estimation \eqref{eq:lik} and policy optimization \eqref{eq:RL_obj} in the vanilla RLHF. 
\subsection{Our Offline DPO-COV Algorithm}\label{subsec:offline_DPOCOV}
The offline RLHF-COV objective \eqref{eq:offlineRLHF_COV} involves minimax optimization over three high-dimensional variables $r,\xi,\pi$. As the first step to simplify this objective, we obtain the following proposition. 
\begin{proposition}\label{prop:offline_minr}
$(\pi,r,\xi)$ is the solution to the offline RLHF-COV objective \eqref{eq:offlineRLHF_COV} if and only if \\
$\pi=\pi_r\overset{\rm def}{=}{\arg\max}_{\pi'\in\Pi}V_{\beta,\omega}(\pi',r)$, $\xi=\xi_r\overset{\rm def}{=}{\arg\min}_{\xi\in\mathbb{R}^N}\mathcal{L}_{N,\lambda}(r,\xi)$ and $r$ is the solution to the following optimization problem. 
\begin{align}
    \min_{r\in\mathcal{R}} [\mathcal{L}_{N,\lambda}(r,\xi_r)+\eta V_{\beta,\omega}(\pi_r,r)]. \label{eq:offline_ropt}
\end{align}
In addition, $\pi_r$ and $\xi_{r,i}$ (the $i$-th entry of $\xi_r$) have the following analytical solutions. 
\begin{align}
    \pi_r(a|x)\!=&\frac{\pi_{\rm ref}(a|x)}{Z_r(x)}\exp\Big[\frac{r(x,a)-\omega|a|}{\beta}\Big],\label{eq:pi_r}\\
    \xi_{r,i}\!=&y_iI\{\lambda<1\}\Big[\!\log\Big(\frac{1}{\lambda}\!-\!1\Big)\!-\!r(x_i,a_i^w)\!+\!r(x_i,a_i^{\ell})\Big]_+,\label{eq:xi_r}%\\
    % &\xi_{r,i}=\begin{cases}
    % y_i\max\big[\log\big(\frac{1}{\lambda}-1\big)-r(x_i,a_i^w)+r(x_i,a_i^{\ell}),0\big],&\lambda\in(0,1)\\
    % 0,&\lambda\ge 1
    % \end{cases}.\label{eq:xi_r}
\end{align}
where $Z_r(x)\overset{\rm def}{=}\sum_{a'\in\mathcal{A}}\pi_{\rm ref}(a'|x)\exp\big[\frac{r(x,a')-\omega|a'|}{\beta}\big]$ is the normalization factor, $I\{\lambda<1\}$ equals 1 if $\lambda<1$ and 0 otherwise, and $[u]_+=\max(u,0)$ for any $u\in\mathbb{R}$. 
\end{proposition}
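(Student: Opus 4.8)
\emph{Proof proposal.} The plan is to exploit the block structure of the objective in \eqref{eq:offlineRLHF_COV}: the log‑likelihood term $\mathcal{L}_{N,\lambda}(r,\xi)$ does not involve $\pi$, the length‑regularized value $V_{\beta,\omega}(\pi,r)$ does not involve $\xi$, and they are coupled with a coefficient $\eta\ge 0$. First I would push the inner maximization through: since $\eta\ge 0$ and $\mathcal{L}_{N,\lambda}(r,\xi)$ is constant in $\pi$, we have $\max_{\pi\in\Pi}\big[\mathcal{L}_{N,\lambda}(r,\xi)+\eta V_{\beta,\omega}(\pi,r)\big]=\mathcal{L}_{N,\lambda}(r,\xi)+\eta V_{\beta,\omega}(\pi_r,r)$ with $\pi_r=\arg\max_{\pi\in\Pi}V_{\beta,\omega}(\pi,r)$. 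Next, because $V_{\beta,\omega}(\pi_r,r)$ is constant in $\xi$, the outer minimization splits as $\min_{r\in\mathcal{R}}\big[\min_{\xi\in\mathbb{R}^N}\mathcal{L}_{N,\lambda}(r,\xi)+\eta V_{\beta,\omega}(\pi_r,r)\big]$ with inner minimizer $\xi_r=\arg\min_{\xi\in\mathbb{R}^N}\mathcal{L}_{N,\lambda}(r,\xi)$, which is exactly \eqref{eq:offline_ropt}. The ``if'' direction follows from this chain of identities. For ``only if'' I would observe that $\pi_r$ and $\xi_r$ are the \emph{unique} inner optimizers, so every solution must coincide with them: $V_{\beta,\omega}(\cdot,r)$ is strictly concave in $\pi$ because the negative‑entropy part of the KL term is strictly convex and enters with coefficient $-\beta<0$, while each coordinate map $\xi_i\mapsto-\log\sigma[\,\cdot+y_i\xi_i]+\lambda|\xi_i|$ is strictly convex since $u\mapsto-\log\sigma(u)$ has second derivative $\sigma(u)(1-\sigma(u))>0$.

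Second, I would derive the closed form \eqref{eq:pi_r} for $\pi_r$ by the Gibbs variational principle applied pointwise in $x$. Dropping the $\pi_{\rm base}$ term (constant in $\pi$), $V_{\beta,\omega}(\pi,r)$ reduces, for each prompt $x$, to $\sum_{a}\pi(a|x)\big(r(x,a)-\omega|a|\big)-\beta\,{\rm KL}\big(\pi(\cdot|x)\big\|\pi_{\rm ref}(\cdot|x)\big)$, which, after absorbing the normalizer $Z_r(x)$ into the logarithm, equals $\beta\log Z_r(x)-\beta\,{\rm KL}\big(\pi(\cdot|x)\big\|\pi_r(\cdot|x)\big)$ with $\pi_r$ as claimed; nonnegativity of KL (with equality only at $\pi=\pi_r$) gives the unique maximizer. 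This is the same computation that turns \eqref{eq:RL_obj} into \eqref{eq:pi_r_plain}, with $r$ replaced by the proxy reward $r(x,a)-\omega|a|$.

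The remaining and most delicate step is the closed form \eqref{eq:xi_r}. Since $\mathcal{L}_{N,\lambda}(r,\cdot)$ is separable over $i$, I would minimize for each $i$ the one‑dimensional convex function $\phi_i(\xi_i)=-\log\sigma(\Delta_i+y_i\xi_i)+\lambda|\xi_i|$ where $\Delta_i:=r(x_i,a_i^w)-r(x_i,a_i^{\ell})$. Substituting $t=y_i\xi_i$ (so $|\xi_i|=|t|$ because $y_i\in\{-1,1\}$), I would analyze $\psi(t)=\log(1+e^{-\Delta_i-t})+\lambda|t|$ via its subdifferential: $\psi'(t)=\sigma(\Delta_i+t)-1+\lambda\,\mathrm{sign}(t)$ for $t\ne 0$, and $\partial\psi(0)=[\sigma(\Delta_i)-1-\lambda,\ \sigma(\Delta_i)-1+\lambda]$. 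On $t<0$ we have $\psi'(t)<0$ always, so no minimizer lies there; on $t>0$, $\psi$ is strictly convex. Hence $t^\star=0$ iff $0\in\partial\psi(0)$, i.e. iff $\lambda\ge 1-\sigma(\Delta_i)=\sigma(-\Delta_i)$, which always holds when $\lambda\ge 1$ and, when $0<\lambda<1$, is equivalent to $\Delta_i\ge\sigma^{-1}(1-\lambda)=\log(\frac1\lambda-1)$; otherwise ($0<\lambda<1$ and $\Delta_i<\log(\frac1\lambda-1)$) the first‑order condition $\sigma(\Delta_i+t^\star)=1-\lambda$ on $t>0$ yields $t^\star=\log(\frac1\lambda-1)-\Delta_i>0$. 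Combining the cases gives $t^\star=I\{\lambda<1\}\big[\log(\frac1\lambda-1)-\Delta_i\big]_+$, and $\xi_{r,i}=y_it^\star$ is precisely \eqref{eq:xi_r}. I expect the bookkeeping of these cases — in particular handling the kink of $|t|$ at $t=0$ and the role of the sign $y_i$ — to be the only real obstacle; the rest is a direct consequence of convexity and the Gibbs principle.
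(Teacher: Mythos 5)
Your proposal is correct and follows essentially the same route as the paper's proof: decompose the minimax objective block-by-block (inner maximum over $\pi$ and inner minimum over $\xi$ reduce to $\pi_r$ and $\xi_r$, leaving \eqref{eq:offline_ropt}), obtain \eqref{eq:pi_r} by rewriting $V_{\beta,\omega}$ as a constant minus a KL divergence, and obtain \eqref{eq:xi_r} from the subdifferential stationarity condition of the separable convex function of $\xi_i$. Your explicit case analysis for $\xi_{r,i}$ (via $t=y_i\xi_i$ and the threshold $\log(\tfrac1\lambda-1)$) is exactly the verification the paper leaves implicit, and your uniqueness remarks correctly justify the "only if" direction.
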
  
The above proposition simplifies the offline RLHF-COV objective \eqref{eq:offlineRLHF_COV} into the reward estimation problem \eqref{eq:offline_ropt}. Next, we will transform it into our DPO-COV objective of the policy $\pi$. In Eq. \eqref{eq:pi_r}, given $\pi=\pi_r$, a solution to the reward model $r$ is
\begin{align}
    r^{\pi}(x,a)\overset{\rm def}{=}\omega|a|+\beta\log\Big[\frac{\pi(a|x)}{\pi_{\rm ref}(a|x)}\Big].\label{eq:r_pi}
\end{align}
With the above reward $r^{\pi}$, the corresponding noise can also be parameterized by $\pi$ as $\xi^{\pi}\!\overset{\rm def}{=}\xi_{r^{\pi}}$, whose $i$-th entry has the following analytical solution based on Eqs. \eqref{eq:xi_r} and \eqref{eq:r_pi}.  
\begin{align}
    \xi_i^{\pi}\overset{\rm def}{=}&\xi_{r^{\pi},i}\!=y_iI\{\lambda<1\}\Big[\!\log\!\big(\frac{1}{\lambda}\!-\!1\big)\!-\!\omega(|a_i^w|\!-\!|a_i^{\ell}|)\!-\beta\log\big(\frac{\pi(a_i^w|x_i)\pi_{\rm ref}(a_i^{\ell}|x_i)}{\pi(a_i^{\ell}|x_i)\pi_{\rm ref}(a_i^w|x_i)}\big)\Big]_+,\label{eq:xi_pi}
\end{align}
% \begin{align}
%     \xi_i^{\pi}\!\overset{\rm def}{=}\!\xi_{r^{\pi},i}\!=\!\begin{cases}
%         \!y_i\!\max\!\big[\!\log\!\big(\frac{1}{\lambda}\!-\!1\big)\!-\!\omega(|a_i^w|\!-\!|a_i^{\ell}|)\!-\!\beta\!\log\! \big(\frac{\pi(a_i^w|x_i)\pi_{\rm ref}(a_i^{\ell}|x_i)}{\pi(a_i^{\ell}|x_i)\pi_{\rm ref}(a_i^w|x_i)}\big),0\big],\!&\!\lambda\!\in\!(0,1)\\
%         \!0,\!&\!\lambda\!\ge\!1 
%     \end{cases}\!,%\label{eq:xi_pi}
% \end{align}
Substituting the above $r^{\pi}$ and $\xi_i^{\pi}$ into Eq. \eqref{eq:offline_ropt}, we propose our DPO-COV objective as follows.\footnote{The $=$ in the offline DPO-COV objective \eqref{eq:offlineDPO_COV} is based on Eqs. \eqref{eq:lik_corrupted}, \eqref{eq:relV_omega} and \eqref{eq:r_pi}.}
~

\begin{align}
    &\textbf{(Offline DPO-COV):}\nonumber\\
    &\min_{\pi\in\Pi_{\mathcal{R}}} \Big\{\mathcal{L}_{N,\lambda}(r^{\pi},\xi^{\pi})+\eta V_{\beta,\omega}(\pi_{r^{\pi}},r^{\pi})=-\beta\eta\mathbb{E}_{x\sim \rho,a\sim\pi_{\rm base}(\cdot|x)}\big[\log\pi(a|x)\big]\nonumber\\    
    &\quad +\!\frac{1}{N}\sum_{i=1}^N\Big[\lambda|\xi_i^{\pi}|\!-\!\log\sigma\Big(\omega(|a_i^w|\!-\!|a_i^{\ell}|)  +\!\beta\log \frac{\pi(a_i^w|x_i)\pi_{\rm ref}(a_i^{\ell}|x_i)}{\pi(a_i^{\ell}|x_i)\pi_{\rm ref}(a_i^w|x_i)}\Big)\!+\!y_i\xi_i^{\pi}\Big]\!+\!C_{\rm off}\Big\},\label{eq:offlineDPO_COV}
\end{align}
where $C_{\rm off}\overset{\rm def}{=}\beta\eta\mathbb{E}_{x\sim \rho,a\sim\pi_{\rm base}(\cdot|x)}\big[\log\pi_{\rm ref}(a|x)\big]$ is a constant independent of $\pi$, and we use the reward-induced policy space $\Pi_{\mathcal{R}}\overset{\rm def}{=}\{\pi_r:r\in\mathcal{R}\}$ since the optimal policy is $\pi_r$ for some reward $r$ based on Proposition \ref{prop:offline_minr}. Note that such $\Pi_{\mathcal{R}}$ is sufficiently general to admit any parameterized policy $\pi_{\theta}$ since by defining $\mathcal{R}=\{r^{\pi_{\theta}}:\theta\in\Theta\}$, we have $\Pi_{\mathcal{R}}=\{\pi_{\theta}:\theta\in\Theta\}$ based on Lemma \ref{lemma:rpi_diff}. 

\textbf{Remark:} Our proposed offline DPO-COV objective \eqref{eq:offlineDPO_COV} simultaneously tackles \textit{\textbf{C}orruption}, \textit{\textbf{O}veroptimization} and \textit{\textbf{V}erbosity} issues. \textit{\textbf{C}orruption} is modeled by the noise term $\xi^{\pi}=[\xi_1^{\pi},\ldots,\xi_N^{\pi}]$ which becomes sparser as the hyperparameter $\lambda\ge 0$ increases, and $\xi^{\pi}=0$ when $\lambda\ge 1$. \textit{\textbf{O}veroptimization} is tackled by the pessimistic regularizer $-\beta\eta\mathbb{E}_{x\sim \rho,a\sim\pi_{\rm base}(\cdot|x)}\big[\log\pi(a|x)\big]$ which helps to increase $\pi(a|x)$ for in-distribution samples $(x,a)$ well covered by $\pi_{\rm base}$. \textit{\textbf{V}erbosity} is penalized by the length regularizers $\omega|a_i^w|, \omega|a_i^{\ell}|$. When $\lambda\ge 1$ and $\eta=\omega=0$, our above offline DPO-COV objective \eqref{eq:offlineDPO_COV} reduces to the vanilla DPO objective \eqref{eq:DPO}. 

We formally establish the equivalence between our offline RLHF-COV objective \eqref{eq:offlineRLHF_COV} and offline DPO-COV objective \eqref{eq:offlineDPO_COV} in the following Proposition \ref{prop:offline_DPO_equal}, which implies the equivalence between the vanilla RLHF and DPO algorithms as a special case when $\lambda\ge 1$ and $\eta=\omega=0$.
\begin{proposition}\label{prop:offline_DPO_equal}
    %Suppose $\Pi$ and $\mathcal{R}$ are compatible as $\Pi=\Pi_{\mathcal{R}}\overset{\rm def}{=}\{\pi_r:r\in\mathcal{R}\}$. 
    A policy $\pi\in\Pi$ is optimal for the offline DPO-COV objective \eqref{eq:offlineDPO_COV} if and only if there exist $r\in\mathcal{R}, \xi\in\mathbb{R}^N$ such that $(\pi,r,\xi)$ is optimal for the offline RLHF-COV objective \eqref{eq:offlineRLHF_COV}. In this case, $\xi=\xi^{\pi}$, and for any $x\in\mathcal{X}$, there exists $U_{\pi}(x)\in\mathbb{R}$ such that $r(x,\cdot)=r^{\pi}(x,\cdot)+U_{\pi}(x)$. 
\end{proposition}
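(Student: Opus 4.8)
The plan is to chain together Proposition \ref{prop:offline_minr} (which characterizes the solutions of the offline RLHF-COV objective) with the change of variables $r \mapsto \pi_r$, $\pi \mapsto r^{\pi}$, and to verify that these two maps are mutually inverse on the relevant sets up to the prompt-dependent shift $U_{\pi}(x)$. First I would recall from Proposition \ref{prop:offline_minr} that $(\pi,r,\xi)$ solves \eqref{eq:offlineRLHF_COV} if and only if $\pi = \pi_r$, $\xi = \xi_r$, and $r$ minimizes \eqref{eq:offline_ropt} over $\mathcal{R}$. So it suffices to show that the map $r \mapsto \pi_r$ carries the set of minimizers of \eqref{eq:offline_ropt} onto the set of minimizers of the offline DPO-COV objective \eqref{eq:offlineDPO_COV} over $\Pi_{\mathcal{R}}$, and that this correspondence is compatible with $\xi = \xi^{\pi}$.

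The key algebraic identity to establish is that, for every $r \in \mathcal{R}$,
\begin{align}
    \mathcal{L}_{N,\lambda}(r,\xi_r) + \eta V_{\beta,\omega}(\pi_r,r)
    \;=\; \mathcal{L}_{N,\lambda}(r^{\pi_r},\xi^{\pi_r}) + \eta V_{\beta,\omega}(\pi_{r^{\pi_r}},r^{\pi_r}),\label{eq:planidentity}
\end{align}
i.e. that the objective \eqref{eq:offline_ropt} evaluated at $r$ equals the DPO-COV objective \eqref{eq:offlineDPO_COV} evaluated at $\pi_r$. For this I would first verify, using the explicit formula \eqref{eq:pi_r} for $\pi_r$, that $r^{\pi_r}(x,a) = r(x,a) + U(x)$ for the prompt-dependent constant $U(x) = \beta\log Z_r(x)$, i.e. $r^{\pi_r}$ differs from $r$ only by a function of $x$; then observe that both $\mathcal{L}_{N,\lambda}(\cdot,\xi)$ (which only sees reward differences $r(x_i,a_i^w) - r(x_i,a_i^\ell)$) and $V_{\beta,\omega}(\cdot,\cdot)$ (which only sees differences $r(x,a) - r(x,a')$ at the same prompt) are invariant under $r \mapsto r + U(x)$, and that $\xi_r = \xi_{r^{\pi_r}} = \xi^{\pi_r}$ by the same invariance applied to \eqref{eq:xi_r}. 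Combined with the fact that $\pi_r$ is by definition the maximizer of $V_{\beta,\omega}(\cdot,r)$ — hence $V_{\beta,\omega}(\pi_{r^{\pi_r}},r^{\pi_r}) = V_{\beta,\omega}(\pi_r, r)$ up to the same shift invariance — this yields \eqref{eq:planidentity}. I would also invoke the footnote's computation showing the right-hand side of \eqref{eq:planidentity} equals the displayed expression in \eqref{eq:offlineDPO_COV}.

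Next I would argue surjectivity and the reverse direction. Given $\pi \in \Pi_{\mathcal{R}}$, by definition $\pi = \pi_{r_0}$ for some $r_0 \in \mathcal{R}$, and one checks $\pi_{r^{\pi}} = \pi$ (substitute \eqref{eq:r_pi} into \eqref{eq:pi_r} and note the $\omega|a|$ and $\pi_{\rm ref}$ factors telescope, leaving $\pi$ up to normalization), so $r^{\pi} \in \mathcal{R}$ is a reward inducing $\pi$; then \eqref{eq:planidentity} applied with $r = r^{\pi}$ shows the DPO-COV value at $\pi$ equals the \eqref{eq:offline_ropt} value at $r^{\pi}$. Therefore $\min_{\pi \in \Pi_{\mathcal{R}}} (\text{DPO-COV}) = \min_{r \in \mathcal{R}} \eqref{eq:offline_ropt}$, and $\pi$ is a DPO-COV minimizer iff $r^{\pi}$ minimizes \eqref{eq:offline_ropt} iff (by Proposition \ref{prop:offline_minr}) $(\pi_{r^{\pi}}, r^{\pi}, \xi_{r^{\pi}}) = (\pi, r^{\pi}, \xi^{\pi})$ solves \eqref{eq:offlineRLHF_COV}. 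For the "only if" in the proposition one additionally needs: if $(\pi, r, \xi)$ solves \eqref{eq:offlineRLHF_COV} then $r$ minimizes \eqref{eq:offline_ropt}, hence so does $r^{\pi} = r^{\pi_r}$ (same value by shift invariance), giving $r(x,\cdot) = r^{\pi}(x,\cdot) + U_{\pi}(x)$ with $U_{\pi}(x) = -\beta \log Z_r(x)$, and $\xi = \xi_r = \xi^{\pi}$.

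The main obstacle I anticipate is bookkeeping the prompt-dependent shift $U_{\pi}(x)$ consistently: one must be careful that $\mathcal{R}$ need not be closed under adding functions of $x$, so $r^{\pi}$ and the original $r$ may be genuinely different elements (or $r^{\pi}$ may not even lie in $\mathcal{R}$), and the proposition's claim is precisely that whatever $r$ appears is forced to agree with $r^{\pi}$ up to such a shift — this requires using that $\Pi_{\mathcal{R}}$ is defined via $\pi_r$ and that $\pi_r = \pi_{r'}$ iff $r$ and $r'$ differ by a function of $x$ (the "if" is the shift-invariance of \eqref{eq:pi_r}; the "only if" comes from taking logs in \eqref{eq:pi_r}). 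Everything else is the routine shift-invariance and telescoping verification sketched above.
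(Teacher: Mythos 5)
Your proposal is correct and follows essentially the same route as the paper: the key identity \eqref{eq:planidentity} is exactly the paper's Eq.~\eqref{eq:pir_inv} (obtained there via Lemma~\ref{lemma:rdiff}, i.e.\ the observation that $r^{\pi_r}$ and $r$ differ only by a function of $x$ and that $\mathcal{L}_{N,\lambda}$, $V_{\beta,\omega}$ and $\xi_r$ depend only on reward differences), after which both arguments chain through Proposition~\ref{prop:offline_minr} and read off $\xi=\xi^{\pi}$ and $r(x,\cdot)=r^{\pi}(x,\cdot)+U_{\pi}(x)$ from the analytical form \eqref{eq:pi_r}. The only blemishes are a sign slip in $U(x)$ (it should be $-\beta\log Z_r(x)$ in the forward direction) and the momentary assertion that $r^{\pi}\in\mathcal{R}$, which you yourself correct in the final paragraph and which the paper avoids by phrasing the intermediate statement as ``there exists $r\in\arg\min_{r'\in\mathcal{R}}[\cdots]$ with $\pi=\pi_r$'' rather than routing through $r^{\pi}$.
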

As suggested by \citep{liu2024provably,yang2024regularizing} and discussed in Section \ref{subsec:offline_rate}, in the DPO-COV objective \eqref{eq:offlineDPO_COV}, we can take $\pi_{\rm base}(\cdot|x)$ as the distribution of the preferable responses $a_i^w$ given $x_i=x$ under Assumption \ref{assum:offline_data}, and then adopt the simple stochastic approximation $\mathbb{E}_{x\sim \rho,a\sim\pi_{\rm base}(\cdot|x)}\big[\log\pi(a|x)\big]\approx \frac{1}{N}\sum_{i=1}^N \log\pi(a_i^w|x_i)$. This yields our fully stochastic offline DPO-COV algorithm as Algorithm \ref{offline_alg}, which only requires to solve the policy optimization problem that is almost as simple as the vanilla DPO objective \eqref{eq:DPO}. 
% \begin{wrapfigure}{R}{0.74\textwidth}
% \begin{minipage}{0.74\textwidth}
% \vspace{-0.08\textwidth}
\begin{algorithm}[H] 
\caption{Offline DPO-COV Algorithm}
    \begin{algorithmic}[1]\label{offline_alg}
    \STATE \textbf{Inputs:} Hyperparameters $\beta, \eta, \omega, \lambda\ge 0$, offline data $\{x_i,a_i^w,a_i^\ell\}_{i=1}^N$, reference policy $\pi_{\rm ref}$.
    \STATE {\bf Output:} Obtain policy $\widehat{\pi}$ via the following practical offline DPO-COV objective. 
    \begin{align}
    %\textbf{(Offline DPO-COV, stochastic):}\quad
    &\min_{\pi\in\Pi_{\mathcal{R}}}\!\psi_N(\pi)\!\overset{\rm def}{=}\!\!\frac{1}{N}\!\sum_{i=1}^N\Big\{\!\lambda|\xi_i^{\pi}|\!-\!\beta\eta\log\pi(a_i^w|x)\!-\!\log\!\sigma\!\Big[\omega(|a_i^w|\!-\!|a_i^{\ell}|)\!\nonumber\\
    &\quad+\!\beta\log \Big(\frac{\pi(a_i^w|x_i)\pi_{\rm ref}(a_i^{\ell}|x_i)}{\pi(a_i^{\ell}|x_i)\pi_{\rm ref}(a_i^w|x_i)}\Big)\!+\!y_i\xi_i^{\pi}\!\Big]\!\Big\},\label{eq:offline_alg_obj2}
    \end{align}
    where $\xi_i^{\pi}$ is defined by Eq. \eqref{eq:xi_pi}.
    \end{algorithmic}
\end{algorithm}
% \end{minipage}
% \vspace{-0.01\textwidth}
% \end{wrapfigure}

\subsection{Generalization Analysis of Offline DPO-COV}\label{subsec:offline_rate}
% \begin{align}
%     \Pi:=\{\pi:\mathcal{X}\to\Delta(\mathcal{A})|{\rm Supp}[\pi(\cdot|x)]\subseteq{\rm Supp}[\pi_{\rm ref}(\cdot|x)],\forall x\in\mathcal{X}\}. \label{eq:pi_set}
% \end{align}
While the policy $\pi$ is trained from the offline data $\mathcal{D}$, the ultimate goal is to make $\pi$ generalize well to all possible prompts $x\sim \rho$. Specifically, we define the following length-regularized value function which characterizes the generalization ability of the policy $\pi$ as a trade-off among the true reward value $r^*$ (response quality), the length of the generated response $a$, and the policy's distance to $\pi_{\rm ref}$. \begin{align}
    J_{\beta,\omega}(\pi):=&\mathbb{E}_{x\sim \rho,a\sim \pi(\cdot|x)}\Big[r^*(x,a)-\omega|a|-\beta{\rm KL}\big[\pi(\cdot|x)\big\|\pi_{\rm ref}(\cdot|x)\big]\Big].\label{eq:Jfunc}
\end{align}
To analyze the generalization error of the policy $\widehat{\pi}$ obtained from Algorithm \ref{offline_alg}, we make the standard assumptions below. 
\begin{assum}[Realizable and Bounded Reward \citep{zhu2023principled,zhan2024provable,cen2024value,ji2024self,liu2024provably}]\label{assum:R}
    The reward model set $\mathcal{R}$ includes the true reward model $r^*$, that is, $r^*\in\mathcal{R}$. Also, there exists a constant $R\in(0,+\infty)$ such that for any $x\in\mathcal{X}$, $a\in\mathcal{A}$ and $r\in\mathcal{R}$, we have $r(x,a)\in [0,R]$. 
\end{assum} %\begin{assum}\label{assum:support}
%     Without loss of generality, assume $\rho(x)>0$ for all $x\in\mathcal{X}$\footnote{If there are prompts $x$ with $\rho(x)=0$, remove them from $\mathcal{X}$ and define the new set $\mathcal{X}$ as the support set of $\rho$.}.
% \end{assum}
%The realizability assumption $r^*\in\mathcal{R}$ above implies that $\pi_{r^*}$ is the true optimal policy. 
\begin{assum}[Offline Data Coverage \citep{zhan2024provable,ji2024self,liu2024provably}]\label{assum:coverage}
    There exists a constant $G_{\mathcal{D}}\in(0,+\infty)$ called offline coverage coefficient, such that the choice of the baseline policy $\pi_{\rm base}$ satisfies the following coverage property for all $r\in\mathcal{R}$. 
    \begin{align}
        &\mathbb{E}_{x\sim \rho,a\sim\pi_{r^*}(\cdot|x),a'\sim\pi_{\rm base}(\cdot|x)}\big[r^*(x,a)\!-\!r^*(x,a')\!-\!r(x,a)\!+\!r(x,a')\big]\le G_{\mathcal{D}}E_r,\label{eq:coverage}
    \end{align}
    where $E_r\overset{\rm def}{=}\big[\mathbb{E}_{\mathcal{D}}\big|r^*(x_1,a_1^{w})\!-\!r^*(x_1,a_1^{\ell})\!-\!r(x_1,a_1^{w})\!+\!r(x_1,a_1^{\ell})\big|^2\big]^{1/2}$ with the offline data sample $x_1,a_1^{w},a_1^{\ell}$ generated via Assumption \ref{assum:offline_data}. 
\end{assum} 
The offline coverage coefficient $G_{\mathcal{D}}$ above describes how well the offline data $\mathcal{D}$ covers the responses from $\pi_{\rm base}$ and the true optimal policy $\pi_{r^*}\in {\arg\max}_{\pi\in\Pi}J_{\beta,\omega}(\pi)$. Algorithm \ref{offline_alg} takes $\pi_{\rm base}(\cdot|x)$ as the distribution of the preferable responses $a_i^w$ given $x_i=x$, which is well covered by $\mathcal{D}$.
\begin{thm}\label{thm:offline_rate}
    Suppose Assumptions \ref{assum:offline_data}-\ref{assum:coverage} hold and $\mathcal{R}$ is a convex set. For any $\delta\in(0,1)$, select hyperparameters $\lambda\in[\sigma(R),1]$, ${\eta=\frac{2\sqrt{\|\xi^*\|_1+5\log[|\mathcal{N}_{1/N}(\mathcal{R})|/\delta]}}{\sqrt{N}(3+e^R)}}$. Then, the policy $\widetilde{\pi}$ %$\widetilde{\pi}\in{\arg\min}_{\pi\in\Pi_{\mathcal{R}}}\big[\mathcal{L}_{N,\lambda}(r^{\pi},\xi^{\pi})+\eta V_{\beta,\omega}(\pi_{r^{\pi}},r^{\pi})\big]$ 
    from the offline DPO-COV objective \eqref{eq:offlineDPO_COV} has the following generalization error rate with probability at least $1-\delta$. 
    \begin{align}
        &\max_{\pi\in\Pi}J_{\beta,\omega}(\pi)-J_{\beta,\omega}(\widetilde{\pi})\le\frac{{(G_{\mathcal{D}}^2\!+\!1)}(3\!+\!e^R)}{\sqrt{N}}\sqrt{\|\xi^*\|_1\!+\!5\log[|\mathcal{N}_{1/N}(\mathcal{R})|/\delta]},\label{eq:offline_Jdiff}
    \end{align}
    where $\mathcal{N}_{1/N}(\mathcal{R})$ is a $(1/N)$-cover of $\mathcal{R}$, that is, for any $r\in\mathcal{R}$, there exists $r^{\dag}\in\mathcal{N}_{1/N}(\mathcal{R})$ satisfying $\|r^{\dag}-r\|_{\infty}\le 1/N$. 
\end{thm}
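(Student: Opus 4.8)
The plan is to follow the standard pessimistic-MLE generalization recipe, but carefully splitting the treatment of the true noise $\xi^*$ and the estimated noise $\xi^{\widetilde\pi}$, as flagged in the introduction. First I would let $\widetilde r = r^{\widetilde\pi}$ be the reward induced by the output policy $\widetilde\pi$ (via Eq.~\eqref{eq:r_pi}) and $\widetilde\xi = \xi^{\widetilde\pi}$ the induced noise; by Propositions~\ref{prop:offline_minr} and \ref{prop:offline_DPO_equal} the pair $(\widetilde r,\widetilde\xi)$ minimizes $\mathcal{L}_{N,\lambda}(r,\xi_r) + \eta V_{\beta,\omega}(\pi_r,r)$ over $r\in\mathcal{R}$. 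The optimality of $(\widetilde r,\widetilde\xi)$ against the feasible competitor $(r^*,\xi^*)$ gives the basic inequality
\begin{align}
    \mathcal{L}_{N,\lambda}(\widetilde r,\widetilde\xi) + \eta V_{\beta,\omega}(\pi_{\widetilde r},\widetilde r) \le \mathcal{L}_{N,\lambda}(r^*,\xi_{r^*}) + \eta V_{\beta,\omega}(\pi_{r^*},r^*) \le \mathcal{L}_{N,\lambda}(r^*,\xi^*) + \eta V_{\beta,\omega}(\pi_{r^*},r^*),\nonumber
\end{align}
where the last step uses that $\xi_{r^*}$ minimizes $\mathcal{L}_{N,\lambda}(r^*,\cdot)$. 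Rearranging, the empirical log-likelihood gap $\mathcal{L}_{N,\lambda}(\widetilde r,\widetilde\xi) - \mathcal{L}_{N,\lambda}(r^*,\xi^*)$ is bounded by $\eta\big[V_{\beta,\omega}(\pi_{r^*},r^*) - V_{\beta,\omega}(\pi_{\widetilde r},\widetilde r)\big]$, which is the ``pessimism pays for the value gap'' mechanism.

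Next I would convert the empirical log-likelihood gap into a statement about the difference of reward differences, i.e. control $E_{\widetilde r}^2 = \mathbb{E}_{\mathcal D}\big|(r^*(x_1,a_1^w) - r^*(x_1,a_1^\ell)) - (\widetilde r(x_1,a_1^w) - \widetilde r(x_1,a_1^\ell))\big|^2$ via a concentration argument. The key subtlety: before applying the concentration inequality I must bound the estimated noise $\widetilde\xi$. From the closed form \eqref{eq:xi_pi} and $\lambda\ge\sigma(R)$ together with the reward bound $r\in[0,R]$ of Assumption~\ref{assum:R}, one checks $\log(1/\lambda - 1) \le 0$ and the bracket is dominated by $-\omega(|a_i^w|-|a_i^\ell|) - \beta\log(\cdots)$; since that same combination equals $\widetilde r(x_i,a_i^\ell) - \widetilde r(x_i,a_i^w) \in [-R,R]$, we get $|\widetilde\xi_i| \le R$ (indeed $\widetilde\xi_i$ is pushed to $0$ when $\lambda\ge\sigma(R)$ makes the bracket nonpositive, but the $[-R,R]$ bound is what the concentration needs), and crucially $y_i\widetilde\xi_i \le 0$. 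This boundedness lets the term $\lambda\|\widetilde\xi\|_1/N$ in $\mathcal{L}_{N,\lambda}$ absorb/cancel the $\frac1N\sum y_i\widetilde\xi_i$ contribution. Then I apply a uniform (over the $(1/N)$-cover $\mathcal{N}_{1/N}(\mathcal R)$) concentration inequality — a Bernstein / self-normalized bound on the log-likelihood-ratio sum, with a $1/N$ discretization error absorbed — to turn the empirical gap into a population bound of the form $E_{\widetilde r}^2 \lesssim \frac{(3+e^R)^2}{N}\big(\|\xi^*\|_1 + \log[|\mathcal{N}_{1/N}(\mathcal R)|/\delta]\big) + \eta\cdot(\text{value gap})\cdot(\text{Lipschitz factor of }\sigma)$. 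Here the true noise $\xi^*$ enters after concentration, because the concentration inequality is comparing the empirical label distribution (which carries $\xi^*$ through \eqref{eq:noisy_y}) to the $\xi^*$-shifted Bradley--Terry model; the $\|\xi^*\|_1$ term is the price of that mismatch, and is where the constant $5$ and the $3+e^R$ factor come from after tracking the sub-exponential constants of $\log\sigma$.

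Finally I would close the loop between the estimation error $E_{\widetilde r}$ and the target regret $\max_\pi J_{\beta,\omega}(\pi) - J_{\beta,\omega}(\widetilde\pi)$. The standard decomposition writes this regret as a sum of three pieces: (i) $J_{\beta,\omega}(\pi_{r^*}) - V_{\beta,\omega}(\pi_{r^*},r^*) + V_{\beta,\omega}(\pi_{\widetilde r},\widetilde r) - J_{\beta,\omega}(\widetilde\pi)$ type ``reward-to-value'' bookkeeping terms, which telescope or are handled by noting $\pi_{\widetilde r} = \widetilde\pi$ and that $J_{\beta,\omega}(\pi) = V_{\beta,\omega}(\pi, r^*) + (\text{const indep.\ of }\pi) $ when $\pi_{\rm base}$ is the $a^w$-distribution; (ii) the optimality slack $V_{\beta,\omega}(\pi_{r^*},r^*) - V_{\beta,\omega}(\pi_{\widetilde r},\widetilde r)$, which is exactly the value gap that the pessimism term in the basic inequality already pays for — so it gets moved to the left-hand side and contributes with the favorable sign; and (iii) the reward-misestimation term $\mathbb{E}_{x\sim\rho, a\sim\pi_{r^*}(\cdot|x), a'\sim\pi_{\rm base}(\cdot|x)}[(r^*-\widetilde r)(x,a) - (r^*-\widetilde r)(x,a')]$, which Assumption~\ref{assum:coverage} bounds by $G_{\mathcal D} E_{\widetilde r}$. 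Combining, I get regret $\lesssim G_{\mathcal D} E_{\widetilde r} + \eta^{-1}\cdot\frac{(3+e^R)^2}{N}(\|\xi^*\|_1 + \log[\cdots])$ after the pessimism term is used to cancel the value gap, then plug in $E_{\widetilde r} \lesssim \frac{3+e^R}{\sqrt N}\sqrt{\|\xi^*\|_1 + 5\log[\cdots]} + \sqrt{\eta\,(\text{value gap})}$ and optimize over $\eta$; the stated choice $\eta = \frac{2\sqrt{\|\xi^*\|_1 + 5\log[|\mathcal N_{1/N}(\mathcal R)|/\delta]}}{\sqrt N (3+e^R)}$ balances the $G_{\mathcal D}^2 E_{\widetilde r}^2/\eta$-type term against the $\eta N$-free residual, producing the $(G_{\mathcal D}^2+1)\frac{3+e^R}{\sqrt N}\sqrt{\|\xi^*\|_1 + 5\log[\cdots]}$ rate.

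I expect the main obstacle to be step two — specifically, executing the concentration argument so that the \emph{unbounded} estimated noise $\widetilde\xi$ is tamed (via the $\lambda\ge\sigma(R)$ / reward-bound argument and the $\lambda\|\xi\|_1$ regularizer) \emph{before} invoking the inequality, while the \emph{true} noise $\xi^*$ is only allowed to appear \emph{after}, as the distributional-mismatch cost. Getting the sub-exponential / self-normalized bound for the log-sigmoid log-likelihood ratios with the correct $3+e^R$ constant, and making the $(1/N)$-covering discretization error genuinely negligible, is the delicate bookkeeping; the verbosity terms $\omega|a|$ ride along harmlessly since they are deterministic shifts that cancel in every reward difference, and the corruption and pessimism pieces are exactly the two novel ingredients this proof must interleave correctly.
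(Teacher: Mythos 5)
Your treatment of the likelihood/concentration stage is essentially the paper's: bound the estimated noise via the closed form \eqref{eq:xi_pi} together with the regularizer $\lambda\|\xi\|_1$ and $\lambda\ge\sigma(R)$ \emph{before} concentrating (Lemma \ref{lemma:xir_out}), take a union bound over the $(1/N)$-cover, and only afterwards pay for the true noise $\xi^*$ as a distribution-mismatch cost (Lemma \ref{lemma:xi*_out}), arriving at $\mathcal{L}_{N,\lambda}(r^*,\xi^*)-\mathcal{L}_{N,\lambda}(r,\xi_r)\le \frac{2}{N}\big[\|\xi^*\|_1+\log(|\mathcal{N}_{1/N}(\mathcal{R})|/\delta)\big]-\frac{E_r^2}{2(3+e^R)^2}+O(1/N)$ as in Lemma \ref{lemma:likdiff}; the coverage step and the completion of the square in $E_r$ also match. (A minor slip: Eq.~\eqref{eq:xi_r} gives $y_i\xi_{r,i}\ge 0$, not $\le 0$; the monotonicity of $\log\sigma$ is used in exactly that direction in Lemma \ref{lemma:xir_out}.)

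The genuine gap is in your final regret decomposition. You rely only on the min--max optimality of $(\widetilde r,\widetilde\xi)$, and your three pieces do not sum to the regret $V_{\beta,\omega}(\pi_{r^*},r^*)-V_{\beta,\omega}(\widetilde\pi,r^*)$. Writing that regret as $[V_{\beta,\omega}(\pi_{r^*},r^*)-V_{\beta,\omega}(\pi_{r^*},\widetilde r)]+[V_{\beta,\omega}(\pi_{r^*},\widetilde r)-V_{\beta,\omega}(\widetilde\pi,\widetilde r)]+[V_{\beta,\omega}(\widetilde\pi,\widetilde r)-V_{\beta,\omega}(\widetilde\pi,r^*)]$, the first bracket is the coverage term and the second is $\le 0$ since $\widetilde\pi=\pi_{\widetilde r}$, but the third bracket --- reward over-estimation evaluated on the \emph{learned} policy $\widetilde\pi$ --- is controlled neither by Assumption \ref{assum:coverage} (which only concerns responses from $\pi_{r^*}$ and $\pi_{\rm base}$) nor by your basic inequality, which upper-bounds $V_{\beta,\omega}(\widetilde\pi,\widetilde r)$ itself but not the difference $V_{\beta,\omega}(\widetilde\pi,\widetilde r)-V_{\beta,\omega}(\widetilde\pi,r^*)$ with a useful sign. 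The paper closes this hole by first proving, via Fan's minimax theorem applied to $\mathcal{L}_{N,\lambda}(r,\xi)+\eta V_{\beta,\omega}(\pi,r)$ (concave in $\pi$, convex in $(r,\xi)$ --- this is precisely why the theorem assumes $\mathcal{R}$ convex, a hypothesis your argument never invokes), that $\widetilde\pi$ also solves the \emph{max--min} problem \eqref{eq:tilde_max2}. With that characterization the term $V_{\beta,\omega}(\widetilde\pi,r^*)$ cancels exactly in steps (a)--(b) of Eq.~\eqref{eq:Jdiff_mid}, and the regret collapses to $\max_{r\in\mathcal{R}}\big\{V_{\beta,\omega}(\pi_{r^*},r^*)-V_{\beta,\omega}(\pi_{r^*},r)+\eta^{-1}[\mathcal{L}_{N,\lambda}(r^*,\xi^*)-\mathcal{L}_{N,\lambda}(r,\xi_r)]\big\}$, which involves only $\pi_{r^*}$ and is therefore handled uniformly over $r$ by Assumption \ref{assum:coverage} and Lemma \ref{lemma:likdiff}. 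Without this minimax swap (or an additional coverage assumption on the learned policy), your argument does not close.
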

\textbf{Comparison with Existing Works. } Note that $|\mathcal{N}_{1/N}(\mathcal{R})|\le \mathcal{O}[(RN)^{|\mathcal{X}||\mathcal{A}|}]$ since $\mathcal{R}\subset [0,R]^{|\mathcal{X}||\mathcal{A}|}$ by Assumption \ref{assum:R}. Hence, as long as $\|\xi^*\|_1\le\mathcal{O}[\log(N)]$ (much weaker than Assumption 4.2 of \citep{bukharin2024robust} that there exist constants $c_0,c_{\infty}>0$ such that $\xi^*$ has at most $c_0$ nonzero entries and they range in $[-c_{\infty},c_{\infty}]$), the generalization error rate \eqref{eq:offline_Jdiff} has the order of $\mathcal{O}[\log(N)/\sqrt{N}]$. This rate matches the existing error rates of the offline pessimistic DPO-type algorithms \citep{liu2024provably,cen2024value,ji2024self} up to logarithm, in the simple case with clean data ($\lambda\ge 1$) and without length regularization ($\omega=0$). This implies that our offline DPO-COV algorithm provably mitigates \textit{\textbf{O}veroptimization}. In addition, Theorem \ref{thm:offline_rate} also for the first time extends to the corrupted data and the length-regularized generalization error, which shows that our Algorithm \ref{offline_alg} also mitigates \textit{\textbf{C}orruption} and \textit{\textbf{V}erbosity}. In particular, to mitigate  \textit{\textbf{C}orruption}, we use novel techniques below to bound the noise terms in the generalization error of the learned policy, whereas \cite{bukharin2024robust} only analyzes the estimation error of the reward and noise, but not that of the policy.

\textbf{Technical Novelty. } {The proof logic of Theorem \ref{thm:offline_rate} is inspired from that of \citep{liu2024provably}, but our proof requires novel techniques to bound the effects of the true noise $\xi^*$ and estimated noise $\xi^{\pi}$}. %The true noise $\xi^*$ and estimated noise $\xi^{\pi}$ have very different effects on the generalization error, and thus have to be analyzed at different stages. 
To elaborate, the $\xi^{\pi}$ is analyzed by {our proposed} Lemma \ref{lemma:xir_out}, such that the error bound $\sigma(R)|\xi_{r,i}|$ can later be canceled out by the regularizer $-\lambda|\xi_{r,i}|$ when bounding the MLE error in Lemma \ref{lemma:likdiff}. Next, we bound the distance between the true data distribution under $(r^*,\xi^*)$ and the noiseless data distribution under the estimated $r$ and $\xi=0$ (see (c) of Eq. \eqref{eq:likdiff_centers}) by concentration inequality. Then we bound $\xi^*$ by {our proposed} Lemma \ref{lemma:xi*_out} which has a different form from Lemma \ref{lemma:xir_out} used for bounding $\xi^{\pi}$. %This proof sequence cannot be rearranged because the unbounded $|\xi_{r,i}|$ can only be canceled out before using the concentration inequality\footnote{Otherwise, after applying concentration inequality, the coefficient of $|\xi_{r,i}|$ will be at least 1 which can only be canceled out when $\lambda\ge1$. This by Eq. \eqref{eq:xi_r} yields undesirable $\xi_r\equiv 0$ that cannot tackle the corruption.}, while the true data distribution under $(r^*,\xi^*)$ is required by the concentration inequality. 

\section{Our Online DPO-COV Algorithm} \label{sec:online}
Compared with offline RLHF and DPO-type algorithms which use precollected offline data, the online algorithms improve the data coverage and the quality of the trained policy \citep{cen2024value,dong2024rlhf,xu2024bpo,ye2024online,guo2024direct} at the computation cost of collecting the online preference data in the training process \citep{zhan2024provable,ji2024self,huang2024correcting,mandal2024corruption}. Therefore, online and offline algorithms have different advantages, so both are important. In this section, we will derive our online RLHF-COV objective and online DPO-COV algorithm, and provide the generalization analysis result of our DPO-COV algorithm. 

At each $t$-th iteration of our online algorithm, we use the current policy $\pi_t$ to obtain the $t$-th sample by $x_t\sim \rho$, $a_t^{(-1)}\sim\pi_{\rm ref}(\cdot|x_t)$, $a_t^{(1)}\sim\pi_t(\cdot|x_t)$, and the label $y_t$ is obtained from a stochastic oracle (such as GPT-4) assumed to follow the corrupted preference model \eqref{eq:noisy_y}. %Using the current online data $\mathcal{D}_t\overset{\rm def}{=}\{x_i,a_i^{(1)},a_i^{(-1)},y_i\}_{i=1}^t$, 
We propose the following online RLHF-COV objective to train the next policy $\pi_{t+1}$ on the online data $\{x_i,a_i^{(-1)},a_i^{(1)},y_i\}_{i=1}^t$. 
%\vspace{-1mm}
\begin{align}
    &\textbf{(Online RLHF-COV):} \quad\pi_{t+1}\!\in\mathop{\arg\min}_{\pi\in\Pi}\Big\{\min_{r\in\mathcal{R},\xi^{(t)}\in\mathbb{R}^t}\big[\mathcal{L}_{t,\lambda}(r,\xi^{(t)})\!-\!\eta V_{\beta,\omega}(\pi,r)\big]\Big\}, \label{eq:onlineRLHF_COV}
\end{align}
where $\xi^{(t)}=[\xi_1,\ldots,\xi_t]$ denotes the noise. The above online RLHF-COV objective is similar to the offline RLHF-COV objective \eqref{eq:offlineRLHF_COV} with the major difference that they tackle overoptimization in seemingly opposite ways. The offline RLHF-COV objective \eqref{eq:offlineRLHF_COV} (i.e., $\min_{r\in\mathcal{R},\xi\in\mathbb{R}^N} [\mathcal{L}_{N,\lambda}(r,\xi)+\eta \max_{\pi\in\Pi} V_{\beta,\omega}(\pi,r)]$) uses the pessimistic term $+\eta \max_{\pi\in\Pi} V_{\beta,\omega}(\pi,r)$ to discourage LLM from generating out-of-distribution samples. In contrast, inspired by \citep{cen2024value}, our above online RLHF-COV objective (i.e., $\min_{r\in\mathcal{R},\xi\in\mathbb{R}^N} [\mathcal{L}_{t,\lambda}(r,\xi)-\eta \max_{\pi\in\Pi} V_{\beta,\omega}(\pi,r)]$) uses the sign-flipped optimistic term $-\eta \max_{\pi\in\Pi} V_{\beta,\omega}(\pi,r)$ to encourage LLM to collect out-of-distribution samples to enrich the diversity of the online data to improve policy optimization. 

Similar to the offline DPO-COV objective \eqref{eq:offlineDPO_COV}, we obtain our online DPO-COV objective as follows. %by substituting $\pi=\pi_r\overset{\rm def}{=}{\arg\max}_{\pi'\in\Pi}V_{\beta,\omega}(\pi',r)$ (with analytical solution \eqref{eq:pi_r}) and $\xi^{(t)}=\xi_r^{(t)}\overset{\rm def}{=}{\arg\min}_{\xi^{(t)}\in\mathbb{R}^t}\mathcal{L}_{t,\lambda}(r,\xi^{(t)})$ (with analytical solution \eqref{eq:xi_r}) into Eq. \eqref{eq:onlineRLHF_COV}. 
\begin{align}
    &\textbf{(Online DPO-COV):}\nonumber\\
    &\pi_{t+1}\in\mathop{\arg\min}_{\pi\in\Pi_{\mathcal{R}}} \Big\{\mathcal{L}_{t,\lambda}(r^{\pi},\xi^{\pi,(t)})-\eta V_{\beta,\omega}(\pi_{r^{\pi}},r^{\pi})=\beta\eta\mathbb{E}_{x\sim \rho,a\sim\pi_{\rm base}(\cdot|x)}\big[\log\pi(a|x)\big]\nonumber\\
    &\quad+\!\frac{1}{t}\sum_{i=1}^t\Big[\lambda|\xi_i^{\pi}| \!-\!\log\sigma\Big(\omega(|a_i^w|\!-\!|a_i^{\ell}|)\!+\!\beta\log \frac{\pi(a_i^w|x_i)\pi_{\rm ref}(a_i^{\ell}|x_i)}{\pi(a_i^{\ell}|x_i)\pi_{\rm ref}(a_i^w|x_i)}\Big) + y_i\xi_i^{\pi}\Big]+C_{\rm on}\Big\},\label{eq:onlineDPO_COV}
\end{align}
where $\xi^{\pi,(t)}\overset{\rm def}{=}[\xi_1^{\pi},\ldots,\xi_t^{\pi}]$ is given by Eq. \eqref{eq:xi_pi} and $C_{\rm on}=-\beta\eta\mathbb{E}_{x\sim \rho,a\sim\pi_{\rm base}(\cdot|x)}[\log\pi_{\rm ref}(a|x)]$ is a constant independent of $\pi$. Similar to Proposition \ref{prop:offline_DPO_equal}, we can show that the online RLHF-COV objective \eqref{eq:onlineRLHF_COV} and the online DPO-COV objective \eqref{eq:onlineDPO_COV} are equivalent as follows. 
\begin{proposition}\label{prop:online_DPO_equal}
    A policy $\pi\in\Pi$ is optimal for the online DPO-COV objective \eqref{eq:onlineDPO_COV} if and only if there exist $r\in\mathcal{R}, \xi\in\mathbb{R}^N$ such that $(\pi,r,\xi)$ is optimal for the offline RLHF-COV objective \eqref{eq:onlineRLHF_COV}. In this case, $\xi=\xi^{\pi}$ and for any $x\in\mathcal{X}$, there exists $U_{\pi}(x)\in\mathbb{R}$ such that $r(x,\cdot)=r^{\pi}(x,\cdot)+U_{\pi}(x)$. 
\end{proposition}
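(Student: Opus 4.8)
The plan is to follow the proof of Proposition~\ref{prop:offline_DPO_equal} essentially line by line; the only structural difference is that the online RLHF-COV objective \eqref{eq:onlineRLHF_COV} uses the \emph{optimistic} term $-\eta V_{\beta,\omega}$ and a fixed data count $t$ in place of the pessimistic $+\eta V_{\beta,\omega}$ and count $N$. Treat the online data $\{x_i,a_i^{(-1)},a_i^{(1)},y_i\}_{i=1}^t$ as fixed, so the claim is purely about the optimization problem \eqref{eq:onlineRLHF_COV}.

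First I would collapse \eqref{eq:onlineRLHF_COV} to a minimization over the reward alone. Since $\mathcal{L}_{t,\lambda}(r,\xi^{(t)})$ does not involve $\pi$ and $V_{\beta,\omega}(\pi,r)$ does not involve $\xi^{(t)}$, commuting the outer $\min_{\pi}$ with the inner $\min_{r,\xi^{(t)}}$ (min--min always commutes) rewrites the objective value as $\min_{r\in\mathcal{R}}\big[\min_{\xi^{(t)}}\mathcal{L}_{t,\lambda}(r,\xi^{(t)})-\eta\max_{\pi\in\Pi}V_{\beta,\omega}(\pi,r)\big]$. The computations behind Proposition~\ref{prop:offline_minr} then apply unchanged: the inner maximizer is the unique $\pi_r$ of \eqref{eq:pi_r}, and the inner minimizer is the $t$-vector $\xi_r$ whose entries are given by \eqref{eq:xi_r}. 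Hence \eqref{eq:onlineRLHF_COV} reduces to $\min_{r\in\mathcal{R}}\big[\mathcal{L}_{t,\lambda}(r,\xi_r)-\eta V_{\beta,\omega}(\pi_r,r)\big]$, with $\pi_{t+1}=\pi_{r^\star}$ for any minimizer $r^\star$.

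Next I would use a shift-invariance to pass to the DPO form. For any $U:\mathcal{X}\to\mathbb{R}$ and $\tilde r(x,\cdot):=r(x,\cdot)+U(x)$, inspection of \eqref{eq:lik_corrupted}, \eqref{eq:relV_omega}, \eqref{eq:pi_r}, \eqref{eq:xi_r} shows $\mathcal{L}_{t,\lambda}(\tilde r,\xi)=\mathcal{L}_{t,\lambda}(r,\xi)$, $V_{\beta,\omega}(\pi,\tilde r)=V_{\beta,\omega}(\pi,r)$, $\pi_{\tilde r}=\pi_r$, and $\xi_{\tilde r}=\xi_r$, because each term depends on the reward only through differences $r(x,a)-r(x,a')$ at a common prompt or through $\pi_{\rm ref}$-weighted exponentials normalized by $Z_r(x)$. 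So both the reduced objective and the induced policy are constant along the orbit $r\sim r+U(x)$. Moreover, substituting $\pi_r$ of \eqref{eq:pi_r} into \eqref{eq:r_pi} gives $r^{\pi_r}(x,\cdot)=r(x,\cdot)-\beta\log Z_r(x)$, a member of that orbit, whence $\pi_{r^{\pi_r}}=\pi_r$ and, by \eqref{eq:xi_pi}, the noise vector $\xi^{\pi_r,(t)}$ equals $\xi_r$. Combining, the reduced RLHF value at $r$ equals $\mathcal{L}_{t,\lambda}(r^{\pi_r},\xi^{\pi_r,(t)})-\eta V_{\beta,\omega}(\pi_{r^{\pi_r}},r^{\pi_r})$, i.e.\ the online DPO-COV objective \eqref{eq:onlineDPO_COV} at $\pi_r\in\Pi_{\mathcal{R}}$. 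Since $\Pi_{\mathcal{R}}=\{\pi_r:r\in\mathcal{R}\}$, minimizing over $r\in\mathcal{R}$ shows the two optimal values coincide, and $\pi$ is optimal for \eqref{eq:onlineDPO_COV} iff $\pi=\pi_{r^\star}$ for some minimizer $r^\star$, iff $(\pi,r^\star,\xi_{r^\star})$ is optimal for \eqref{eq:onlineRLHF_COV}; in that case $\xi=\xi_{r^\star}=\xi^{\pi,(t)}$ and $r^\star(x,\cdot)=r^{\pi}(x,\cdot)+\beta\log Z_{r^\star}(x)=:r^{\pi}(x,\cdot)+U_{\pi}(x)$.

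I do not expect a genuine obstacle. The one place the sign flip could interfere is the first-step reduction, but it does not: $\min_{\pi}(-\eta V_{\beta,\omega}(\pi,r))=-\eta\max_{\pi}V_{\beta,\omega}(\pi,r)$ is attained at the same $\pi_r$ irrespective of the sign, and commuting two minimizations is unconditional, so the online reduction is if anything simpler than the offline min--max reduction of Proposition~\ref{prop:offline_minr}. The remaining work, verifying the shift-invariance of each term and that $r^{\pi_r}$ lies in the orbit of $r$, is exactly the routine check already performed for Proposition~\ref{prop:offline_DPO_equal}.
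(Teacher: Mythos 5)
Your proposal is correct and follows essentially the same route as the paper, which simply defers to the proof of Proposition \ref{prop:offline_DPO_equal} with $\eta$ replaced by $-\eta$: reduce to a problem in $r$ alone via the analytical solutions $\pi_r$ and $\xi_r$, use the invariance of $\mathcal{L}_{t,\lambda}$, $V_{\beta,\omega}$, $\pi_r$, $\xi_r$ under the prompt-wise shift $r\mapsto r+U(x)$ (equivalently, Lemma \ref{lemma:rdiff} and Eqs. \eqref{eq:xi_eq}, \eqref{eq:pir_inv}) to identify the reduced objective with the DPO-COV objective at $\pi_r$, and read off $\xi=\xi^{\pi}$ and $U_{\pi}(x)=\beta\log Z_{r}(x)$. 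Your explicit remark that the online reduction rests only on unconditional min--min commutation, rather than the min--max structure of Proposition \ref{prop:offline_minr}, is a point the paper's one-line proof glosses over and is worth making.
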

Inspired by \citep{xie2024exploratory}, we select $\pi_{\rm base}\!=\!\pi_{\rm ref}$ and use its generated samples $\{a_i^{(-1)}\}_{i=1}^t$ to approximate the expectation in the above online DPO-COV objective. This yields our fully stochastic online DPO-COV algorithm (Algorithm \ref{online_alg}), which is also almost as simple to implement as the online vanilla DPO algorithm \citep{guo2024direct} (also Algorithm \ref{online_alg} with $\eta=\omega=0$ and $\lambda=1$). 

\begin{algorithm}[t] 
\caption{Online DPO-COV Algorithm}
    \begin{algorithmic}[1]\label{online_alg}
    \STATE \textbf{Inputs:} $\beta, \eta, \omega, \lambda>0$, reference policy $\pi_{\rm ref}$, inital policy $\pi_0$. 
%    \STATE \textbf{Initialize:} $\pi_0\leftarrow \pi_{\rm ref}$.
    \FOR{Iterations $t=1,\ldots,T$}{
        \STATE Generate the $t$-th sample by $x_t\sim \rho$, $a_t^{(-1)}\sim\pi_{\rm ref}(\cdot|x_t)$, $a_t^{(1)}\sim\pi_t(\cdot|x_t)$, and label $y_t$ from a certain stochastic oracle assumed to follow the corrupted preference model \eqref{eq:noisy_y}.  
        \STATE Obtain $\pi_{t+1}$ by solving the following stochastic online DPO-COV objective \eqref{eq:onlineDPO_COV2}.
        \begin{align}
        %\textbf{(Online DPO-COV, stochastic):}~~ 
        &\min_{\pi\in\Pi_{\mathcal{R}}} \phi_t(\pi)=\frac{1}{t}\sum_{i=1}^t\Big\{\lambda|\xi_i^{\pi}|+\beta\eta\log\pi(a_i^{(-1)}|x_i) -\log\sigma\Big[\omega(|a_i^w|-|a_i^{\ell}|)\nonumber\\
        &\quad +\beta\log \Big(\frac{\pi(a_i^w|x_i)\pi_{\rm ref}(a_i^{\ell}|x_i)}{\pi(a_i^{\ell}|x_i)\pi_{\rm ref}(a_i^w|x_i)}\Big)+y_i\xi_i^{\pi}\Big]\Big\},\label{eq:onlineDPO_COV2}
    \end{align}
    }\ENDFOR
    \STATE {\bf Output:} $\pi_{\widehat{T}}$ where $\widehat{T}\sim {\rm Uniform}(\{2,3,\ldots,T,T+1\})$.
    \end{algorithmic}
\end{algorithm}
%Possible aspects to improve from existing works: (1) Use $T_t$-step update instead of argmin in each iteration (impossible for global convergence analysis due to complicated policy parameterization); (2) Last-iteration convergence (not very important since we can average function value gaps among iterations); (3) Add offline data plus batchsize $B_t$. ??
To analyze the generalization error of Algorithm \ref{online_alg}, define the following coverability coefficient \citep{xie2024exploratory}, which ensures that there exists at least one policy $\nu\in\Pi_{\mathcal{R}}$ with good coverage over the responses generated by any policy $\pi\in\Pi_{\mathcal{R}}$. 
\begin{align}
    G_{\rm on}\overset{\rm def}{=}\inf_{\nu\in\Pi_{\mathcal{R}}}\sup_{x\in\mathcal{X},a\in\mathcal{A},\pi\in\Pi_{\mathcal{R}}}\frac{\pi(a|x)}{\nu(a|x)}.\label{eq:Ccov_online}
\end{align} 
\vspace{-2mm}
\begin{thm}\label{thm:online_rate}
    Under Assumption \ref{assum:R} and for any $\delta\in(0,1)$, select hyperparameters $\lambda\in[\sigma(R),1]$, $\eta=\frac{\sqrt{\log[4T\mathcal{N}_{1/T}(\mathcal{R})/\delta]+\|\xi^*\|_1}}{(3+e^R)\sqrt{TG_{\rm on}}}$ where $\xi^*=[\xi_1^*,\ldots,\xi_T^*]$. Then the output policy $\pi_{\widehat{T}}$ of Algorithm \ref{online_alg} satisfies the following generalization error rate with probability at least $1-\delta$.
    \begin{align}
        \max_{\pi\in\Pi}&J_{\beta,\omega}(\pi)-\mathbb{E}\big[J_{\beta,\omega}(\pi_{\widehat{T}})\big]\le 37(3+e^R)(\log T)\sqrt{\frac{G_{\rm on}}{T}\Big[\log\Big(\frac{4T|\mathcal{N}_{1/T}(\mathcal{R})|}{\delta}\Big)+\|\xi^*\|_1\Big]}.\label{eq:rate_online}
    \end{align}
\end{thm}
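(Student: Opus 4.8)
\textbf{Proof proposal for Theorem \ref{thm:online_rate}.}

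The plan is to mirror the offline analysis of Theorem \ref{thm:offline_rate} but adapt it to the online, optimistic setting following the regret-to-generalization route of \citep{xie2024exploratory,cen2024value}. First I would rewrite the online DPO-COV update \eqref{eq:onlineDPO_COV} back in its RLHF-COV form \eqref{eq:onlineRLHF_COV} using Proposition \ref{prop:online_DPO_equal}, so that at each iteration $t$ we have an estimated reward $r_{t+1}\in\mathcal{R}$, an estimated noise $\xi^{(t)}$, and the next policy $\pi_{t+1}=\pi_{r_{t+1}}$ that maximizes $V_{\beta,\omega}(\cdot,r_{t+1})$ while $r_{t+1}$ minimizes the penalized MLE loss minus the optimistic bonus $\eta V_{\beta,\omega}$. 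The key decomposition is to write $\max_{\pi}J_{\beta,\omega}(\pi)-J_{\beta,\omega}(\pi_{t+1})$ as a sum of (i) an optimism/overestimation term controlled by the bonus $\eta V_{\beta,\omega}(\pi_{r^*},r_{t+1})$, (ii) a statistical term measuring how well $r_{t+1}$ fits the observed preferences on the historical data (which is where the noise enters), and (iii) an on-policy value-gap term that telescopes when we average over $t=1,\dots,T$. Because the reference policy $\pi_{\rm ref}$ is used to draw $a_t^{(-1)}$ and $\pi_{\rm base}=\pi_{\rm ref}$, the pessimism/optimism comparison is against $\pi_{\rm ref}$, and the coverability coefficient $G_{\rm on}$ from \eqref{eq:Ccov_online} is exactly what bounds the sum of on-policy value gaps by $\mathcal{O}(G_{\rm on}\log T)$ via the standard coverability lemma of \citep{xie2024exploratory}.

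The statistical term is handled by a martingale concentration argument over the sequence of samples. I would apply a Freedman/Azuma-type inequality (or the online MLE concentration bound used in \citep{xie2024exploratory,cen2024value}) to the log-likelihood increments, union-bounding over the $(1/T)$-cover $\mathcal{N}_{1/T}(\mathcal{R})$ and over the $T$ iterations, which produces the $\log[4T|\mathcal{N}_{1/T}(\mathcal{R})|/\delta]$ factor. The corruption is incorporated exactly as in the offline proof: the \emph{estimated} noise $\xi^{\pi}$ must be bounded \emph{before} concentration — using Lemma \ref{lemma:xir_out} (referenced in the offline technical-novelty discussion) to get a per-sample bound of the form $\sigma(R)|\xi_i^{\pi}|$, which is then absorbed by the regularizer $-\lambda|\xi_i^{\pi}|$ since $\lambda\ge\sigma(R)$ — while the \emph{true} noise $\xi^*$ is bounded \emph{after} concentration via Lemma \ref{lemma:xi*_out}, contributing the additive $\|\xi^*\|_1$ inside the square root. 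Here $\lambda\le 1$ is also needed so that the noise-modeling mechanism is active (otherwise $\xi^{\pi}\equiv 0$ and corruption is not handled), matching the hyperparameter window $\lambda\in[\sigma(R),1]$.

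Finally I would balance the two competing contributions — the optimistic bonus contributes $\sim\eta T G_{\rm on}$ (times $(3+e^R)^2$ constants from the curvature of $\log\sigma$) to the cumulative regret, while the statistical error divided by $\eta$ contributes $\sim \frac{1}{\eta T}\big(\log[4T|\mathcal{N}_{1/T}(\mathcal{R})|/\delta]+\|\xi^*\|_1\big)$ — and the stated choice $\eta=\frac{\sqrt{\log[4T\mathcal{N}_{1/T}(\mathcal{R})/\delta]+\|\xi^*\|_1}}{(3+e^R)\sqrt{TG_{\rm on}}}$ equalizes them, yielding the $(3+e^R)(\log T)\sqrt{G_{\rm on}[\log(\cdot)+\|\xi^*\|_1]/T}$ rate; the numerical constant $37$ comes from tracking the coverability-lemma constant and the concentration constants. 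Averaging over $\widehat{T}\sim\mathrm{Uniform}(\{2,\dots,T+1\})$ converts the cumulative regret bound into the claimed bound on $\mathbb{E}[J_{\beta,\omega}(\pi_{\widehat{T}})]$, with the expectation also accounting for the randomness of the online data and the stochastic oracle. I expect the main obstacle to be the simultaneous handling of the two noise terms inside the online martingale concentration: unlike the offline case where all data is fixed, here the estimated noise $\xi^{\pi}$ at iteration $t$ depends on the policy $\pi$ being optimized over the cover, so the "bound-before-concentration" step for $\xi^{\pi}$ must be made uniform over $\mathcal{N}_{1/T}(\mathcal{R})$ and compatible with the filtration, and one must verify that the regularizer cancellation still goes through increment-by-increment rather than only in aggregate.
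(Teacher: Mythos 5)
Your plan follows essentially the same route as the paper's proof: the basic inequality from the optimality of $\pi_{t+1}$ in the online DPO-COV objective compared against $\pi_{r^*}$, a martingale concentration bound over the $(1/T)$-cover with the estimated noise absorbed by the regularizer before concentration (Lemma \ref{lemma:xir_out}, using $\lambda\ge\sigma(R)$) and the true noise handled after (Lemma \ref{lemma:xi*_out}), an Azuma-Hoeffding bound for the optimistic log-ratio term, the coverability/elliptical-potential lemma giving the $G_{\rm on}\log T$ factor, and the balancing choice of $\eta$ followed by averaging over $\widehat{T}$. The only cosmetic difference is that the paper works directly with the stochastic objective $\phi_t$ rather than passing through the RLHF-COV form, and the per-iteration value gap is controlled by the coverability sum $\sum_t I_t$ rather than a telescoping argument, but these do not change the substance.
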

%\textbf{Comparison with Existing Works. } There are not much works on generalization analysis of online RLHF or DPO algorithms. Compared with \citep{xie2024exploratory}, we extend their finite policy space to infinite policy space. Compared with \citep{cen2024value}, we get rid of their linear reward assumption and use general reward model instead. Moreover, similar to Theorem \ref{thm:offline_rate}, 
\textbf{Remark:} Theorem \ref{thm:online_rate} above demonstrates that our online DPO-COV algorithm can simultaneously mitigate the \textit{\textbf{C}orruption}, \textit{\textbf{O}veroptimization} and \textit{\textbf{V}erbosity} issues. When $\|\xi^*\|_1\le\mathcal{O}(\log T)$, the above generalization error rate is $\widetilde{O}(1/\sqrt{T})$, which also matches the existing results of the online optimistic DPO-type algorithms \citep{xie2024exploratory,cen2024value} up to logarithm. 

\textbf{Technical Novelty. } Similar to the proof of Theorem \ref{thm:offline_rate}, we also use the novel bounds on the effect of the estimated and true noise terms, which are obtained in Lemmas \ref{lemma:xir_out} and \ref{lemma:xi*_out} respectively. 

\section{Experiments on the Offline Argilla Data}\label{sec:offline_expr_main}
In this section, we conduct experiments to compare our offline DPO-\textbf{COV} algorithm with its offline popular special cases including robust DPO algorithm \citep{bukharin2024robust} that only tackles \textit{\textbf{C}orruption}, pessimistic DPO algorithm \citep{liu2024provably} that only tackles \textit{\textbf{O}veroptimization}, length regularized DPO algorithm \citep{park2024disentangling} that only tackles \textit{\textbf{V}erbosity}, and the vanilla DPO \citep{rafailov2023direct}. We select the offline preference dataset $\mathcal{D}$ to be Argilla-DPO-Mix-7K ~\citep{argill2024dpo}, and $\pi_{\rm ref}$ to be zephyr-7b-gemma-sft-v0.1~\citep{huggingfaceh42024zephyr}. For each algorithm, we fix $\beta=0.05$ and perform grid search on the other hyperparameters over a holdout validation set of the preference dataset. We compare the length-controlled win rates (LC-win rates \citep{dubois2024lengthcontrolled}) of $\pi_{\rm ref}$ and that of the models obtained by the above algorithms against the model GPT-4 Preview (11/06) \citep{openai2024gpt4}. We summarize the LC-win rates and the hyperparameter values in Table \ref{table:offline}, which indicates that our offline DPO-COV algorithm with all three components activated achieves the highest LC win rates. Therefore, it is important to tackle the \textit{\textbf{C}orruption}, \textit{\textbf{O}veroptimization} and \textit{\textbf{V}erbosity} issues simultaneously. 

\begin{table*}[h]
\caption{Hyperparameter Values and LC-win Rates of Offline DPO-type Algorithms}\label{table:offline}
\centering
\begin{tabular}{lcccc}
\hline
\multicolumn{1}{c}{Algorithms}  & $\lambda$ & $\eta$ & $\omega$ & LC-win rates \\ \hline
\textbf{\color{red}Our DPO-COV (all 3 components activated)} & 0.7 & 0.0005    & 0.0005  & \textbf{\color{red}7.61\%}  \\ \hline
Robust DPO (\textit{\textbf{C}orruption} only) & 0.1 & 0 & 0 &  7.04\% \\ \hline
Pessimistic DPO (\textit{\textbf{O}veroptimization} only) & 1 & 0.005 & 0 & 5.50\% \\ \hline
Length-regularized DPO (\textit{\textbf{V}erbosity} only)  & 1 & 0 & 0.0005 & 7.30\% \\ \hline
Vanilla DPO  & 1 & 0 & 0 & 6.29\%  \\ \hline
Reference model $\pi_{\rm ref}$ & - & - & - & 4.92\% \\ \hline
\end{tabular}
\end{table*}

\section{Conclusion} We proposed RLHF-COV and DPO-COV algorithms that simultaneously mitigate the \textit{\textbf{C}orruption}, \textit{\textbf{O}veroptimization} and \textit{\textbf{V}erbosity} issues, in both offline and online settings. This ability is theoretically proved by length-regularized generalization analysis on corrupted data, and empirically demonstrated.  

\textbf{Limitations:} This work focuses on simple question and answer, and in the future could be extended to dialogue, reasoning and multimodality, etc. Also, our algorithms cannot totally remove hallucinations that may yield false or unsafe information disclosure, which could be tackled in the future.

%A future direction is to extend this work to account for various preferences among diverse human groups  \citep{ramesh2024group,chakraborty2024maxmin}. 

% \section*{Impact Statement}
% This paper presents work whose goal is to advance the field of Machine Learning. There are many potential societal consequences of our work, none which we feel must be specifically highlighted here.

\bibliographystyle{apalike}
\bibliography{DPO}

\newpage
\onecolumn
\appendix
\doparttoc % Tell to minitoc to generate a toc for the parts
\faketableofcontents % Run a fake tableofcontents command for the partocs

%################################################
%used to make content
\addcontentsline{toc}{section}{Appendix} 
% Add the appendix text to the document TOC
\part{Appendix} % Start the appendix part
\parttoc % Insert the appendix TOC

\section{Experiments}\label{sec:experiments}
\subsection{Experiments on the Offline Argilla Data}\label{sec:offline_expr}
In this section, we will compare the following offline DPO-type algorithms on offline datasets. %The experiments to compare online DPO-type algorithms on online datasets are shown in Appendix \ref{sec:online_expr}. 
\vspace{-3mm}
\begin{enumerate}
    \itemsep 0mm 
    \item Our offline DPO-\textbf{COV} algorithm with all the three modules activated (\textit{\textbf{C}orruption}, \textit{\textbf{O}veroptimization}, \textit{\textbf{V}erbosity}): This is Algorithm \ref{offline_alg} with $\eta,\omega>0$ and $\lambda\in(0,1)$. 
    \item Offline robust DPO algorithm \citep{bukharin2024robust}: This is a special case of Algorithm \ref{offline_alg} with $\eta=\omega=0$ and $\lambda\in(0,1)$, which only tackles \textit{\textbf{C}orruption}. 
    \item Offline pessimistic DPO algorithm \citep{liu2024provably}: This is a special case of Algorithm \ref{offline_alg} with $\eta>0$, $\omega=0$ and $\lambda=1$, which only tackles \textit{\textbf{O}veroptimization}. 
    \item Offline length regularized DPO algorithm \citep{park2024disentangling}: This is a special case of Algorithm \ref{offline_alg} with $\eta=0$, $\omega>0$ and $\lambda=1$, which only tackles \textit{\textbf{V}erbosity}. 
    \item Offline vanilla DPO \citep{rafailov2023direct}: Algorithm \ref{offline_alg} with $\eta=\omega=0$ and $\lambda=1$. 
\end{enumerate}
We select the preference dataset $\mathcal{D}$ to be Argilla-DPO-Mix-7K ~\citep{argill2024dpo}, and $\pi_{\rm ref}$ to be zephyr-7b-gemma-sft-v0.1~\citep{huggingfaceh42024zephyr}, which is a fine-tuned version of gemma-7b on the Deita dataset~\citep{Wang2023IsRM}. Then we apply LoRA \citep{hu2021loralowrankadaptationlarge} and two epochs of the AdamW optimizer~\citep{loshchilov2017decoupled} with learning rate $5\times 10^{-7}$ to the objective \eqref{eq:offline_alg_obj2}. For each algorithm, we fix $\beta=0.05$ and perform grid search on the other hyperparameters over a holdout validation set. We obtain the length-controlled win rates (a.k.a. LC-win rates, defined in AlpacaEval 2.0~\citep{dubois2024lengthcontrolled}) of $\pi_{\rm ref}$ and that of the models obtained by the above algorithms against the model GPT-4 Preview (11/06) \citep{openai2024gpt4}, as well as the {output length averaged over the AlpacaEval 2.0~\citep{dubois2024lengthcontrolled} data}. We summarize these results and the hyperparameter values of all the algorithms in Table \ref{table:offline_more}, which indicates that our offline DPO-COV algorithm with all three components activated achieves the highest LC win rates, and that all these algorithms yield comparable output lengths. %Therefore, it is important to tackle the \textit{\textbf{C}orruption}, \textit{\textbf{O}veroptimization} and \textit{\textbf{V}erbosity} issues simultaneously. 
\begin{table*}[h]
\caption{Hyperparameters and results of offline DPO-type algorithms on the clean Argilla data}\label{table:offline_more}
\centering
\resizebox{\linewidth}{!}{\begin{tabular}{lcccccc}
\hline
\multicolumn{1}{c}{Algorithms}  & $\lambda$ & $\eta$ & $\omega$ &  LC-win rates & Avg length \\ \hline
\textbf{\color{red}Our DPO-COV (all 3 components activated)} & 0.7 & 0.0005    & 0.0005  &{\color{red}\textbf{7.61}\%} & \red{699.93}\\ \hline
%\textbf{\color{red}Our DPO-COV (all 3 components activated)} & 0.7 & 0.0005    & 0.0005  &  5.35\%\\ \hline
Robust DPO (\textit{\textbf{C}orruption} only) & 0.1 & 0 & 0  & 7.04\% & \textbf{670.86} \\ \hline
Pessimistic DPO (\textit{\textbf{O}veroptimization} only) & 1 & 0.005 & 0 & 5.50\% & 710.61\\ \hline
Length-regularized DPO (\textit{\textbf{V}erbosity} only)  & 1 & 0 & 0.0005 &7.30\% & 705.52\\ \hline
Vanilla DPO  & 1 & 0 & 0 &6.29\% & 708.69  \\ \hline
Reference model $\pi_{\rm ref}$ & - & - & - &4.92\% & 747.08\\ \hline
\end{tabular}
% \begin{tabular}{lcccccc}
% \hline
% \multicolumn{1}{c}{Algorithms}  & $\lambda$ & $\eta$ & $\omega$ & Test loss & LC-win rates & Avg length \\ \hline
% \textbf{\color{red}Our DPO-COV} & 0.7 & 0.0005    & 0.0005 & \textbf{\color{red}0.574 }& \textbf{\color{red}7.61\%} & {\color{red}699.93} \\ \hline
% Robust DPO & 0.1 & 0 & 0 & 0.601 & 7.04\% & \textbf{670.86}  \\ \hline
% Pessimistic DPO & 1 & 0.005 & 0 & 0.620 & 5.50\% & 710.61 \\ \hline
% Length-regularized DPO  & 1 & 0 & 0.0005& 0.607 & 7.30\% & 705.52 \\ \hline
% Vanilla DPO  & 1 & 0 & 0 & 0.612& 6.29\% &708.69\\ \hline
% Reference model $\pi_{\rm ref}$ & - & - & - & 0.693 & 4.92\% & 747.08\\ \hline
% \end{tabular}
}
\end{table*}

\begin{table*}[h]
\caption{Hyperparameters and results of 
offline DPO-type algorithms on the Argilla data (with 25\% corruption)}\label{table:offline_corrupt}
\resizebox{\linewidth}{!}{%
\centering
% \begin{tabular}{lccccccc}
% \hline
% \multicolumn{1}{c}{Algorithms}  & $\lambda$ & $\eta$ & $\omega$ & Test Loss & LC-win rates & Avg length \\ \hline
% \textbf{\color{red}Our DPO-COV (all 3 components activated)} & 0.1 & 0.0005    & 0.0005  & 0.594 &{\color{red}6.50\%} & 745.54\\ \hline
% %\textbf{\color{red}Our DPO-COV (all 3 components activated)} & 0.7 & 0.0005    & 0.0005  &  5.35\%\\ \hline
% Robust DPO (\textit{\textbf{C}orruption} only) & 0.1 & 0 & 0 &  &{\bf 6.80\%} & 742.77 \\ \hline
% Pessimistic DPO (\textit{\textbf{O}veroptimization} only) & 1 & 0.005 & 0 & 0.634 &5.92\% & 745.91\\ \hline
% Length-regularized DPO (\textit{\textbf{V}erbosity} only)  & 1 & 0 & 0.0005 & 0.627 &6.38\% & 746.79\\ \hline
% Vanilla DPO  & 1 & 0 & 0 & 0.635 &6.03\% & 742.34  \\ \hline
% Reference model $\pi_{\rm ref}$ & - & - & - & 0.693 &4.92\% & 747.08\\ \hline
% \end{tabular}
\begin{tabular}{lcccccc}
\hline
\multicolumn{1}{c}{Algorithms}  & $\lambda$ & $\eta$ & $\omega$ &  LC-win rates & Avg length \\ \hline
\textbf{\color{red}Our DPO-COV (all 3 components activated)} & 0.7 & 0.0005    & 0.0005  &{\color{red}6.50\%} & \red{745.54}\\ \hline
%\textbf{\color{red}Our DPO-COV (all 3 components activated)} & 0.7 & 0.0005    & 0.0005  &  5.35\%\\ \hline
Robust DPO (\textit{\textbf{C}orruption} only) & 0.1 & 0 & 0  &{\bf 6.80\%} & 742.77 \\ \hline
Pessimistic DPO (\textit{\textbf{O}veroptimization} only) & 1 & 0.005 & 0 &5.92\% & 745.91\\ \hline
Length-regularized DPO (\textit{\textbf{V}erbosity} only)  & 1 & 0 & 0.0005 &6.38\% & 746.79\\ \hline
Vanilla DPO  & 1 & 0 & 0 &6.03\% & \textbf{742.34}  \\ \hline
Reference model $\pi_{\rm ref}$ & - & - & - &4.92\% & 747.08\\ \hline
\end{tabular}
}
\end{table*}
% \begin{table}[t]
% \caption{Experimental Results on Math and Reasoning}
% \centering
% \begin{tabular}{lccc}
% \toprule
% {Model} & {GSM8K} & {ARC} & {ARC} \\
% {} & {} & {(Easy)} & {(Challenge)} \\
% \midrule
% \textbf{\red{Our DPO-COV}} & \textbf{\red{46.78}} & \textbf{\red{72.52}} & \textbf{\red{49.32}} \\
% Robust DPO & 46.25 & 72.14 & 47.35 \\
% Pessimistic DPO & 45.19 & 72.14 & 46.16 \\
% Length-reg DPO & 44.50 & 72.31 & 46.16 \\
% Vanilla DPO & 45.26 & 71.89 & 46.50 \\
% Reference Model $\pi_{\rm ref}$ & 42.38 & 71.72 & 45.14 \\
% \bottomrule
% \end{tabular}
% \label{tab:math}
% \end{table}
\begin{table*}[h]
\caption{Experimental Results on Math and Reasoning}
\centering
% \begin{tabular}{lcccccc}
% \toprule
% {Model} & {GSM8K} & {ARC} & {ARC} & {GPQA} & {GPQA} & {HumanEval}\\
% {} & {Acc} & {(Easy) Acc} & {(Challenge) Acc} & {(Main) Acc} & {(Diamond) Acc} & {pass@1} \\
% \midrule
% \textbf{\red{Our DPO-COV}} & \textbf{\red{46.78}} & \textbf{\red{72.52}} & \textbf{\red{49.32}} & \textbf{\red{29.91}} & \red{31.31} & \red{3.05} \\
% Robust DPO & 46.25 & 72.14 & 47.35 & 27.68 & 29.29 & 2.44 \\
% Pessimistic DPO & 45.19 & 72.14 & 46.16 & 23.88 & 28.39 & \textbf{4.27} \\
% Length-reg DPO & 44.50 & 72.31 & 46.16 & 25.22 & 29.29 & 2.44 \\
% Vanilla DPO & 45.26 & 71.89 & 46.50 & 26.12 & \textbf{34.85} & \textbf{4.27} \\
% Reference Model & 42.38 & 71.72 & 45.14 & 28.35 & 26.26 & 3.66 \\
% \bottomrule
% \end{tabular}
\begin{tabular}{lccccc}
\toprule
\multicolumn{1}{c}{Model} & {GSM8K} & {ARC} & {ARC} & {GPQA} & {GPQA}\\
{} & {} & {(Easy set)} & {(Challenge set)} & {(Main set)} & {(Diamond set)} \\
\midrule
\textbf{\red{Our DPO-COV}} & \textbf{\red{46.78}} & \textbf{\red{72.52}} & \textbf{\red{49.32}} & \textbf{\red{29.91}} & \red{31.31} \\
Robust DPO & 46.25 & 72.14 & 47.35 & 27.68 & 29.29  \\
Pessimistic DPO & 45.19 & 72.14 & 46.16 & 23.88 & 28.39  \\
Length-reg DPO & 44.50 & 72.31 & 46.16 & 25.22 & 29.29 \\
Vanilla DPO & 45.26 & 71.89 & 46.50 & 26.12 & \textbf{34.85} \\
Reference Model & 42.38 & 71.72 & 45.14 & 28.35 & 26.26 \\
\bottomrule
\end{tabular}
\label{tab:math-app}
\end{table*}
\begin{table*}[h]
\caption{Hyperparameter Values and LC-win Rates of Online DPO-type Algorithms}\label{table:online}
\centering
\begin{tabular}{lcccc}
\hline
\multicolumn{1}{c}{Algorithms}  & $\lambda$ & $\eta$ & $\omega$ & LC-win rates \\ \hline
\textbf{\color{red}Our DPO-COV (all 3 components activated)} & 0.7 & 0.0005    & 0.0005  & \textbf{\color{red} 7.87\%}  \\ \hline
Robust DPO (\textit{\textbf{C}orruption} only) & 0.1 & 0 & 0 &  7.03\% \\ \hline
Optimistic DPO (\textit{\textbf{O}veroptimization} only) & 1 & 0.005 & 0 & 6.23\% \\ \hline
Length-regularized DPO (\textit{\textbf{V}erbosity} only)  & 1 & 0 & 0.0005 & 6.19\% \\ \hline
Vanilla DPO  & 1 & 0 & 0 & 6.58\%  \\ \hline
Reference model $\pi_{\rm ref}$ & - & - & - & 4.92\% \\ \hline
\end{tabular}
\end{table*}
To further evaluate our algorithm's robustness to data corruption, we change the labels of randomly selected 25\% samples from the Argilla data and implement these algorithms. Similar to Table \ref{table:offline_more}, we summarize the results and hyperparameters on this corrupted data in Table \ref{table:offline_corrupt}, which shows that both our DPO-COV and the robust DPO are more robust to the corruption than the other non-robust DPO variants, and the outputs of these models again have comparable lengths on the corrupted data. 
\subsection{Experiment on Math and Reasoning}% \red{Experiment procedure ?? Explain the metrics in Table \ref{tab:math-app}??}
We also compare our Algorithm \ref{offline_alg} with other offline DPO variants over datasets of math and reasoning tasks, including Grade School Math 8K (GSM8K) \citep{cobbe2021training}, AI2 Reasoning Challenge (ARC) tasks (both easy and challenge sets) \citep{clark2018think}, GPQA (both main and diamond sets) \citep{rein2024gpqa}. We run the benchmark test with \citep{eval-harness} and report the accuracies in~Table \ref{tab:math-app}. The model hyper-parameters are the same as in Table~\ref{table:offline_more}. The results shown in Table \ref{tab:math-app} indicate that our DPO-COV algorithm outperforms the other variants also on most of the math and reasoning tasks. 

\subsection{Experiment on Online Data}\label{sec:online_expr}
Similar to the offline experiments in Section \ref{sec:offline_expr}, we compare important special cases of Algorithm \ref{online_alg}, including our online DPO-COV with all 3 components activated, the online variant of the robust DPO algorithm \citep{bukharin2024robust}, online optimistic DPO algorithm (named XPO in \citep{xie2024exploratory}), online length regularized DPO algorithm \citep{liu2024iterative} and online vanilla DPO algorithm (using DPO objective in \citep{guo2024direct}). We use zephyr-7b-gemma-sft-v0.1~\citep{huggingfaceh42024zephyr} as the reference model $\pi_{\rm ref}$ and the initial model $\pi_0$. Each algorithm is trained with $\beta=0.05$ and $T=3$ iterations. % and other hyperparameters fine-tuned over a holdout validation set. 
In each iteration, we generate the online labels $y_t$ from pair-preference-model-LLaMA3-8B~\citep{rlhflow2024pairpreference}, and combine the online data with 50\% of the preference dataset of Argilla-DPO-Mix-7K \citep{argill2024dpo}. Then we apply LoRA \citep{hu2021loralowrankadaptationlarge} and two epochs of the AdamW optimizer~\citep{loshchilov2017decoupled} with stepsize $5\times 10^{-7}$ to the objective \eqref{eq:onlineDPO_COV2}. On AlpacaEval 2.0 \citep{dubois2024lengthcontrolled}, we compare the LC-win rates of $\pi_{\rm ref}$ and that of the models obtained by the above algorithms against the model GPT-4 Preview (11/06) \citep{openai2024gpt4}. Again, the results in Table \ref{table:online} indicate that our online DPO-COV algorithm with all three components activated achieves the highest length-controlled win rates. %Therefore, it is important to tackle the \textit{\textbf{C}orruption}, \textit{\textbf{O}veroptimization} and \textit{\textbf{V}erbosity} issues simultaneously.

\section{Supporting Lemmas}\label{sec:lemmas}
\begin{lemma}\label{lemma:sigma_diff}
For any $A\in(0,\infty)$ and $z_1,z_2\in[-R,R]$, the following inequality holds.
\begin{align}
    \frac{|z_1-z_2|}{3+e^R}\le|\sigma(z_1)-\sigma(z_2)|\le\frac{1}{4}|z_1-z_2|.\label{eq:sigma_diff}
\end{align}
\end{lemma}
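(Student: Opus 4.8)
\textbf{Proof proposal for Lemma~\ref{lemma:sigma_diff}.}

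The plan is to reduce both inequalities to a bound on the derivative $\sigma'(z)=\sigma(z)[1-\sigma(z)]$ on the relevant interval, then invoke the mean value theorem. Since $\sigma$ is differentiable, for any $z_1,z_2$ there is a point $\zeta$ between them with $\sigma(z_1)-\sigma(z_2)=\sigma'(\zeta)(z_1-z_2)$, so it suffices to control $\sigma'(\zeta)$.

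For the upper bound, I would use the elementary fact that $\sigma'(z)=\sigma(z)[1-\sigma(z)]\le 1/4$ for all $z\in\mathbb{R}$ (maximized at $z=0$ where $\sigma=1/2$); this immediately gives $|\sigma(z_1)-\sigma(z_2)|\le\frac14|z_1-z_2|$ with no restriction on the range. For the lower bound I need $\sigma'(\zeta)$ bounded below on $[-R,R]$. Here I would note $\zeta\in(-R,R)$, so $\sigma(\zeta)\in(\sigma(-R),\sigma(R))=\bigl(\tfrac{1}{1+e^R},\tfrac{e^R}{1+e^R}\bigr)$, and compute
\begin{align}
\sigma'(\zeta)=\sigma(\zeta)[1-\sigma(\zeta)]>\frac{1}{1+e^R}\cdot\frac{1}{1+e^{-R}}=\frac{1}{(1+e^R)(1+e^{-R})}=\frac{1}{2+e^R+e^{-R}}.\nonumber
\end{align}
Since $e^{-R}<1$, the denominator satisfies $2+e^R+e^{-R}<3+e^R$, hence $\sigma'(\zeta)>\frac{1}{3+e^R}$, which yields $|\sigma(z_1)-\sigma(z_2)|\ge\frac{|z_1-z_2|}{3+e^R}$. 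The parameter $A\in(0,\infty)$ appears to be vestigial (unused in the statement), so I would simply ignore it.

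The only mild subtlety — hardly an obstacle — is handling the boundary case $z_1=z_2$ (both sides zero, trivially true) and making sure the strict inequality $\sigma(\zeta)\in(\sigma(-R),\sigma(R))$ is legitimate: when $z_1\neq z_2$ the MVT point $\zeta$ lies strictly between them, hence strictly inside $[-R,R]$, so monotonicity of $\sigma$ gives the strict bounds needed. Everything else is a one-line computation, so I expect the entire proof to be three or four lines.
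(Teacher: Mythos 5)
Your proposal is correct and takes essentially the same route as the paper: the paper writes $|\sigma(z_1)-\sigma(z_2)|=\int_{z_{\min}}^{z_{\max}}\sigma'(z)\,dz$ and bounds $\sigma'$ on $[-R,R]$, which is the integral form of your mean-value-theorem reduction; the constants $\tfrac14$ and $\tfrac{1}{3+e^R}$ then come from exactly the same estimates $\sigma'\le\tfrac14$ and $2+e^R+e^{-R}\le 3+e^R$. One caveat on your lower bound: as written, the step $\sigma(\zeta)[1-\sigma(\zeta)]>\frac{1}{1+e^R}\cdot\frac{1}{1+e^{-R}}$ reads as a termwise product of lower bounds, but the termwise bounds from $\sigma(\zeta)\in(\sigma(-R),\sigma(R))$ only give $\sigma(\zeta)>\frac{1}{1+e^R}$ and $1-\sigma(\zeta)>\frac{1}{1+e^R}$, whose product is the weaker $\frac{1}{(1+e^R)^2}$ of \citet{liu2024provably} that this lemma is meant to improve. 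The stated constant does hold, but you need the observation (used explicitly in the paper via $\sigma'(v)=\frac14-[\sigma(v)-\frac12]^2\ge\frac14-[\sigma(R)-\frac12]^2=\sigma(R)[1-\sigma(R)]$) that $s\mapsto s(1-s)$ is concave and hence minimized at the endpoints of the interval $[\sigma(-R),\sigma(R)]$, where it equals $\sigma(R)\sigma(-R)=\frac{1}{(1+e^R)(1+e^{-R})}$. With that one line added, your argument is complete; you are also right that the parameter $A$ is vestigial.
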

\textbf{Remark: } Our bound \eqref{eq:sigma_diff} is strictly tighter than $\frac{|z_1-z_2|}{(1+e^R)^2}\le|\sigma(z_1)-\sigma(z_2)|\le|z_1-z_2|$ obtained in Lemma A.2 of \citep{liu2024provably}. 

\begin{proof}
    Denote $z_{\min}=\min(z_1,z_2)$ and $z_{\max}=\max(z_1,z_2)$. Then we have
    \begin{align}
        |z_1-z_2|&=z_{\max}-z_{\min},\nonumber\\
        |\sigma(z_1)-\sigma(z_2)|&=\sigma(z_{\max})-\sigma(z_{\min})=\int_{z_{\min}}^{z_{\max}}\sigma'(z)dz.\nonumber
    \end{align}
    Hence, it suffices to prove that $\sigma'(v)\in\big[\frac{1}{3+e^R},\frac{1}{4}\big]$ for any $v\in[z_{\min},z_{\max}]\subset[-R,R]$. Note that for any $v\in[z_{\min},z_{\max}]\subset[-R,R]$, $\sigma(v)\in [\sigma(-R),\sigma(R)]=[1-\sigma(R),\sigma(R)]$. Hence, we conclude the proof by the following two bounds.
    \begin{align}
        \sigma'(v)=\sigma(v)[1-\sigma(v)]=\frac{1}{4}-\Big[\sigma(v)-\frac{1}{2}\Big]^2\le \frac{1}{4}.\nonumber
    \end{align}
    \begin{align}
        \sigma'(v)=&\frac{1}{4}-\Big[\sigma(v)-\frac{1}{2}\Big]^2\nonumber\\
        \ge& \frac{1}{4}-\Big[\sigma(R)-\frac{1}{2}\Big]^2\nonumber\\
        =&\sigma(R)[1-\sigma(R)]\nonumber\\
        =&\frac{1}{1+e^R}\frac{e^R}{1+e^R}\nonumber\\
        =&\frac{1}{(1+e^R)(1+e^{-R})}\nonumber\\
        =&\frac{1}{2+e^R+e^{-R}}\ge\frac{1}{3+e^R}.\nonumber
    \end{align}
\end{proof}

% \begin{lemma}\label{lemma:PiR_space}
% For any $\pi\in\Pi_{\mathcal{R}}\overset{\rm def}{=}\{\pi_r:r\in\mathcal{R}\}$, there exists $U_{\pi}(x)\in\mathbb{R}$ such that the reward model $r$ defined by $r(x,\cdot)=r^{\pi}(x,\cdot)+U_{\pi}(x)$ satisfies $r\in\mathcal{R}$ and $\pi=\pi_r$. 
% \end{lemma}
% \begin{proof}
%     ??
% \end{proof}

\begin{lemma}\label{lemma:rdiff}
For any $x\in\mathcal{X}$, $a_0, a_1\in\mathcal{A}$ and $r\in\mathcal{R}$, the following equality holds
\begin{align}
    &r^{\pi_r}(x,a_1)-r^{\pi_r}(x,a_0)=r(x,a_1)-r(x,a_0),\label{eq:r_diff_eq}
\end{align}
where $\pi_r$ and $r^{\pi}$ are defined by Eqs. \eqref{eq:pi_r} and \eqref{eq:r_pi} respectively. Furthermore, under Assumption \ref{assum:R}, both sides of the above Eq. \eqref{eq:r_diff_eq} range in $[-R,R]$.
\end{lemma}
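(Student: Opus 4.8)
The plan is to prove Lemma~\ref{lemma:rdiff} by direct computation from the definitions of $\pi_r$ in Eq.~\eqref{eq:pi_r} and $r^\pi$ in Eq.~\eqref{eq:r_pi}, and then to invoke Assumption~\ref{assum:R} for the range statement. First I would substitute the analytical form of $\pi_r$ into the definition of $r^{\pi_r}$. Concretely, by Eq.~\eqref{eq:pi_r} we have
\begin{align}
\pi_r(a|x)=\frac{\pi_{\rm ref}(a|x)}{Z_r(x)}\exp\Big[\frac{r(x,a)-\omega|a|}{\beta}\Big],\nonumber
\end{align}
so that
\begin{align}
\beta\log\frac{\pi_r(a|x)}{\pi_{\rm ref}(a|x)}=r(x,a)-\omega|a|-\beta\log Z_r(x).\nonumber
\end{align}
Plugging this into $r^{\pi_r}(x,a)=\omega|a|+\beta\log[\pi_r(a|x)/\pi_{\rm ref}(a|x)]$ gives $r^{\pi_r}(x,a)=r(x,a)-\beta\log Z_r(x)$, i.e. $r^{\pi_r}$ and $r$ differ by an additive term $-\beta\log Z_r(x)$ that depends only on $x$ and not on $a$.

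The key observation is then that this $x$-dependent but $a$-independent offset cancels when we take the difference $r^{\pi_r}(x,a_1)-r^{\pi_r}(x,a_0)$: both terms carry the same $-\beta\log Z_r(x)$, so
\begin{align}
r^{\pi_r}(x,a_1)-r^{\pi_r}(x,a_0)=[r(x,a_1)-\beta\log Z_r(x)]-[r(x,a_0)-\beta\log Z_r(x)]=r(x,a_1)-r(x,a_0),\nonumber
\end{align}
which is exactly Eq.~\eqref{eq:r_diff_eq}. For the range statement, since $r\in\mathcal{R}$, Assumption~\ref{assum:R} gives $r(x,a_0),r(x,a_1)\in[0,R]$, hence $r(x,a_1)-r(x,a_0)\in[-R,R]$; by the equality just proved, the left-hand side lies in the same interval.

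I do not anticipate a genuine obstacle here — this is essentially a bookkeeping argument exploiting the fact that the log-normalization constant $\log Z_r(x)$ depends only on the prompt $x$. The only point requiring a little care is making sure that $Z_r(x)$ is well-defined and finite (so that $\log Z_r(x)$ makes sense), which follows because $\mathcal{A}$ is finite, $\pi_{\rm ref}(\cdot|x)$ is a probability distribution, and $r(x,a)-\omega|a|$ is bounded above on the finite set $\mathcal{A}$ under Assumption~\ref{assum:R} (with $\omega\ge 0$ and finite token lengths). I would state this briefly and then present the two-line computation above.
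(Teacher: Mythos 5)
Your proposal is correct and follows essentially the same route as the paper's own proof: both substitute the analytical form of $\pi_r$ from Eq.~\eqref{eq:pi_r} into the definition of $r^{\pi}$ from Eq.~\eqref{eq:r_pi} and observe that the $a$-independent normalization term $\beta\log Z_r(x)$ cancels in the difference, then apply Assumption~\ref{assum:R} for the range claim. The only cosmetic difference is that the paper computes the pairwise difference directly rather than first writing $r^{\pi_r}(x,a)=r(x,a)-\beta\log Z_r(x)$, but the underlying cancellation is identical.
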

\begin{proof}
\begin{align}
    &r^{\pi_r}(x,a_1)-r^{\pi_r}(x,a_0)\nonumber\\
    \overset{(a)}{=}& \omega(|a_1|-|a_0|)+\beta\log\Big(\frac{\pi_r(a_1|x)\pi_{\rm ref}(a_0|x)}{\pi_r(a_0|x)\pi_{\rm ref}(a_1|x)}\Big)\nonumber\\
    \overset{(b)}{=}& r(x,a_1)-r(x,a_0),\nonumber
\end{align}
where (a) uses Eq. \eqref{eq:r_pi} and (b) uses Eq. \eqref{eq:pi_r}. 

Furthermore, under Assumption \ref{assum:R}, $r(x,a_0),r(x,a_1)\in[0,R]$, so
\begin{align}
    r^{\pi_r}(x,a_1)-r^{\pi_r}(x,a_0)=r(x,a_1)-r(x,a_0)\in [-R,R]. \nonumber
\end{align}
\end{proof}

\begin{lemma}\label{lemma:rpi_diff}
    Any policy $\pi\in\Pi$ satisfies $\pi=\pi_{r^{\pi}}$ where $\pi_r$ and $r^{\pi}$ are defined by Eqs. \eqref{eq:pi_r} and \eqref{eq:r_pi} respectively. Furthermore, under Assumption \ref{assum:R}, any $\pi\in\Pi_{\mathcal{R}}\overset{\rm def}{=}\{\pi_r:r\in\mathcal{R}\}$ satisfies $|r^{\pi}(x,a_1)-r^{\pi}(x,a_0)|\le R$ for any $x\in\mathcal{X}$, $a_0, a_1\in\mathcal{A}$. 
    %(2) There exists $U_{\pi}(x)\in\mathbb{R}$ for every $x\in\mathcal{X}$ such that the reward model $r$ defined by $r(x,a)=r^{\pi}(x,a)+U_{\pi}(x)$ satisfies $r\in\mathcal{R}$. 
\end{lemma}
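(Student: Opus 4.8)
\textbf{Proof plan for Lemma \ref{lemma:rpi_diff}.}

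The first assertion, $\pi = \pi_{r^{\pi}}$, I would prove by direct substitution. Unwind the definition of $\pi_{r^{\pi}}$ from Eq.~\eqref{eq:pi_r}: it reads $\pi_{r^{\pi}}(a|x) = \frac{\pi_{\rm ref}(a|x)}{Z_{r^{\pi}}(x)}\exp\!\big[\frac{r^{\pi}(x,a)-\omega|a|}{\beta}\big]$. Now plug in the definition of $r^{\pi}$ from Eq.~\eqref{eq:r_pi}, namely $r^{\pi}(x,a) = \omega|a| + \beta\log\frac{\pi(a|x)}{\pi_{\rm ref}(a|x)}$. The $\omega|a|$ terms cancel inside the exponent, leaving $\exp\!\big[\log\frac{\pi(a|x)}{\pi_{\rm ref}(a|x)}\big] = \frac{\pi(a|x)}{\pi_{\rm ref}(a|x)}$, so the numerator becomes $\pi_{\rm ref}(a|x)\cdot\frac{\pi(a|x)}{\pi_{\rm ref}(a|x)} = \pi(a|x)$. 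It remains to check that the normalization factor $Z_{r^{\pi}}(x)$ equals $1$: by the same cancellation, $Z_{r^{\pi}}(x) = \sum_{a'\in\mathcal{A}}\pi_{\rm ref}(a'|x)\cdot\frac{\pi(a'|x)}{\pi_{\rm ref}(a'|x)} = \sum_{a'\in\mathcal{A}}\pi(a'|x) = 1$ since $\pi(\cdot|x)$ is a distribution over $\mathcal{A}$. Hence $\pi_{r^{\pi}}(a|x) = \pi(a|x)$ for all $a$, as claimed. (One should note $r^{\pi}$ is well-defined wherever $\pi_{\rm ref}>0$, which is the standard tacit support assumption.)

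For the second assertion, let $\pi\in\Pi_{\mathcal{R}}$, so $\pi = \pi_r$ for some $r\in\mathcal{R}$. By the first part, $\pi = \pi_{r^{\pi}}$, but more directly I want to relate $r^{\pi}$ to $r$. Since $\pi = \pi_r$, I apply Lemma \ref{lemma:rdiff} with the role of the base reward played by $r$: that lemma gives $r^{\pi_r}(x,a_1) - r^{\pi_r}(x,a_0) = r(x,a_1) - r(x,a_0)$, i.e.\ $r^{\pi}(x,a_1) - r^{\pi}(x,a_0) = r(x,a_1) - r(x,a_0)$. Under Assumption \ref{assum:R} we have $r(x,a_0), r(x,a_1)\in[0,R]$, so the right-hand side lies in $[-R,R]$, and therefore $|r^{\pi}(x,a_1) - r^{\pi}(x,a_0)| \le R$. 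This is exactly the desired bound.

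There is essentially no obstacle here — both parts are short formal manipulations. The only point requiring a moment's care is recognizing that the second claim is not proved by bounding $r^{\pi}$ itself (which need not lie in $[0,R]$, since $r^{\pi}$ carries an additive ambiguity $U_{\pi}(x)$ per prompt, cf.\ Proposition \ref{prop:offline_DPO_equal}) but only its \emph{differences} across responses, which is why invoking Lemma \ref{lemma:rdiff} rather than Assumption \ref{assum:R} directly on $r^{\pi}$ is the right move. I would also double-check that the hypothesis $\pi\in\Pi_{\mathcal{R}}$ (not merely $\pi\in\Pi$) is genuinely used: it is, because Lemma \ref{lemma:rdiff} needs $\pi$ to be of the form $\pi_r$ with $r\in\mathcal{R}$ so that Assumption \ref{assum:R}'s boundedness applies to $r$.
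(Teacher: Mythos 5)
Your argument is correct and matches the paper's proof essentially step for step: the first part by substituting $r^{\pi}$ into Eq.~\eqref{eq:pi_r} and checking $Z_{r^{\pi}}(x)=1$, and the second part by writing $\pi=\pi_r$ and invoking Lemma~\ref{lemma:rdiff} together with Assumption~\ref{assum:R}. Your remark that only reward \emph{differences} (not $r^{\pi}$ itself) are bounded is exactly the right point of care and is implicit in the paper's use of Lemma~\ref{lemma:rdiff}.
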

\begin{proof}
    Eq. \eqref{eq:r_pi} implies that for any $x\in\mathcal{X}$ and $a\in\mathcal{A}$, we have
    \begin{align}
        \pi_{\rm ref}(a|x)\exp\Big[\frac{r^{\pi}(x,a)-\omega|a|}{\beta}\Big]=\pi(a|x). \label{eq:pi}
    \end{align}
    Hence, 
    \begin{align}
        Z_{r^{\pi}}(x)=\sum_{a\in\mathcal{A}}\pi_{\rm ref}(a|x)\exp\Big[\frac{r^{\pi}(x,a)-\omega|a|}{\beta}\Big]=\sum_{a\in\mathcal{A}}\pi(a|x)=1.\label{eq:Zrpi}
    \end{align}
    Therefore, $\pi=\pi_{r^{\pi}}$ can be proved as follows.
    \begin{align}
        \pi_{r^{\pi}}(a|x)\overset{(a)}{=}\frac{\pi_{\rm ref}(a|x)}{Z_{r^{\pi}}(x)}\exp\Big[\frac{r^{\pi}(x,a)-\omega|a|}{\beta}\Big]\overset{(b)}{=}\pi(a|x), \nonumber
    \end{align}
    where (a) uses Eq. \eqref{eq:pi_r} and (b) uses Eqs. \eqref{eq:pi} and \eqref{eq:Zrpi}. 

    When $\pi\in\Pi_{\mathcal{R}}\overset{\rm def}{=}\{\pi_r:r\in\mathcal{R}\}$, there exists $r\in\mathcal{R}$ such that $\pi=\pi_r$. Hence, 
    \begin{align}
        |r^{\pi}(x,a_1)-r^{\pi}(x,a_0)|\overset{(a)}{=}|r^{\pi_r}(x,a_1)-r^{\pi_r}(x,a_0)|\overset{(b)}{=}|r(x,a_1)-r(x,a_0)|\overset{(c)}{\le} R, \nonumber
    \end{align}
    where (a) uses $\pi=\pi_r$, (b) uses Eq. \eqref{eq:r_diff_eq} and (c) uses Assumption \ref{assum:R}. 
\end{proof}

\begin{lemma}\label{lemma:xir_out}
Under Assumption \ref{assum:R}, for any $r\in\mathcal{R}$ and $\xi_{r,i}$ defined by Eq. \eqref{eq:xi_r}, the following inequality holds.
\begin{align}
    \log\sigma[r(x_i,a_i^w)-r(x_i,a_i^{\ell})+y_i\xi_{r,i}]\le\log\sigma[r(x_i,a_i^w)-r(x_i,a_i^{\ell})]+\sigma(R)|\xi_{r,i}|.\label{eq:xir_out}
\end{align}
For any $\pi\in\Pi_{\mathcal{R}}\overset{\rm def}{=}\{\pi_r:r\in\mathcal{R}\}$ and $\xi_i^{\pi}$ defined by Eq. \eqref{eq:xi_pi}, the following inequality holds.
\begin{align}
    \log\sigma[r^{\pi}(x_i,a_i^w)-r^{\pi}(x_i,a_i^{\ell})+y_i\xi_i^{\pi}]\le\log\sigma[r^{\pi}(x_i,a_i^w)-r^{\pi}(x_i,a_i^{\ell})]+\sigma(R)|\xi_i^{\pi}|.\label{eq:xipi_out}
\end{align}
\end{lemma}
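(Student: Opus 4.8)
The plan is to prove the two inequalities \eqref{eq:xir_out} and \eqref{eq:xipi_out} together by a single elementary argument, since \eqref{eq:xipi_out} is just the instance of \eqref{eq:xir_out} applied to $r=r^{\pi}$ combined with the observations that $\pi=\pi_{r^{\pi}}$ (Lemma \ref{lemma:rpi_diff}) and $\xi_i^{\pi}=\xi_{r^{\pi},i}$ (by definition, Eq. \eqref{eq:xi_pi}). So the whole task reduces to proving \eqref{eq:xir_out}. First I would abbreviate $\Delta := r(x_i,a_i^w)-r(x_i,a_i^{\ell})$ and $\xi := \xi_{r,i}$, and note that by Lemma \ref{lemma:rdiff} (or directly from Assumption \ref{assum:R}, since $r(x_i,a_i^w),r(x_i,a_i^{\ell})\in[0,R]$) we have $\Delta\in[-R,R]$, in fact $\Delta\in[0,R]$ because $a_i^w$ is the preferred response. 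The claim to establish is then $\log\sigma(\Delta+y_i\xi)\le\log\sigma(\Delta)+\sigma(R)|\xi|$.

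The key structural fact is the sign of $y_i\xi_{r,i}$: from the formula \eqref{eq:xi_r}, $\xi_{r,i}=y_i I\{\lambda<1\}[\log(1/\lambda-1)-\Delta]_+$, so $y_i\xi_{r,i}=I\{\lambda<1\}[\log(1/\lambda-1)-\Delta]_+\ge 0$ and $|\xi_{r,i}| = y_i\xi_{r,i}$ (using $y_i^2=1$). Hence it suffices to prove, for any $t\ge 0$ and any $\Delta\in[0,R]$, the one-dimensional inequality
\begin{align}
\log\sigma(\Delta+t)-\log\sigma(\Delta)\le \sigma(R)\,t.\nonumber
\end{align}
To see this, define $h(t):=\log\sigma(\Delta+t)$; then $h'(t)=1-\sigma(\Delta+t)=\sigma(-(\Delta+t))$, which is a decreasing function of $t$, so for $t\ge 0$ we have $h'(s)\le h'(0)=\sigma(-\Delta)=1-\sigma(\Delta)$ for all $s\in[0,t]$. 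Therefore $h(t)-h(0)=\int_0^t h'(s)\,ds\le t\,(1-\sigma(\Delta))$. Since $\Delta\ge 0$ implies $1-\sigma(\Delta)\le 1-\sigma(0)=1/2\le\sigma(R)$ — and even just $\Delta\ge -R$ gives $1-\sigma(\Delta)\le 1-\sigma(-R)=\sigma(R)$ — we obtain $h(t)-h(0)\le\sigma(R)\,t$, which is exactly the desired bound with $t=y_i\xi_{r,i}=|\xi_{r,i}|$.

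I do not anticipate a serious obstacle here; this is a routine monotonicity/mean-value estimate. The only point requiring a little care is getting the sign of $y_i\xi_{r,i}$ right so that one can replace $|\xi_{r,i}|$ by $y_i\xi_{r,i}$ and reduce to the $t\ge 0$ case — this is where the specific form of the closed-form solution \eqref{eq:xi_r} (the $[\,\cdot\,]_+$ truncation and the leading $y_i$) is used. For \eqref{eq:xipi_out}, after invoking \eqref{eq:xir_out} with $r=r^{\pi}$, one should also remark that $r^{\pi}\in\mathcal{R}$ is not actually needed — what is needed is only that $r^{\pi}(x_i,a_i^w)-r^{\pi}(x_i,a_i^{\ell})\in[-R,R]$, which holds for $\pi\in\Pi_{\mathcal{R}}$ by Lemma \ref{lemma:rpi_diff}, and that $\xi_i^{\pi}$ has the same $[\,\cdot\,]_+$-with-leading-$y_i$ shape, which is Eq. \eqref{eq:xi_pi}. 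So the proof of \eqref{eq:xipi_out} is a one-line corollary once the general inequality is in place.
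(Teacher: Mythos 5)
Your proof is correct and follows essentially the same route as the paper: both arguments observe from the closed form \eqref{eq:xi_r} (resp.\ \eqref{eq:xi_pi}) that $y_i\xi_{r,i}\ge 0$, and then bound $\frac{d}{dv}\log\sigma(v)=\sigma(-v)\le\sigma(R)$ on the interval $[\Delta,\Delta+y_i\xi_{r,i}]\subseteq[-R,+\infty)$, using Assumption \ref{assum:R} for the first inequality and Lemma \ref{lemma:rpi_diff} for the second. One small aside in your write-up is wrong but harmless: $\Delta=r(x_i,a_i^w)-r(x_i,a_i^{\ell})$ need not lie in $[0,R]$ for an arbitrary $r\in\mathcal{R}$ (the preference label is determined by the true reward plus noise, not by $r$), but your argument only uses $\Delta\ge-R$, so nothing breaks.
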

\begin{proof}
    $y_i\xi_{r,i}\ge 0$ by Eq. \eqref{eq:xi_r} since $y_i\in\{-1,1\}$. Then Eq. \eqref{eq:xir_out} follows from $\frac{d}{dv}[\log\sigma(v)]=\sigma(-v)\le \sigma(R)$ for any $v\in[r(x_i,a_i^w)-r(x_i,a_i^{\ell}),r(x_i,a_i^w)-r(x_i,a_i^{\ell})+y_i\xi_{r,i}]\subseteq [-R,+\infty)$ where $\subset$ is implied by Assumption \ref{assum:R}.

    Similarly, $y_i\xi_i^{\pi}\ge 0$ by Eq. \eqref{eq:xi_pi} since $y_i\in\{-1,1\}$. Then Eq. \eqref{eq:xipi_out} follows from $\frac{d}{dv}[\log\sigma(v)]=\sigma(-v)\le \sigma(R)$ for any $v\in[r^{\pi}(x_i,a_i^w)-r^{\pi}(x_i,a_i^{\ell}),r^{\pi}(x_i,a_i^w)-r^{\pi}(x_i,a_i^{\ell})+y_i\xi_i^{\pi}]\subseteq [-R,+\infty)$ where $\subset$ is implied by Lemma \ref{lemma:rpi_diff}.
\end{proof}

\begin{lemma}\label{lemma:xi*_out}
    For any $\xi_i\in\mathbb{R}$ and reward models $r,r':\mathcal{X}\times\mathcal{A}\to\mathbb{R}$, we have
    \begin{align}
        &\big\{\sigma[r'(x_i,a_i^{w})-r'(x_i,a_i^{\ell})+y_i\xi_i]-\sigma[r(x_i,a_i^{w})-r(x_i,a_i^{\ell})]\big\}^2\nonumber\\
        \ge&\big\{\sigma[r'(x_i,a_i^{w})-r'(x_i,a_i^{\ell})]-\sigma[r(x_i,a_i^{w})-r(x_i,a_i^{\ell})]\big\}^2-\frac{1}{2}|\xi_i|.\label{eq:xi*_out}
    \end{align}
\end{lemma}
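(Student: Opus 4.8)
The plan is to reduce the inequality to two elementary properties of the sigmoid $\sigma$: that its range is $(0,1)$ and that it is $\tfrac14$-Lipschitz. Abbreviate $A\overset{\rm def}{=}\sigma[r'(x_i,a_i^{w})-r'(x_i,a_i^{\ell})+y_i\xi_i]$, $B\overset{\rm def}{=}\sigma[r'(x_i,a_i^{w})-r'(x_i,a_i^{\ell})]$, and $C\overset{\rm def}{=}\sigma[r(x_i,a_i^{w})-r(x_i,a_i^{\ell})]$, so that \eqref{eq:xi*_out} reads $(A-C)^2\ge (B-C)^2-\tfrac12|\xi_i|$. First I would rewrite the gap as a product via the difference-of-squares identity, $(A-C)^2-(B-C)^2=(A+B-2C)(A-B)=\big[(A-C)+(B-C)\big](A-B)$, so it suffices to lower bound this product.

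For the first factor, since $A,B,C\in(0,1)$ we have $|A-C|\le 1$ and $|B-C|\le 1$, hence $|A+B-2C|\le |A-C|+|B-C|\le 2$. For the second factor, $A$ and $B$ are values of $\sigma$ at two points differing by $y_i\xi_i$, and $\sigma'(v)=\sigma(v)[1-\sigma(v)]\le\tfrac14$ for all $v$ (this is the global upper bound underlying the right inequality of Lemma \ref{lemma:sigma_diff}), so $|A-B|\le\tfrac14|y_i\xi_i|=\tfrac14|\xi_i|$, using $y_i\in\{-1,1\}$. Combining the two bounds, $(A-C)^2-(B-C)^2=(A+B-2C)(A-B)\ge -|A+B-2C|\,|A-B|\ge -2\cdot\tfrac14|\xi_i|=-\tfrac12|\xi_i|$, which is exactly \eqref{eq:xi*_out} after rearranging.

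There is no substantive obstacle in this argument; the only point requiring a little care is that the $\tfrac14$-Lipschitz bound for $\sigma$ is genuinely global and does not rely on any boundedness of the reward models, so the lemma holds as stated for arbitrary $r,r'$ and arbitrary $\xi_i\in\mathbb{R}$ (and indeed would hold with $|y_i\xi_i|$ in place of $|\xi_i|$ if one did not assume $y_i\in\{-1,1\}$). If one prefers not to invoke Lemma \ref{lemma:sigma_diff} here, the bound $|A-B|\le\tfrac14|\xi_i|$ follows directly from $|\sigma(u)-\sigma(v)|=\big|\int_v^u\sigma'(t)\,dt\big|\le\tfrac14|u-v|$.
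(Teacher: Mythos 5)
Your proof is correct and rests on exactly the same two ingredients as the paper's: the global bound $\sigma'(v)=\sigma(v)[1-\sigma(v)]\le\tfrac14$ and the fact that $\sigma$ takes values in $(0,1)$. The paper packages these as a bound $|f'(u)|<\tfrac12$ on the derivative of $f(u)=[\sigma(A_i'+u)-\sigma(A_i)]^2$ in the shift variable $u$, whereas you factor the difference of squares and bound the two factors separately; the two presentations are interchangeable and yield the same constant.
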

\begin{proof}
    Denote $A_i'=r'(x_i,a_i^{w})-r'(x_i,a_i^{\ell})$ and $A_i=r(x_i,a_i^{w})-r(x_i,a_i^{\ell})$. Define the following function.
    \begin{align}
        f(u)=\big[\sigma(A_i'+u)-\sigma(A_i)\big]^2.
    \end{align}
    Note that the range of the sigmoid function $\sigma$ is $(0,1)$. Hence, for any $u\in\mathbb{R}$, 
    \begin{align}
        \frac{d}{du}f(u)=2\sigma(A_i'+u)\big[1-\sigma(A_i'+u)\big]\big[\sigma(A_i'+u)-\sigma(A_i)\big]\in \Big(-\frac{1}{2},\frac{1}{2}\Big).
    \end{align}
    Therefore, 
    \begin{align}
        f(0)-f(y_i\xi_i)\le|f(y_i\xi_i)-f(0)|\le\frac{1}{2}|y_i\xi_i|=\frac{1}{2}|\xi_i|,\nonumber
    \end{align}
    which implies Eq. \eqref{eq:xi*_out}.
\end{proof}
\begin{lemma}\label{lemma:pi_ratio}
    For any $x\in\mathcal{X}$, $a\in\mathcal{A}$ and $r,r'\in\mathcal{R}$, the policies $\pi_r$, $\pi_{r'}$ defined by the analytical solution \eqref{eq:pi_r} satisfy
    \begin{align}
        \Big|\log\frac{\pi_{r'}(a|x)}{\pi_r(a|x)}\Big|\le \frac{2\|r'-r\|_{\infty}}{\beta},\label{eq:pi_ratio}
    \end{align}
    where $\|r'-r\|_{\infty}=\sup_{x\in\mathcal{X},a\in\mathcal{A}}|r'(x,a)-r(x,a)|$. 
\end{lemma}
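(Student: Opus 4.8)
The plan is to substitute the closed form \eqref{eq:pi_r} for $\pi_r$ and $\pi_{r'}$ directly into the log-ratio and let the common factors cancel. Since $\pi_r(a|x)=\frac{\pi_{\rm ref}(a|x)}{Z_r(x)}\exp[(r(x,a)-\omega|a|)/\beta]$, the factors $\pi_{\rm ref}(a|x)$ and $\exp(-\omega|a|/\beta)$ are shared by $\pi_{r'}(a|x)$ and $\pi_r(a|x)$, so that
\begin{align}
\log\frac{\pi_{r'}(a|x)}{\pi_r(a|x)}=\frac{r'(x,a)-r(x,a)}{\beta}+\log\frac{Z_r(x)}{Z_{r'}(x)}.\nonumber
\end{align}
First I would bound the first term by $\|r'-r\|_{\infty}/\beta$, which is immediate from the definition of the sup-norm.

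For the second term, I would compare the two normalization constants termwise. From $r'(x,a')\le r(x,a')+\|r'-r\|_{\infty}$ for every $a'\in\mathcal{A}$ together with monotonicity of $\exp$, each summand of $Z_{r'}(x)$ is at most $e^{\|r'-r\|_{\infty}/\beta}$ times the corresponding summand of $Z_r(x)$, hence $Z_{r'}(x)\le e^{\|r'-r\|_{\infty}/\beta}Z_r(x)$; the symmetric inequality $r(x,a')\le r'(x,a')+\|r'-r\|_{\infty}$ gives $Z_{r'}(x)\ge e^{-\|r'-r\|_{\infty}/\beta}Z_r(x)$. Therefore $\big|\log(Z_r(x)/Z_{r'}(x))\big|\le\|r'-r\|_{\infty}/\beta$, and applying the triangle inequality to the displayed identity yields the claimed bound $2\|r'-r\|_{\infty}/\beta$, uniformly in $x$ and $a$.

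There is essentially no obstacle: this is a short direct computation requiring no additional assumptions (if $\|r'-r\|_{\infty}=\infty$ the bound is trivial, and under Assumption~\ref{assum:R} it is automatically finite). The one point worth flagging is that the length-penalty terms $\omega|a|$ cancel cleanly, both in the policy ratio and inside each $Z_r(x)$, so the verbosity regularizer leaves this estimate identical to the classical $\omega=0$ case.
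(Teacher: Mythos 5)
Your proposal is correct and follows essentially the same route as the paper: both decompose the ratio into the per-response exponential factor (bounded by $e^{\pm\|r'-r\|_{\infty}/\beta}$) and the ratio of normalization constants $Z_{r'}(x)/Z_r(x)$ (bounded termwise by the same factor), then combine the two to get the factor of $2$. The only cosmetic difference is that you work additively with logarithms and the triangle inequality while the paper works multiplicatively with interval containments.
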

\begin{proof}
    Note that for any $x\in\mathcal{X}$, $a'\in\mathcal{A}$ and $r,r'\in\mathcal{R}$, we have
    \begin{align}
        \frac{\pi_{\rm ref}(a'|x)\exp\big[\frac{r'(x,a')-\omega|a'|}{\beta}\big]}{\pi_{\rm ref}(a'|x)\exp\big[\frac{r(x,a')-\omega|a'|}{\beta}\big]}=&\exp\Big[\frac{r'(x,a')-r(x,a')}{\beta}\Big]\nonumber\\
        \in&\big[\exp(-\|r'-r\|_{\infty}/\beta),\exp(\|r'-r\|_{\infty}/\beta)\big].\nonumber
    \end{align}
    Therefore, 
    \begin{align}
        \frac{Z_{r'}(x)}{Z_{r}(x)}=&\frac{\sum_{a'\in\mathcal{A}}\pi_{\rm ref}(a'|x)\exp\big[\frac{r'(x,a')-\omega|a'|}{\beta}\big]}{\sum_{a'\in\mathcal{A}}\pi_{\rm ref}(a'|x)\exp\big[\frac{r(x,a')-\omega|a'|}{\beta}\big]}\nonumber\\
        \in&\big[\exp(-\|r'-r\|_{\infty}/\beta),\exp(\|r'-r\|_{\infty}/\beta)\big].\nonumber
    \end{align}
    As a result, 
    \begin{align}
        \frac{\pi_{r'}(a|x)}{\pi_r(a|x)}=&\Big(\frac{Z_{r'}(x)}{Z_{r}(x)}\Big)^{-1}\frac{\pi_{\rm ref}(a'|x)\exp\big[\frac{r'(x,a')-\omega|a'|}{\beta}\big]}{\pi_{\rm ref}(a'|x)\exp\big[\frac{r(x,a')-\omega|a'|}{\beta}\big]}\nonumber\\
        \in&\big[\exp(-2\|r'-r\|_{\infty}/\beta),\exp(2\|r'-r\|_{\infty}/\beta)\big]
    \end{align}
    which directly implies Eq. \eqref{eq:pi_ratio}.
\end{proof}
We slightly adjust Theorem 13.2 of \citep{zhang2023mathematical} as follows, by using filtration $\mathcal{F}_t=\emptyset$ (so the conditional expectation becomes the total expectation), replacing $-\xi_i$ with $Z_i$, and negating the small probability event. 
\begin{lemma}\label{lemma:concentration}
    Consider random variables $\{Z_i\}_{i=0}^N$. For any $\delta\in(0,1)$ and $\lambda'>0$, the following inequality holds simultaneously for all $n=1,2,\ldots,N$ with probability at least $1-\delta$.
    \begin{align}
        \sum_{i=1}^nZ_i\le \frac{\log(1/\delta)}{\lambda'}+\frac{1}{\lambda'}\sum_{i=1}^n\log\mathbb{E}[\exp(\lambda' Z_i)].\nonumber
    \end{align}
\end{lemma}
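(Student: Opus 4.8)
The plan is to reproduce the exponential-supermartingale argument underlying Theorem 13.2 of \citep{zhang2023mathematical}, in the specialization indicated before the lemma. Write $S_n\overset{\rm def}{=}\sum_{i=1}^n Z_i$ and let $\mathcal{F}_n\overset{\rm def}{=}\sigma(Z_1,\ldots,Z_n)$ denote the natural filtration with $\mathcal{F}_0$ trivial. First I would introduce the normalized exponential process
\[
M_n\overset{\rm def}{=}\exp\Big(\lambda' S_n-\sum_{i=1}^n\log\mathbb{E}\big[\exp(\lambda' Z_i)\,\big|\,\mathcal{F}_{i-1}\big]\Big),\qquad M_0=1,
\]
which is nonnegative and $\mathcal{F}_n$-adapted. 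The first key step is to verify that $M_n$ is a martingale: forming the ratio $M_n/M_{n-1}$ isolates the single factor $\exp(\lambda' Z_n)/\mathbb{E}[\exp(\lambda' Z_n)\mid\mathcal{F}_{n-1}]$, whose conditional expectation given $\mathcal{F}_{n-1}$ is exactly $1$ by construction, so that $\mathbb{E}[M_n\mid\mathcal{F}_{n-1}]=M_{n-1}$ and in particular $\mathbb{E}[M_n]=M_0=1$ for every $n$.

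The second step converts this pointwise mean-one property into a guarantee that is uniform over $n=1,\ldots,N$. For this I would invoke Ville's (Doob's) maximal inequality for the nonnegative supermartingale $M_n$: since $\mathbb{E}[M_0]=1$, we have $\mathbb{P}\big(\max_{1\le n\le N}M_n\ge 1/\delta\big)\le\delta$. Hence on the complementary event, of probability at least $1-\delta$, the bound $M_n<1/\delta$ holds simultaneously for all $n\le N$. Taking logarithms of $M_n<1/\delta$, dividing by $\lambda'>0$, and moving the cumulant term to the right-hand side yields
\[
S_n<\frac{\log(1/\delta)}{\lambda'}+\frac{1}{\lambda'}\sum_{i=1}^n\log\mathbb{E}\big[\exp(\lambda' Z_i)\,\big|\,\mathcal{F}_{i-1}\big]
\]
on that event, for all $n$ at once, which is the claimed inequality up to weakening $<$ to $\le$. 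Finally I would specialize the filtration exactly as in the stated adjustment: collapsing each $\mathcal{F}_{i-1}$ to the trivial $\sigma$-algebra reduces every conditional moment generating function $\mathbb{E}[\exp(\lambda' Z_i)\mid\mathcal{F}_{i-1}]$ to the total one $\mathbb{E}[\exp(\lambda' Z_i)]$, recovering the displayed form of the lemma.

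I expect the main obstacle to be the second step rather than the algebra: one must obtain the \emph{time-uniform} conclusion through the maximal inequality rather than a fixed-$n$ Markov bound, since a naive union bound over $n$ would both lose a factor and, absent the martingale structure, fail to control the joint behavior of the partial sums. The crux is therefore recognizing that the martingale property established in the first step is precisely what licenses Ville's inequality and hence the ``simultaneously for all $n$'' statement. A secondary point to state carefully is that the very construction of $M_n$, and thus the total-expectation form of the bound, rests on the cumulant factors being conditional moment generating functions; the stipulated filtration is what makes these coincide with the unconditional expectations appearing in the display.
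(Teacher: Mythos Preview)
Your proposal is correct and follows precisely the standard exponential-supermartingale/Ville argument behind Theorem~13.2 of \citep{zhang2023mathematical}, which is exactly what the paper invokes; the paper itself gives no independent proof but simply cites that theorem with the indicated cosmetic substitutions. Your write-up is therefore more detailed than the paper's treatment while taking the same route.
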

\begin{lemma}\label{lemma:likdiff}
    Fix $\epsilon>0$, $\lambda\in[\sigma(R),1]$ and $\delta\in(0,1)$. Under Assumption \ref{assum:offline_data}, the following bound holds for any $r\in\mathcal{R}$ and $\xi_r=[\xi_{r,1},\ldots,\xi_{r,N}]\in\mathbb{R}^N$ (given by Eq. \eqref{eq:xi_r}) simultaneously with probability at least $1-\delta$. 
    \begin{align}
        &\mathcal{L}_{N,\lambda}(r^*,\xi^*)-\mathcal{L}_{N,\lambda}(r,\xi_r)\le\frac{2}{N}\Big[\|\xi^*\|_1+\log\Big(\frac{|\mathcal{N}_{\epsilon}(\mathcal{R})|}{\delta}\Big)\Big]-\frac{E_r^2}{2(3+e^R)^2}+7\epsilon,\label{eq:likdiff}
    \end{align}
    where $E_r:=\sqrt{\mathbb{E}_{\mathcal{D}}\big|r^*(x_1,a_1^{w})-r^*(x_1,a_1^{\ell})-r(x_1,a_1^{w})+r(x_1,a_1^{\ell})\big|^2}$ and $\mathcal{N}_{\epsilon}(\mathcal{R})$ is a finite $\epsilon$-cover of $\mathcal{R}$, that is, for any $r\in\mathcal{R}$, there exists $r^{\dag}\in\mathcal{N}_{\epsilon}(\mathcal{R})$ satisfying $\|r^{\dag}-r\|_{\infty}\le\epsilon$.  
\end{lemma}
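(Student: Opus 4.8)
The plan is to separate the target quantity into a piece that the noise regularizer neutralizes, a piece controlled by the covering number of $\mathcal R$, and a martingale-concentration piece carrying a Hellinger-type information bound. Throughout write $A_i(r):=r(x_i,a_i^w)-r(x_i,a_i^{\ell})$ and $A_i^*:=A_i(r^*)$, and observe that under Assumption~\ref{assum:offline_data} one has $A_i(r)=y_i\big(r(x_i,a_i^{(1)})-r(x_i,a_i^{(-1)})\big)$, so the label $y_i$ enters only through a sign. First I would dispose of the \emph{estimated} noise $\xi_r$: since $\xi_r$ is the analytical minimizer of Eq.~\eqref{eq:xi_r}, Lemma~\ref{lemma:xir_out} gives $-\log\sigma[A_i(r)+y_i\xi_{r,i}]+\lambda|\xi_{r,i}|\ge-\log\sigma[A_i(r)]+(\lambda-\sigma(R))|\xi_{r,i}|\ge-\log\sigma[A_i(r)]$, where the last inequality uses $\lambda\ge\sigma(R)$. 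Averaging over $i$ yields $\mathcal L_{N,\lambda}(r,\xi_r)\ge-\tfrac1N\sum_i\log\sigma[A_i(r)]$, hence
\[
\mathcal L_{N,\lambda}(r^*,\xi^*)-\mathcal L_{N,\lambda}(r,\xi_r)\ \le\ \frac1N\sum_{i=1}^N\log\frac{\sigma[A_i(r)]}{\sigma[A_i^*+y_i\xi_i^*]}\ +\ \frac1N\|\xi^*\|_1,
\]
using $\lambda\le1$ on the regularizer term. It remains to bound the averaged log-ratio uniformly over $r\in\mathcal R$, for which I would pass to the $\epsilon$-net.

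Fix $r^{\dag}\in\mathcal N_{\epsilon}(\mathcal R)$ with $\|r-r^{\dag}\|_{\infty}\le\epsilon$ (we may clamp the cover points into $[0,R]^{\mathcal X\times\mathcal A}$, so $A_i(r^\dag)\in[-R,R]$, without increasing distances to $\mathcal R$). Since $|\tfrac{d}{dv}\log\sigma(v)|\le1$ and $|A_i(r)-A_i(r^{\dag})|\le2\epsilon$, replacing $A_i(r)$ by $A_i(r^{\dag})$ inside the log-ratio costs at most $+2\epsilon$. Now apply Lemma~\ref{lemma:concentration} with $Z_i=\log\frac{\sigma[A_i(r^{\dag})]}{\sigma[A_i^*+y_i\xi_i^*]}$, $\lambda'=\tfrac12$, and a union bound over the finitely many $r^{\dag}\in\mathcal N_{\epsilon}(\mathcal R)$ (failure budget $\delta/|\mathcal N_{\epsilon}(\mathcal R)|$ each); combined with the covering step this gives the final bound for \emph{every} $r\in\mathcal R$ (and its $\xi_r$) simultaneously on one event. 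Conditioning on $(x_i,a_i^{(1)},a_i^{(-1)})$ and averaging $y_i$ over the corrupted Bradley--Terry law, $\mathbb E[e^{Z_i/2}]=\mathbb E\big[\sum_{y}\sqrt{p_i(y)q_i(y)}\big]=1-\tfrac12\mathbb E[H_i^2]$, where $p_i$ and $q_i$ are the Bernoulli laws of $y_i$ under the noiseless reward $r^{\dag}$ and under the true $(r^*,\xi^*)$, and $H_i^2$ is their squared Hellinger distance. Using $\log(1-u)\le-u$, Lemma~\ref{lemma:concentration} yields, with probability at least $1-\delta$, $\tfrac1N\sum_iZ_i\le\tfrac2N\log\frac{|\mathcal N_{\epsilon}(\mathcal R)|}{\delta}-\tfrac1N\sum_i\mathbb E[H_i^2]$.

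The crux is the lower bound on $\mathbb E[H_i^2]$, and this is where the \emph{true} noise must be handled. The elementary Bernoulli inequality $H^2(\mathrm{Ber}(p),\mathrm{Ber}(q))\ge(p-q)^2$ gives $H_i^2\ge(p_i-q_i)^2$, and the sign-only dependence of $A_i$ on $y_i$ makes $|p_i-q_i|=\big|\sigma[A_i^*+y_i\xi_i^*]-\sigma[A_i(r^{\dag})]\big|$. Now invoke Lemma~\ref{lemma:xi*_out} with $r'=r^*$, $\xi_i=\xi_i^*$, $r=r^{\dag}$ to peel off the true noise: $(p_i-q_i)^2\ge\{\sigma[A_i^*]-\sigma[A_i(r^{\dag})]\}^2-\tfrac12|\xi_i^*|$. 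The point is that $A_i^*+y_i\xi_i^*$ need not lie in $[-R,R]$, so $\sigma[A_i^*+y_i\xi_i^*]$ cannot be controlled pointwise; but $u\mapsto(\sigma(A+u)-\sigma(B))^2$ has derivative bounded by $\tfrac12$ on all of $\mathbb R$, which is exactly what Lemma~\ref{lemma:xi*_out} exploits, and this peeling is legitimate only \emph{after} the concentration inequality has collapsed the moment generating function to a distance between the candidate law and the true law. Since $A_i^*,A_i(r^{\dag})\in[-R,R]$ (Assumption~\ref{assum:R} and the clamping), Lemma~\ref{lemma:sigma_diff} upgrades this to $(p_i-q_i)^2\ge (A_i^*-A_i(r^{\dag}))^2/(3+e^R)^2-\tfrac12|\xi_i^*|$; taking the expectation and recalling $E_{r^{\dag}}^2=\mathbb E_{\mathcal D}|A_1^*-A_1(r^{\dag})|^2$ gives $\mathbb E[H_i^2]\ge E_{r^{\dag}}^2/(3+e^R)^2-\tfrac12|\xi_i^*|$.

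Assembling: summing over $i$, $\tfrac1N\sum_i\mathbb E[H_i^2]\ge E_{r^{\dag}}^2/(3+e^R)^2-\tfrac1{2N}\|\xi^*\|_1$; then $E_{r^{\dag}}\ge E_r-2\epsilon$ together with $E_r\le2R$ and $8R\le(3+e^R)^2$ turn $E_{r^{\dag}}^2/(3+e^R)^2$ into $E_r^2/(2(3+e^R)^2)$ up to an $O(\epsilon)$ loss; collecting the $2\epsilon$ from covering the log-ratio and the $O(\epsilon)$ from replacing $E_{r^{\dag}}$ by $E_r$ (all comfortably below $7\epsilon$), and the $\tfrac1N\|\xi^*\|_1+\tfrac1{2N}\|\xi^*\|_1\le\tfrac2N\|\xi^*\|_1$ from the regularizer and the Hellinger correction, one recovers Eq.~\eqref{eq:likdiff}. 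I expect the main obstacle to be precisely the true-noise handling just described: $\xi^*$ is unbounded and lives inside the data distribution, so the standard pointwise sigmoid-Lipschitz estimates fail; the remedy is to expose $\sigma[\cdot+y_i\xi_i^*]$ only through squared differences against the candidate reward (where the derivative is globally $\le\tfrac12$, cf.\ Lemma~\ref{lemma:xi*_out}) and to do this only after concentration, in deliberate contrast to the estimated noise $\xi_r$, which must be removed \emph{before} concentration so that its $\sigma(R)|\xi_{r,i}|$ error is swallowed by the $\lambda|\xi_{r,i}|$ regularizer via $\lambda\ge\sigma(R)$.
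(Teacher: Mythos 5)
Your proposal is correct and follows essentially the same route as the paper's proof: eliminate the estimated noise $\xi_r$ first via Lemma \ref{lemma:xir_out} and $\lambda\ge\sigma(R)$, apply the exponential concentration inequality (Lemma \ref{lemma:concentration}) to the half-log-likelihood-ratio so that the moment generating function collapses to a Bhattacharyya/Hellinger quantity, peel off the true noise $\xi^*$ only afterwards via Lemma \ref{lemma:xi*_out}, and finish with Lemma \ref{lemma:sigma_diff} and the $\epsilon$-net. The only (harmless) deviations are cosmetic: you perform the covering step before concentration on the already-noise-free log-ratio (costing $2\epsilon$ and sidestepping the paper's $6\epsilon$ Lipschitz argument for $r\mapsto\xi_r$), and you use the direct Bernoulli bound $H^2\ge(p-q)^2$ in place of the paper's Lemma 12.2 of \citep{Harsha2011Lecture}; your constant accounting still lands within the stated $\frac{2}{N}\|\xi^*\|_1$, $\frac{1}{2(3+e^R)^2}E_r^2$, and $7\epsilon$ budgets.
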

\begin{proof}
    Based on Assumption \ref{assum:offline_data}, given $(x_i, a_i^{(1)}, a_i^{(-1)})$, the target label $y\in\{-1,1\}$ as well as the underlying reward $r$ and noise $\xi_i$, the event $y_i=y$ occurs with the following probability.
    \begin{align}\label{eq:py}
        p_{r,\xi_i}(y|x_i, a_i^{(1)}, a_i^{(-1)})=&
        \begin{cases}
        \sigma[r(x_i,a_i^{(1)})-r(x_i,a_i^{(-1)})+\xi_{i}],&y=1\\
        \sigma[r(x_i,a_i^{(-1)})-r(x_i,a_i^{(1)})-\xi_{i}],&y=-1.
        \end{cases}
    \end{align}    
    By merging the two cases above, we have
    \begin{align}
        p_{r,\xi_i}(y_i|x_i, a_i^{(1)}, a_i^{(-1)})=\sigma[r(x_i,a_i^{w})-r(x_i,a_i^{\ell})+y_i\xi_{i}].\label{eq:pyi}
    \end{align}
    Define the following random variables for $r\in\mathcal{R}$ and $i=1,\ldots,N$.
    \begin{align}
        Z_i(r)=\frac{1}{2}\log\frac{\sigma[r(x_i,a_i^w)-r(x_i,a_i^{\ell})]}{\sigma[r^*(x_i,a_i^w)-r^*(x_i,a_i^{\ell})+y_i\xi_i^*]}=\frac{1}{2}\log\frac{p_{r,0}(y_i|x_i, a_i^{(1)}, a_i^{(-1)})}{p_{r^*,\xi_i^*}(y_i|x_i, a_i^{(1)}, a_i^{(-1)})}.\label{eq:Xir}
    \end{align}
    Then the following inequality holds for finitely many $r\in\mathcal{N}_{\epsilon}(\mathcal{R})$ simultaneously with probability at least $1-\delta$. 
    \begin{align}
        &\mathcal{L}_{N,\lambda}(r^*,\xi^*)-\mathcal{L}_{N,\lambda}(r,\xi_r)\nonumber\\
        =&\frac{1}{N}\!\sum_{i=1}^N\!\big\{\!\log\!\sigma[r(x_i,a_i^w)\!-\!r(x_i,a_i^{\ell})\!+\!y_i\xi_{r,i}]\!-\!\log\!\sigma[r^*(x_i,a_i^w)\!-\!r^*(x_i,a_i^{\ell})\!+\!y_i\xi_i^*]\!+\!\lambda(|\xi_i^*|\!-\!|\xi_{r,i}|)\big\}\nonumber\\
        \overset{(a)}{\le}&\frac{1}{N}\sum_{i=1}^N \big\{\log\sigma[r(x_i,a_i^w)-r(x_i,a_i^{\ell})]+\sigma(R)|\xi_{r,i}|-\log\sigma[r^*(x_i,a_i^w)-r^*(x_i,a_i^{\ell})+y_i\xi_i^*]\nonumber\\
        &+\lambda(|\xi_i^*|-|\xi_{r,i}|)\big\}\nonumber\\
        \overset{(b)}{\le}&\frac{1}{N}\sum_{i=1}^N \big[|\xi_i^*|+2Z_i(r)\big]\nonumber\\
        \overset{(c)}{\le}&\frac{1}{N}\sum_{i=1}^N\big\{|\xi_i^*|+2\log\mathbb{E}_{\mathcal{D}}\big[\exp[Z_i(r)]\big]\big\}+\frac{2}{N}\log\Big(\frac{|\mathcal{N}_{\epsilon}(\mathcal{R})|}{\delta}\Big)\nonumber\\
        \overset{(d)}{=}&\frac{2}{N}\sum_{i=1}^N\log\mathbb{E}_{\mathcal{D}}\Bigg\{\mathbb{E}_{y_i\sim p_{r^*,\xi_i^*}(\cdot|x_i, a_i^{(1)}, a_i^{(-1)})}\Bigg[\sqrt{\frac{p_{r,0}(y_i|x_i, a_i^{(1)}, a_i^{(-1)})}{p_{r^*,\xi_i^*}(y_i|x_i, a_i^{(1)}, a_i^{(-1)})}}\Bigg|x_i,a_i^{(1)},a_i^{(-1)}\Bigg]\Bigg\}\nonumber\\
        &+\frac{1}{N}\Big[\|\xi^*\|_1+2\log\Big(\frac{|\mathcal{N}_{\epsilon}(\mathcal{R})|}{\delta}\Big)\Big]\nonumber\\
        \overset{(e)}{\le}&\frac{2}{N}\sum_{i=1}^N\mathbb{E}_{\mathcal{D}}\Bigg[\sum_{y\in\{-1,1\}}\sqrt{p_{r,0}(y|x_i, a_i^{(1)}, a_i^{(-1)})p_{r^*,\xi_i^*}(y|x_i, a_i^{(1)}, a_i^{(-1)})}-1\Bigg]\nonumber\\
        &+\frac{1}{N}\Big[\|\xi^*\|_1+2\log\Big(\frac{|\mathcal{N}_{\epsilon}(\mathcal{R})|}{\delta}\Big)\Big]\nonumber\\
        =&-\frac{1}{N}\sum_{i=1}^N\mathbb{E}_{\mathcal{D}}\Bigg[\sum_{y\in\{-1,1\}}\Big|\sqrt{p_{r,0}(y|x_i, a_i^{(1)}, a_i^{(-1)})}-\sqrt{p_{r^*,\xi_i^*}(y|x_i, a_i^{(1)}, a_i^{(-1)})}\Big|^2\Bigg]\nonumber\\
        &+\frac{1}{N}\Big[\|\xi^*\|_1+2\log\Big(\frac{|\mathcal{N}_{\epsilon}(\mathcal{R})|}{\delta}\Big)\Big]\nonumber\\
        \overset{(f)}{\le}&-\frac{1}{4N}\sum_{i=1}^N\mathbb{E}_{\mathcal{D}}\Bigg[\sum_{y\in\{-1,1\}}\big|p_{r,0}(y|x_i, a_i^{(1)}, a_i^{(-1)})-p_{r^*,\xi_i^*}(y|x_i, a_i^{(1)}, a_i^{(-1)})\big|^2\Bigg]\nonumber\\
        &+\frac{1}{N}\Big[\|\xi^*\|_1+2\log\Big(\frac{|\mathcal{N}_{\epsilon}(\mathcal{R})|}{\delta}\Big)\Big]\nonumber\\
        \overset{(g)}{=}&-\frac{1}{2N}\sum_{i=1}^N\mathbb{E}_{\mathcal{D}}\big\{\sigma[r^*(x_i,a_i^{w})-r^*(x_i,a_i^{\ell})+y_i\xi_i^*]-\sigma[r(x_i,a_i^{w})-r(x_i,a_i^{\ell})]\big\}^2\nonumber\\
        &+\frac{1}{N}\Big[\|\xi^*\|_1+2\log\Big(\frac{|\mathcal{N}_{\epsilon}(\mathcal{R})|}{\delta}\Big)\Big]\nonumber\\
        \overset{(h)}{\le}&-\frac{1}{2N}\sum_{i=1}^N\Big\{\mathbb{E}_{\mathcal{D}}\big\{\sigma[r^*(x_i,a_i^{w})-r^*(x_i,a_i^{\ell})]-\sigma[r(x_i,a_i^{w})-r(x_i,a_i^{\ell})]\big\}^2-\frac{1}{2}|\xi_i^*|\Big\}\nonumber\\
        &+\frac{1}{N}\Big[\|\xi^*\|_1+2\log\Big(\frac{|\mathcal{N}_{\epsilon}(\mathcal{R})|}{\delta}\Big)\Big]\nonumber\\
        \overset{(i)}{\le}&-\frac{1}{2(3+e^R)^2}\mathbb{E}_{\mathcal{D}}\big|r^*(x_1,a_1^{w})-r^*(x_1,a_1^{\ell})-r(x_1,a_1^{w})+r(x_1,a_1^{\ell})\big|^2\nonumber\\
        &+\frac{2}{N}\Big[\|\xi^*\|_1+\log\Big(\frac{|\mathcal{N}_{\epsilon}(\mathcal{R})|}{\delta}\Big)\Big]\nonumber\\
        \overset{(j)}{=}&\frac{2}{N}\Big[\|\xi^*\|_1+\log\Big(\frac{|\mathcal{N}_{\epsilon}(\mathcal{R})|}{\delta}\Big)\Big]-\frac{E_r^2}{2(3+e^R)^2},\label{eq:likdiff_centers}
    \end{align}
    where (a) uses Eq. \eqref{eq:xir_out} from Lemma \ref{lemma:xir_out}, (b) uses Eq. \eqref{eq:Xir} and $\sigma(R)\le\lambda\le 1$, (c) denotes $\mathbb{E}_{\mathcal{D}}$ as the expectation under Assumption \ref{assum:offline_data} and (c) holds for finitely many $r\in\mathcal{N}_{\epsilon}(\mathcal{R})$ simultaneously with probability at least $1-\delta$ (by Lemma \ref{lemma:concentration} with $\lambda'=1$), (d) uses Eq. \eqref{eq:Xir} and Assumption \ref{assum:offline_data}, (e) uses $\log v \le v-1$ for any $v>0$, (f) uses Lemma 12.2 of \citep{Harsha2011Lecture}, (g) uses Eq. \eqref{eq:pyi}, (h) uses Lemma \ref{lemma:xi*_out}, (i) uses Lemma \ref{lemma:sigma_diff} as well as the fact that the $N$ samples $\{x_i,a_i^w,a_i^{\ell}\}_{i=1}^N$ are i.i.d., (j) denotes $E_r:=\sqrt{\mathbb{E}_{\mathcal{D}}\big|r^*(x_1,a_1^{w})-r^*(x_1,a_1^{\ell})-r(x_1,a_1^{w})+r(x_1,a_1^{\ell})\big|^2}$.  

    We have proved that with probability at least $1-\delta$, the event $\mathcal{E}:=\{$Eq. \eqref{eq:likdiff_centers} holds for all $r\in\mathcal{N}_{\epsilon}(\mathcal{R})$ simultaneously$\}$ occurs. We will extend the range to any $r\in\mathcal{R}$. By the definition of the $\epsilon$ cover $\mathcal{N}_{\epsilon}(\mathcal{R})$, there exists at least one $r^{\dag}\in\mathcal{N}_{\epsilon}(\mathcal{R})$ such that $\|r^{\dag}-r\|_{\infty}\le\epsilon$. Therefore, 
    \begin{align}
        &\big|\mathcal{L}_{N,\lambda}(r,\xi_{r})-\mathcal{L}_{N,\lambda}(r^{\dag},\xi_{r^{\dag}})\big|\nonumber\\
        \overset{(a)}{=}&\Big|\frac{1}{N}\sum_{i=1}^N\big\{\!\log\sigma[r^{\dag}(x_i,a_i^w)-r^{\dag}(x_i,a_i^{\ell})+\xi_{r^{\dag},i}]-\log\sigma[r(x_i,a_i^w)-r(x_i,a_i^{\ell})+\xi_{r,i}]\big\}\nonumber\\
        &+\frac{\lambda}{N}(\|\xi_r\|_1-\|\xi_{r^{\dag}}\|_1)\Big|\nonumber\\
        \overset{(b)}{\le}&\frac{1}{N}\sum_{i=1}^N\Big[\big|[r^{\dag}(x_i,a_i^w)-r^{\dag}(x_i,a_i^{\ell})+\xi_{r^{\dag},i}]-[r(x_i,a_i^w)-r(x_i,a_i^{\ell})+\xi_{r,i}]\big|+\lambda(|\xi_{r,i}|-|\xi_{r^{\dag},i}|)\Big]\nonumber\\
        \le&\frac{1}{N}\sum_{i=1}^N\Big[\big|r^{\dag}(x_i,a_i^w)-r(x_i,a_i^w)\big|+\big|r(x_i,a_i^{\ell})-r^{\dag}(x_i,a_i^{\ell})\big|+\big|\xi_{r^{\dag},i}-\xi_{r,i}\big|+\lambda(|\xi_{r,i}-\xi_{r^{\dag},i}|)\Big]\nonumber\\
        \overset{(c)}{\le}&\frac{1}{N}\sum_{i=1}^N\Big[\big|r^{\dag}(x_i,a_i^w)-r(x_i,a_i^w)\big|+\big|r(x_i,a_i^{\ell})-r^{\dag}(x_i,a_i^{\ell})\big|\nonumber\\
        &+(\lambda+1)\big|r(x_i,a_i^{\ell})-r(x_i,a_i^w)-[r^{\dag}(x_i,a_i^{\ell})-r^{\dag}(x_i,a_i^w)]\big|\Big]\overset{(d)}{\le}6\epsilon,\label{eq:lik_lip}
    \end{align}
    where (a) uses the definition of $\mathcal{L}_{N,\lambda}$ given by Eq. \eqref{eq:lik_corrupted}, (b) uses triangle inequality and $\frac{d}{dv}[\log\sigma(v)]=\sigma(-v)\in [0,1]$ for any $v\in\mathcal{R}$, (c) uses the property that $\xi_{r,i}$ defined by Eq. \eqref{eq:xi_r} is a 1-Lipschitz continuous function of $r(x_i,a_i^{\ell})-r(x_i,a_i^w)$ (since $\max(\cdot,0)$ is 1-Lipschitz continuous), (d) uses $\|r^{\dag}-r\|_{\infty}\le\epsilon$ and $\lambda\le 1$. Under the event $\mathcal{E}$, Eq. \eqref{eq:likdiff_centers} holds with $r$ replaced by $r^+$, which along with Eq. \eqref{eq:lik_lip} implies the following inequality. 
    \begin{align}
        &\mathcal{L}_{N,\lambda}(r^*,\xi^*)-\mathcal{L}_{N,\lambda}(r,\xi_r) \nonumber\\
        \le& [\mathcal{L}_{N,\lambda}(r^{\dag},\xi_{r^{\dag}})-\mathcal{L}_{N,\lambda}(r,\xi_{r})] + [\mathcal{L}_{N,\lambda}(r^*,\xi^*)-\mathcal{L}_{N,\lambda}(r^{\dag},\xi_{r^{\dag}})]\nonumber\\
        \le& 6\epsilon+\frac{2}{N}\Big[\|\xi^*\|_1+\log\Big(\frac{|\mathcal{N}_{\epsilon}(\mathcal{R})|}{\delta}\Big)\Big]-\frac{E_{r^{\dag}}^2}{2(3+e^R)^2} \nonumber\\
        =& 6\epsilon+\frac{2}{N}\Big[\|\xi^*\|_1+\log\Big(\frac{|\mathcal{N}_{\epsilon}(\mathcal{R})|}{\delta}\Big)\Big]-\frac{E_{r^{\dag}}^2-E_r^2}{2(3+e^R)^2}-\frac{E_r^2}{2(3+e^R)^2}\nonumber\\
        \overset{(a)}{\le}& 6\epsilon+\frac{2}{N}\Big[\|\xi^*\|_1+\log\Big(\frac{|\mathcal{N}_{\epsilon}(\mathcal{R})|}{\delta}\Big)\Big]+\frac{4R\epsilon}{(3+e^R)^2}-\frac{E_r^2}{2(3+e^R)^2}\nonumber\\
        \overset{(b)}{\le}& 7\epsilon+\frac{2}{N}\Big[\|\xi^*\|_1+\log\Big(\frac{|\mathcal{N}_{\epsilon}(\mathcal{R})|}{\delta}\Big)\Big]-\frac{E_r^2}{2(3+e^R)^2},
    \end{align}
    which proves Eq. \eqref{eq:likdiff}. Here, (a) uses the following inequality and (b) uses $(3+e^R)^2>6e^R+e^{2R}>6R+2R=8R$.
    \begin{align}
        &|E_{r^{\dag}}^2-E_r^2|\nonumber\\
        =&\Big|\mathbb{E}_{\mathcal{D}}\big\{\big[r^*(x_1,a_1^{w})-r^*(x_1,a_1^{\ell})-r^{\dag}(x_1,a_1^{w})+r^{\dag}(x_1,a_1^{\ell})\big]^2\big\}\nonumber\\
        &-\mathbb{E}_{\mathcal{D}}\big\{\big[r^*(x_1,a_1^{w})-r^*(x_1,a_1^{\ell})-r(x_1,a_1^{w})+r(x_1,a_1^{\ell})\big]^2\big\}\Big|\nonumber\\
        =&\Big|\mathbb{E}_{\mathcal{D}}\big\{\big[r(x_1,a_1^{w})-r(x_1,a_1^{\ell})-r^{\dag}(x_1,a_1^{w})+r^{\dag}(x_1,a_1^{\ell})\big]\nonumber\\
        &\big[2r^*(x_1,a_1^{w})-2r^*(x_1,a_1^{\ell})-r^{\dag}(x_1,a_1^{w})+r^{\dag}(x_1,a_1^{\ell})-r(x_1,a_1^{w})+r(x_1,a_1^{\ell})\big]\big\}\Big|\nonumber\\
        \overset{(a)}{\le}&(2\epsilon)(4R)=8R\epsilon,\nonumber
    \end{align}
    where (a) uses Assumption \ref{assum:R} and $\|r^{\dag}-r\|_{\infty}\le\epsilon$. 
\end{proof}

\begin{lemma}\label{lemma:online_loglik}
Fixing any $\epsilon>0$, $\delta\in(0,1)$, the online dataset $\{x_i,a_i^w,a_i^{\ell},y_i\}_{i=1}^{T}$ generated from Algorithm \ref{online_alg} satisfies the following bound for all $t=1,\ldots,T$ and $\pi\in\Pi_{\mathcal{R}}\overset{\rm def}{=}\{\pi_r:r\in\mathcal{R}\}$ simultaneously with probability at least $1-\delta$. 
\begin{align}
    &\sum_{i=1}^{t}\log\frac{\sigma\big[r^{\pi}(x_i,a_i^w)-r^{\pi}(x_i,a_i^{\ell})+y_i\xi_i^{\pi}\big]}{\sigma\big[r^*(x_i,a_i^w)-r^*(x_i,a_i^{\ell})+y_i\xi_i^*\big]}\nonumber\\
    \le&2\log\Big(\frac{T|\mathcal{N}_{\epsilon}(\mathcal{R})|}{\delta}\Big)+4t\epsilon+\sum_{i=1}^{t}\Big\{\frac{1}{4}|\xi_i^*|+\sigma(R)|\xi_i^{\pi}|\nonumber\\
    &-\frac{1}{2(3+e^R)^2}\mathbb{E}_{x\sim \rho,a^{(1)}\sim \pi_i(\cdot|x),a^{(-1)}\sim \pi_{\rm ref}(\cdot|x)}\big[f_{\pi}^2(x,a^{(1)},a^{(-1)})\big]\Big\},\nonumber
\end{align}
where the function $f_{\pi}$ is defined below and $\mathcal{N}_{\epsilon}(\mathcal{R})$ is a finite $\epsilon$-cover of $\mathcal{R}$, that is, for any $r\in\mathcal{R}$, there exists $r^{\dag}\in\mathcal{N}_{\epsilon}(\mathcal{R})$ satisfying $\|r^{\dag}-r\|_{\infty}\le\epsilon$. %$\mathcal{N}_{\epsilon}(\mathcal{R})$ denotes an $\epsilon$-reward cover of $\Pi$, that is, for any $\pi\in\Pi$, there exists $\pi_{r^{\dag}}\in\mathcal{N}_{\epsilon}(\mathcal{R})$ satisfying $\|r^{\pi_{r^{\dag}}}-r^{\pi}\|_{\infty}\le\epsilon$, and the function $f_{\pi_t}$ is defined below. %$\mathbb{E}_{\mu_i}$ denotes the expectation where the $i$-th sample $x_i,a_i^w,a_i^{\ell},y_i$ is generated from Algorithm \ref{online_alg}
\begin{align}
    f_{\pi}(x,a^{(1)},a^{(-1)})\overset{\rm def}{=}r^*(x,a^{(1)})-r^*(x,a^{(-1)})-r^{\pi}(x,a^{(1)})+r^{\pi}(x,a^{(-1)}), \label{eq:f_pit}
\end{align}
\end{lemma}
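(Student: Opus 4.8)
The plan is to follow the template of Lemma~\ref{lemma:likdiff} (the offline log-likelihood bound), but to replace its i.i.d.\ concentration step by a \emph{martingale} concentration step, since the online samples $\{x_i,a_i^w,a_i^\ell,y_i\}$ are generated sequentially with $a_i^{(1)}\sim\pi_i(\cdot|x_i)$ and the round-$i$ policy $\pi_i$ depending on the history. First I would fix $\pi=\pi_r$ for $r$ ranging over a finite $\epsilon$-cover $\mathcal{N}_\epsilon(\mathcal{R})$ and peel off the estimated noise via Eq.~\eqref{eq:xipi_out} of Lemma~\ref{lemma:xir_out}: $\log\sigma[r^\pi(x_i,a_i^w)-r^\pi(x_i,a_i^\ell)+y_i\xi_i^\pi]\le\log\sigma[r^\pi(x_i,a_i^w)-r^\pi(x_i,a_i^\ell)]+\sigma(R)|\xi_i^\pi|$, which contributes the $\sum_i\sigma(R)|\xi_i^\pi|$ term. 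Writing $p_{r^\pi,0}(y_i|\cdot)=\sigma[r^\pi(x_i,a_i^w)-r^\pi(x_i,a_i^\ell)]$ and $p_{r^*,\xi_i^*}(y_i|\cdot)=\sigma[r^*(x_i,a_i^w)-r^*(x_i,a_i^\ell)+y_i\xi_i^*]$ as in Eq.~\eqref{eq:pyi}, the remaining summand becomes $2Z_i(\pi)$ with $Z_i(\pi)=\tfrac12\log\big(p_{r^\pi,0}(y_i|\cdot)/p_{r^*,\xi_i^*}(y_i|\cdot)\big)$.

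Next I would apply the martingale form of the concentration inequality (the conditional-expectation version of Theorem~13.2 of \citep{zhang2023mathematical}, of which Lemma~\ref{lemma:concentration} is the trivial-filtration special case) with $\lambda'=1$ to $Z_i(\pi)$ adapted to the filtration $\mathcal{F}_{i-1}$ generated by the first $i-1$ samples: with probability at least $1-\delta'$, $\sum_{i=1}^t Z_i(\pi)\le\log(1/\delta')+\sum_{i=1}^t\log\mathbb{E}[\exp Z_i(\pi)\mid\mathcal{F}_{i-1}]$ for a fixed $t$. The key point is that the conditional expectation averages exactly over the generation of the $i$-th sample, i.e.\ $x\sim\rho$, $a^{(1)}\sim\pi_i(\cdot|x)$, $a^{(-1)}\sim\pi_{\rm ref}(\cdot|x)$ and $y\sim p_{r^*,\xi_i^*}(\cdot|x,a^{(1)},a^{(-1)})$ (with $\pi_i$ being $\mathcal{F}_{i-1}$-measurable), so $\mathbb{E}[\exp Z_i(\pi)\mid\mathcal{F}_{i-1}]=\mathbb{E}_{x,a^{(1)},a^{(-1)}}\big[\sum_{y\in\{-1,1\}}\sqrt{p_{r^\pi,0}(y|\cdot)\,p_{r^*,\xi_i^*}(y|\cdot)}\big]$. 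Exactly as in steps (d)--(g) of Eq.~\eqref{eq:likdiff_centers} ($\log v\le v-1$, the identity $\sum_y\sqrt{pq}-1=-\tfrac12\sum_y(\sqrt p-\sqrt q)^2$, the Hellinger-to-$L^2$ bound, and reducing the two-point sum), this gives $\log\mathbb{E}[\exp Z_i(\pi)\mid\mathcal{F}_{i-1}]\le-\tfrac14\mathbb{E}_{x,a^{(1)},a^{(-1)}}\big[(\sigma[r^\pi(x,a^{(1)})-r^\pi(x,a^{(-1)})]-\sigma[r^*(x,a^{(1)})-r^*(x,a^{(-1)})+\xi_i^*])^2\big]$. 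Lemma~\ref{lemma:xi*_out} then removes the true noise $\xi_i^*$, contributing (after restoring the factor $2$ and summing over $i$) the $\tfrac14\sum_i|\xi_i^*|$ term, and Lemma~\ref{lemma:sigma_diff}, valid since $r^*(x,a^{(1)})-r^*(x,a^{(-1)})\in[-R,R]$ by Assumption~\ref{assum:R} and $r^\pi(x,a^{(1)})-r^\pi(x,a^{(-1)})\in[-R,R]$ by Lemma~\ref{lemma:rpi_diff}, lower-bounds the remaining squared sigmoid difference by $f_\pi^2(x,a^{(1)},a^{(-1)})/(3+e^R)^2$ (recall Eq.~\eqref{eq:f_pit}), producing the $-\tfrac{1}{2(3+e^R)^2}\sum_i\mathbb{E}[f_\pi^2]$ term. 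Choosing $\delta'=\delta/(T|\mathcal{N}_\epsilon(\mathcal{R})|)$, with the factor $T$ from a union bound over the rounds $t=1,\dots,T$ and the factor $|\mathcal{N}_\epsilon(\mathcal{R})|$ from a union bound over the cover, establishes the inequality for all $r\in\mathcal{N}_\epsilon(\mathcal{R})$ and $t\le T$ simultaneously.

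The last step extends the bound from the cover to all of $\Pi_{\mathcal{R}}$: for arbitrary $\pi=\pi_r$, pick $r^\dag\in\mathcal{N}_\epsilon(\mathcal{R})$ with $\|r^\dag-r\|_\infty\le\epsilon$; by Lemma~\ref{lemma:pi_ratio} the log-ratio of $\pi_{r^\dag}$ to $\pi_r$ is $O(\epsilon/\beta)$, while $r^\pi$, $\xi_i^\pi$, $f_\pi$ and $v\mapsto\log\sigma(v)$ are all Lipschitz in the relevant reward differences (exactly as in Eq.~\eqref{eq:lik_lip}; the only new term $\mathbb{E}[f_\pi^2]$ is handled because $f_\pi$ is Lipschitz in $r^\pi$ and bounded via Assumption~\ref{assum:R}), so replacing $\pi$ by $\pi_{r^\dag}$ perturbs each summand by $O(\epsilon)$, absorbed into the $4t\epsilon$ slack.

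I expect the main obstacle to be the martingale step, specifically the conditional-expectation bookkeeping: one must check that $Z_i(\pi)$ is $\mathcal{F}_i$-measurable for $\pi$ fixed from the cover, that $\mathbb{E}[\exp Z_i(\pi)\mid\mathcal{F}_{i-1}]$ genuinely equals the stated Hellinger affinity under the round-$i$ sampling law (this is the source of the $\pi_i$-dependent expectation appearing in the statement), and that a fixed-horizon concentration bound combined with a union bound over $t$ is what forces the extra $\log T$ factor absent in the offline Lemma~\ref{lemma:likdiff}. The covering-extension step is comparatively routine given Eq.~\eqref{eq:lik_lip} and Lemma~\ref{lemma:pi_ratio}.
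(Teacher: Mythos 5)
Your proposal matches the paper's proof essentially step for step: the paper defines the same half-log-ratio variable $W_i(\pi)$ (your $Z_i$), applies the concentration inequality of Lemma \ref{lemma:concentration} with $\lambda'=1$ together with a union bound over $t$ and the cover, runs the identical Hellinger-affinity chain ($\log v\le v-1$, Lemma 12.2 of \citep{Harsha2011Lecture}, then Lemmas \ref{lemma:xi*_out} and \ref{lemma:sigma_diff}), peels off $\xi_i^{\pi}$ via Eq.~\eqref{eq:xipi_out} of Lemma \ref{lemma:xir_out}, and closes with the same covering extension. Your explicit appeal to the martingale (conditional-expectation) form of Theorem 13.2 of \citep{zhang2023mathematical} is the correct reading of the paper's $\mathbb{E}_{\mu_i}$, since the round-$i$ sampling law depends on the history through $\pi_i$.
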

\begin{proof}
Define the following function. 
\begin{align}
    &q_{\pi,\xi_i}(y_i|x_i, a_i^{(1)}, a_i^{(-1)})\nonumber\\
    \overset{\rm def}{=}&
    \begin{cases}
    \sigma\Big(\beta\log\frac{\pi(a_i^{(1)}|x_i)}{\pi_{\rm ref}(a_i^{(1)}|x_i)}-\beta\log\frac{\pi(a_i^{(-1)}|x_i)}{\pi_{\rm ref}(a_i^{(-1)}|x_i)}+\omega(|a_i^{(1)}|-|a_i^{(-1)}|)+\xi_i\Big),&y_i=1\\
    \sigma\Big(\beta\log\frac{\pi(a_i^{(-1)}|x_i)}{\pi_{\rm ref}(a_i^{(-1)}|x_i)}-\beta\log\frac{\pi(a_i^{(1)}|x_i)}{\pi_{\rm ref}(a_i^{(1)}|x_i)}+\omega(|a_i^{(-1)}|-|a_i^{(1)}|)-\xi_i\Big),&y_i=-1.
    \end{cases}\nonumber\\
    =&\sigma\big[r^{\pi}(x_i,a_i^w)-r^{\pi}(x_i,a_i^{\ell})+y_i\xi_i\big],\label{eq:qyi}
\end{align}
where the second $=$ uses Eq. \eqref{eq:r_pi} and merges the above two cases. The above  $q_{\pi,\xi_i}(y_i|x_i, a_i^{(1)}, a_i^{(-1)})$ can be seen as a conditional probability of $y_i\in\{-1,1\}$ since $q_{\pi,\xi_i}(1|x_i, a_i^{(1)}, a_i^{(-1)})+q_{\pi,\xi_i}(-1|x_i, a_i^{(1)}, a_i^{(-1)})=1$. 

% Substituting Eq. \eqref{eq:pi_r} (definition of the optimal policy $\pi_r$ given $r$) into Eq. \eqref{eq:qyi}, we have
% \begin{align}
%     q_{\pi_r,\xi_i}(y_i|x_i, a_i^{(1)}, a_i^{(-1)})=\sigma[r(x_i,a_i^{w})-r(x_i,a_i^{\ell})+\xi_{i}]\overset{\rm Eq. \eqref{eq:pyi}}{=}p_{r,\xi_i}(y_i|x_i, a_i^{(1)}, a_i^{(-1)}).\label{eq:qryi}
% \end{align}

Then define the following random variables for $i=1,\ldots,T$.
\begin{align}
    W_i(\pi)=\frac{1}{2}\log\frac{\sigma\big[r^{\pi}(x_i,a_i^w)-r^{\pi}(x_i,a_i^{\ell})\big]}{\sigma\big[r^*(x_i,a_i^w)-r^*(x_i,a_i^{\ell})+y_i\xi_i^*\big]}=\frac{1}{2}\log\frac{q_{\pi,0}(y_i|x_i, a_i^{(1)}, a_i^{(-1)})}{p_{r^*,\xi_i^*}(y_i|x_i, a_i^{(1)}, a_i^{(-1)})},\label{eq:Wi}
\end{align}
where $p_{r,\xi_i}(y_i|x_i, a_i^{(1)}, a_i^{(-1)})$ is defined by Eq. \eqref{eq:pyi}. 

For any $r\in\mathcal{R}$, there exists $r^{\dag}\in\mathcal{N}_{\epsilon}(\mathcal{R})$ satisfying $\|r^{\dag}-r\|_{\infty}\le\epsilon$, and thus we can temporarily denote $r_u=ur^{\pi_{r^{\dag}}}+(1-u)r^{\pi}$ ($u\in [0,1]$). Then we obtain that 
\begin{align}
    &\Big|\frac{d}{du} \log\sigma\big[r_u(x_i,a_i^w)-r_u(x_i,a_i^{\ell})\big]\Big|\nonumber\\
    =&\sigma\big[r_u(x_i,a_i^{\ell})-r_u(x_i,a_i^w)\big]\big|r^{\pi_{r^{\dag}}}(x_i,a_i^w)-r^{\pi_{r^{\dag}}}(x_i,a_i^{\ell})-r^{\pi}(x_i,a_i^w)+r^{\pi}(x_i,a_i^{\ell})\big|\nonumber\\
    \overset{(a)}{\le}&\big|r^{\dag}(x_i,a_i^w)-r^{\dag}(x_i,a_i^{\ell})-r^{\pi}(x_i,a_i^w)+r^{\pi}(x_i,a_i^{\ell})\big|\nonumber\\
    \le&\big|r^{\dag}(x_i,a_i^w)-r^{\pi}(x_i,a_i^w)\big|+\big|r^{\pi}(x_i,a_i^{\ell})-r^{\dag}(x_i,a_i^{\ell})\big|\le2\epsilon,\label{eq:dlog_sigma}
\end{align} 
where (a) uses Eq. \eqref{eq:r_diff_eq} and $\sigma(x)\in(0,1)$ for any $x\in\mathbb{R}$. Therefore,
\begin{align}
    &|W_i(\pi_{r^{\dag}})-W_i(\pi)|\nonumber\\
    \overset{(a)}{=}&\frac{1}{2}\Big[\log q_{\pi_{r^{\dag}},0}(y_i|x_i, a_i^{(1)}, a_i^{(-1)})-\log q_{\pi,0}(y_i|x_i, a_i^{(1)}, a_i^{(-1)})\Big]\nonumber\\
    \overset{(b)}{=}&\frac{1}{2}\Big|\log \sigma\big[r^{\pi_{r^{\dag}}}(x_i,a_i^w)-r^{\pi_{r^{\dag}}}(x_i,a_i^{\ell})\big]-\log \sigma\big[r^{\pi}(x_i,a_i^w)-r^{\pi}(x_i,a_i^{\ell})\big]\Big|\nonumber\\
    \overset{(c)}{=}&\frac{1}{2}\Big|\log \sigma\big[r_1(x_i,a_i^w)-r_1(x_i,a_i^{\ell})\big]-\log \sigma\big[r_0(x_i,a_i^w)-r_0(x_i,a_i^{\ell})\big]\Big|\overset{(d)}{\le}\epsilon,\label{eq:Wi_Lip}
\end{align}
where (a) and (b) use Eq. \eqref{eq:Wi}, (c) uses the above notation that $r_u=ur^{\pi_{r^{\dag}}}+(1-u)r^{\pi}$ ($u\in [0,1]$), and (d) uses Eq. \eqref{eq:dlog_sigma}. Then based on Algorithm \ref{online_alg} and Assumption \ref{assum:offline_data}, given $(x_i, a_i^{(1)}, a_i^{(-1)})$, the label $y_i$ is generated with probability distribution $p_{r^*,\xi_i}(y_i|x_i, a_i^{(1)}, a_i^{(-1)})$ defined by Eq. \eqref{eq:pyi}. Therefore, given any $\delta\in(0,1)$ and $\epsilon>0$, by Lemma \ref{lemma:concentration} with $\lambda'=1$, the following inequality holds for $t=1,\ldots,T$ and finitely many $\pi'\in\mathcal{N}_{\epsilon}(\mathcal{R})$ simultaneously with probability at least $1-\delta$.
\begin{align}
    \sum_{i=1}^{t}W_i(\pi')\le \log\Big(\frac{T|\mathcal{N}_{\epsilon}(\mathcal{R})|}{\delta}\Big)+\sum_{i=1}^{t}\log\mathbb{E}_{\mu_i}[e^{W_i(\pi')}].\nonumber
\end{align}
where $\mu_i$ denotes the distribution of the $i$-th online data sample $(x_i,a_i^{(-1)},a_i^{(1)},y_i)$ generated by Algorithm \ref{online_alg}. We further upper bound the above inequality as follows. 
\begin{align}
    &\sum_{i=1}^{t}W_i(\pi')-\log\Big(\frac{T|\mathcal{N}_{\epsilon}(\mathcal{R})|}{\delta}\Big)\nonumber\\
    \le&\sum_{i=1}^{t}\log\mathbb{E}_{\mu_i}[e^{W_i(\pi')}]\nonumber\\
    \overset{\eqref{eq:Wi}}{=}&\sum_{i=1}^{t}\log\mathbb{E}_{\mu_i}\Bigg\{\mathbb{E}_{y_i\sim p_{r^*,\xi_i^*}(\cdot|x_i, a_i^{(1)}, a_i^{(-1)})}\Bigg[\sqrt{\frac{q_{\pi',0}(y_i|x_i, a_i^{(1)}, a_i^{(-1)})}{p_{r^*,\xi_i^*}(y_i|x_i, a_i^{(1)}, a_i^{(-1)})}}\Bigg|x_i,a_i^{(1)},a_i^{(-1)}\Bigg]\Bigg\}\nonumber\\
    \overset{(a)}{\le}&\sum_{i=1}^{t}\mathbb{E}_{\mu_i}\Bigg[\sum_{y\in\{-1,1\}}\sqrt{q_{\pi',0}(y|x_i, a_i^{(1)}, a_i^{(-1)})p_{r^*,\xi_i^*}(y|x_i, a_i^{(1)}, a_i^{(-1)})}-1\Bigg]\nonumber\\
    =&-\frac{1}{2}\sum_{i=1}^{t}\mathbb{E}_{\mu_i}\Bigg[\sum_{y\in\{-1,1\}}\Big|\sqrt{q_{\pi',0}(y|x_i, a_i^{(1)}, a_i^{(-1)})}-\sqrt{p_{r^*,\xi_i^*}(y|x_i, a_i^{(1)}, a_i^{(-1)})}\Big|^2\Bigg]\nonumber\\
    \overset{(b)}{\le}&-\frac{1}{8}\sum_{i=1}^{t}\mathbb{E}_{\mu_i}\Bigg[\sum_{y\in\{-1,1\}}\big|q_{\pi',0}(y|x_i, a_i^{(1)}, a_i^{(-1)})-p_{r^*,\xi_i^*}(y|x_i, a_i^{(1)}, a_i^{(-1)})\big|^2\Bigg]\nonumber\\
    \overset{(c)}{=}&-\frac{1}{4}\sum_{i=1}^{t}\mathbb{E}_{\mu_i}\Big\{\sigma[r^{\pi'}(x_i,a_i^{w})-r^{\pi'}(x_i,a_i^{\ell})]-\sigma[r^*(x_i,a_i^{w})-r^*(x_i,a_i^{\ell})+y_i\xi_i^*]\Big\}^2\nonumber\\
    \overset{(d)}{\le}&-\frac{1}{4}\sum_{i=1}^{t}\Big\{\Big[\mathbb{E}_{\mu_i}\big[\sigma[r^{\pi_{r^*}}(x_i,a_i^{w})-r^{\pi_{r^*}}(x_i,a_i^{\ell})]-\sigma[r^{\pi'}(x_i,a_i^{w})-r^{\pi'}(x_i,a_i^{\ell})]\big]^2\Big]-\frac{1}{2}|\xi_i^*|\Big\}\nonumber\\
    \overset{(e)}{\le}&\frac{1}{8}\sum_{i=1}^{t}\Big\{|\xi_i^*|\!-\!\frac{2}{(3+e^R)^2}\mathbb{E}_{\mu_i}\Big[\big|r^{\pi_{r^*}}(x_i,a_i^{w})\!-\!r^{\pi_{r^*}}(x_i,a_i^{\ell})\!-\!r^{\pi'}(x_i,a_i^{w})\!+\!r^{\pi'}(x_i,a_i^{\ell})\big|^2\Big]\Big\},\label{eq:Widiff}
\end{align}
where (a) uses $\log v \le v-1$ for any $v>0$, (b) uses Lemma 12.2 of \citep{Harsha2011Lecture}, (c) uses Eqs. \eqref{eq:pyi} and \eqref{eq:qyi}, (d) uses Eq. \eqref{eq:r_diff_eq} and Lemma \ref{lemma:xi*_out}, and (e) uses Assumption \ref{assum:R} and Lemma \ref{lemma:sigma_diff}.     
Combining Eqs. \eqref{eq:Wi_Lip} and \eqref{eq:Widiff}, we obtain the following inequality which holds for all $t=1,\ldots,T$ and $\pi\in\Pi$ simultaneously with probability at least $1-\delta$. 
\begin{align}
    &\sum_{i=1}^{t}W_i(\pi)\nonumber\\
    \le&\sum_{i=1}^{t}[W_i(\pi)-W_i(\pi_{r^{\dag}})]+W_i(\pi_{r^{\dag}})\nonumber\\
    \overset{(a)}{\le}& \frac{1}{8}\sum_{i=1}^{t}\Big\{|\xi_i^*|\!-\!\frac{2}{(3+e^R)^2}\mathbb{E}_{\mu_i}\Big[\big[r^{\pi_{r^*}}(x_i,a_i^{w})\!-\!r^{\pi_{r^*}}(x_i,a_i^{\ell})\!-\!r^{\pi_{r^{\dag}}}(x_i,a_i^{w})\!+\!r^{\pi_{r^{\dag}}}(x_i,a_i^{\ell})\big]^2\Big]\Big\}\nonumber\\
    &+\log\Big(\frac{T|\mathcal{N}_{\epsilon}(\mathcal{R})|}{\delta}\Big)+t\epsilon\nonumber\\
    \overset{(b)}{\le}& \frac{1}{8}\sum_{i=1}^{t}\Big\{|\xi_i^*|-\frac{2}{(3+e^R)^2}\mathbb{E}_{\mu_i}\Big[\big[r^{\pi_{r^*}}(x_i,a_i^{w})-r^{\pi_{r^*}}(x_i,a_i^{\ell})-r^{\pi}(x_i,a_i^{w})+r^{\pi}(x_i,a_i^{\ell})\big]^2\Big]\Big\}\nonumber\\
    &+\log\Big(\frac{T|\mathcal{N}_{\epsilon}(\mathcal{R})|}{\delta}\Big)+2t\epsilon,\label{eq:Wi_sum}
\end{align}
where (a) uses Eq. \eqref{eq:Widiff} (with $\pi'$ replaced by $\pi_{r^{\dag}}$) and Eq. \eqref{eq:Wi_Lip}, (b) uses the following inequality and $(3+e^R)^2>6e^R+e^{2R}>6R+2R=8R$. 
\begin{align}
    &\big[r^{\pi_{r^*}}(x_i,a_i^{w})-r^{\pi_{r^*}}(x_i,a_i^{\ell})-r^{\pi}(x_i,a_i^{w})+r^{\pi}(x_i,a_i^{\ell})\big]^2\nonumber\\
    &-\big[r^{\pi_{r^*}}(x_i,a_i^{w})-r^{\pi_{r^*}}(x_i,a_i^{\ell})-r^{\pi_{r^{\dag}}}(x_i,a_i^{w})+r^{\pi_{r^{\dag}}}(x_i,a_i^{\ell})\big]^2\nonumber\\
    =&\big[r^{\pi_{r^{\dag}}}(x_i,a_i^{w})-r^{\pi_{r^{\dag}}}(x_i,a_i^{\ell})-r^{\pi}(x_i,a_i^{w})+r^{\pi}(x_i,a_i^{\ell})\big] \nonumber\\
    &\big[2r^{\pi_{r^*}}(x_i,a_i^{w})-2r^{\pi_{r^*}}(x_i,a_i^{\ell})-r^{\pi}(x_i,a_i^{w})+r^{\pi}(x_i,a_i^{\ell})-r^{\pi_{r^{\dag}}}(x_i,a_i^{w})+r^{\pi_{r^{\dag}}}(x_i,a_i^{\ell})\big]\nonumber\\
    \overset{(a)}{=}&\big[r^{\dag}(x_i,a_i^{w})-r^{\dag}(x_i,a_i^{\ell})-r^{\pi}(x_i,a_i^{w})+r^{\pi}(x_i,a_i^{\ell})\big] \nonumber\\
    &\big[2r^{\pi_{r^*}}(x_i,a_i^{w})-2r^{\pi_{r^*}}(x_i,a_i^{\ell})-r^{\pi}(x_i,a_i^{w})+r^{\pi}(x_i,a_i^{\ell})-r^{\pi_{r^{\dag}}}(x_i,a_i^{w})+r^{\pi_{r^{\dag}}}(x_i,a_i^{\ell})\big] \nonumber\\
    \overset{(b)}{\le}& (2\epsilon)(4R)=8R\epsilon,\nonumber
\end{align}
where (a) uses Eq. \eqref{eq:r_diff_eq}, and (b) uses $\|r^{\dag}-r\|_{\infty}\le\epsilon$ and Lemma \ref{lemma:rpi_diff}. 

Finally, we conclude the proof as follows. 
\begin{align}
&\sum_{i=1}^{t}\log\frac{\sigma\big[r^{\pi}(x_i,a_i^w)-r^{\pi}(x_i,a_i^{\ell})+y_i\xi_i^{\pi}\big]}{\sigma\big[r^*(x_i,a_i^w)-r^*(x_i,a_i^{\ell})+y_i\xi_i^*\big]}\nonumber\\
\overset{(a)}{\le}&\sum_{i=1}^{t}\Big[\log\frac{\sigma\big[r^{\pi}(x_i,a_i^w)-r^{\pi}(x_i,a_i^{\ell})\big]}{\sigma\big[r^*(x_i,a_i^w)-r^*(x_i,a_i^{\ell})+y_i\xi_i^*\big]}+\sigma(R)|\xi_i^{\pi}|\Big]\nonumber\\
\overset{(b)}{=}&\sum_{i=1}^{t}\big[2W_i(\pi)+\sigma(R)|\xi_i^{\pi}|\big]\nonumber\\
\overset{(c)}{\le}&2\log\Big(\frac{T|\mathcal{N}_{\epsilon}(\mathcal{R})|}{\delta}\Big)+4t\epsilon+\sum_{i=1}^{t}\Big\{\frac{1}{4}|\xi_i^*|+\sigma(R)|\xi_i^{\pi}|\nonumber\\
&-\frac{1}{2(3+e^R)^2}\mathbb{E}_{\mu_i}\Big[\big[r^{\pi_{r^*}}(x_i,a_i^{w})-r^{\pi_{r^*}}(x_i,a_i^{\ell})-r^{\pi}(x_i,a_i^{w})+r^{\pi}(x_i,a_i^{\ell})\big]^2\Big]\Big\}\nonumber\\
\overset{(d)}{=}&2\log\Big(\frac{T|\mathcal{N}_{\epsilon}(\mathcal{R})|}{\delta}\Big)+4t\epsilon+\sum_{i=1}^{t}\Big\{\frac{1}{4}|\xi_i^*|+\sigma(R)|\xi_i^{\pi}|\nonumber\\
&-\frac{1}{2(3+e^R)^2}\mathbb{E}_{\mu_i}\Big[\big[r^*(x_i,a_i^{(1)})-r^*(x_i,a_i^{(-1)})-r^{\pi}(x_i,a_i^{(1)})+r^{\pi}(x_i,a_i^{(-1)})\big]^2\Big]\Big\}\nonumber\\
\overset{(e)}{=}&2\log\Big(\frac{T|\mathcal{N}_{\epsilon}(\mathcal{R})|}{\delta}\Big)+4t\epsilon+\sum_{i=1}^{t}\Big\{\frac{1}{4}|\xi_i^*|+\sigma(R)|\xi_i^{\pi}|\nonumber\\
&-\frac{1}{2(3+e^R)^2}\mathbb{E}_{x\sim \rho,a^{(1)}\sim \pi_i(\cdot|x),a^{(-1)}\sim \pi_{\rm ref}(\cdot|x)}\big[f_{\pi}^2(x,a^{(1)},a^{(-1)})\big]\Big\},\nonumber
\end{align}
where (a) uses Eq. \eqref{eq:xipi_out} from Lemma \ref{lemma:xir_out}, (b) uses $W_i(\pi)$ defined by Eq. \eqref{eq:Wi}, (c) uses Eq. \eqref{eq:Wi_sum}, (d) uses Eq. \eqref{eq:r_diff_eq} and $\{a_i^w,a_i^{\ell}\}=\{a_i^{(1)},a_i^{(-1)}\}$ (based on Assumption \ref{assum:offline_data}), and (e) uses Eq. \eqref{eq:f_pit}. 
\end{proof}

\begin{lemma}[Azuma-Hoeffding Inequality \citep{xie2024exploratory}]\label{lemma:azuma}
    The random variables $\{X_t\}_{t=1}^{T}$ satisfy $|X_t|\le C$ almost surely. Then with probability at least $1-\delta$, we have
    \begin{align}
        \Big|\sum_{t=1}^{T} [X_t-\mathbb{E}(X_t|X_1,\ldots,X_{t-1})]\Big|\le C\sqrt{8T\log(2/\delta)}.
    \end{align}
\end{lemma}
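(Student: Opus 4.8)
The plan is to apply the exponential-moment (Chernoff) method to the martingale $S_t\overset{\rm def}{=}\sum_{s=1}^{t}D_s$, where $D_s\overset{\rm def}{=}X_s-\mathbb{E}(X_s\mid X_1,\ldots,X_{s-1})$, with respect to the natural filtration $\mathcal{F}_{t-1}\overset{\rm def}{=}\sigma(X_1,\ldots,X_{t-1})$. First I would record the two structural facts about the increments: each $D_s$ is conditionally mean-zero, $\mathbb{E}(D_s\mid\mathcal{F}_{s-1})=0$, and bounded, since $|X_s|\le C$ almost surely forces $|\mathbb{E}(X_s\mid\mathcal{F}_{s-1})|\le C$ and hence $D_s\in[-2C,2C]$ almost surely.

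The key ingredient is Hoeffding's lemma applied conditionally: for any $\lambda>0$, a conditionally mean-zero random variable taking values in an interval of width $4C$ satisfies $\mathbb{E}[e^{\lambda D_s}\mid\mathcal{F}_{s-1}]\le\exp\bigl(\lambda^2(4C)^2/8\bigr)=\exp(2\lambda^2C^2)$, which follows from convexity of $v\mapsto e^{\lambda v}$ on the bounding interval. Peeling off the conditional expectations one layer at a time through the tower property, i.e. $\mathbb{E}[e^{\lambda S_t}]=\mathbb{E}\bigl[e^{\lambda S_{t-1}}\,\mathbb{E}[e^{\lambda D_t}\mid\mathcal{F}_{t-1}]\bigr]$, I would obtain the moment generating function bound $\mathbb{E}[\exp(\lambda S_T)]\le\exp(2T\lambda^2C^2)$.

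I would then combine Markov's inequality with this bound to get, for every $\lambda>0$, $\mathbb{P}(S_T\ge s)\le\exp(-\lambda s+2T\lambda^2C^2)$, and minimize the right-hand side over $\lambda$ by taking $\lambda=s/(4TC^2)$, which yields the sub-Gaussian tail $\mathbb{P}(S_T\ge s)\le\exp\bigl[-s^2/(8TC^2)\bigr]$. Setting this probability equal to $\delta/2$ and solving for the threshold gives $s=C\sqrt{8T\log(2/\delta)}$. Running the identical argument on $-S_T$, whose increments $-D_s$ obey the same conditional mean-zero and width-$4C$ bounds, controls the lower tail, and a union bound over the two tail events produces the two-sided inequality $|S_T|\le C\sqrt{8T\log(2/\delta)}$ with probability at least $1-\delta$, which is exactly the claimed statement.

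The main obstacle is the conditional Hoeffding step: one must verify carefully that the width-$4C$ bound on each $D_s$ holds almost surely, and that the conditional MGF estimate chains cleanly through the filtration so that the same uniform constant $2\lambda^2C^2$ accumulates additively across the $T$ steps. Everything else, namely Markov's inequality, the scalar optimization over $\lambda$, and the final union bound over the upper and lower tails, is routine.
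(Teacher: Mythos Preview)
Your proof is correct and is the standard Chernoff--Hoeffding argument for bounded martingale differences. The paper itself does not prove this lemma at all: it simply states it as a cited result from \citep{xie2024exploratory}, so there is no in-paper proof to compare against.
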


\begin{lemma}\label{lemma:online_concentration}
Fixing any $\epsilon>0$, $\delta\in(0,1)$, the online dataset $\{x_i,a_i^{(1)},a_i^{(-1)},y_i\}_{i=1}^{T}$ generated from Algorithm \ref{online_alg} satisfies the following inequality for all $t=1,\ldots,T$ and $\pi\in\Pi_{\mathcal{R}}\overset{\rm def}{=}\{\pi_r:r\in\mathcal{R}\}$ simultaneously with probability at least $1-\delta$. 
\begin{align}
    \Bigg|\Big[\sum_{i=1}^{t}\log\frac{\pi(a_i^{(-1)}|x_i)}{\pi_{r^*}(a_i^{(-1)}|x_i)}\Big]\!-\!t\mathbb{E}_{x\sim \rho,a\sim \pi_{\rm ref}(\cdot|x)}\!\Big[\!\log\frac{\pi(a|x)}{\pi_{r^*}(a|x)}\Big]\Bigg|\!\le\! \frac{4R}{\beta}\sqrt{2t\log\Big[\frac{2T\mathcal{N}_{\epsilon}(\mathcal{R})}{\delta}\Big]}\!+\!\frac{4t\epsilon}{\beta}.\nonumber
\end{align}
\end{lemma}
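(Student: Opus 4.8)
The plan is to read the left-hand side, for a \emph{fixed} policy $\pi\in\Pi_{\mathcal{R}}$, as a sum of bounded, mean-zero increments, apply the Azuma--Hoeffding inequality (Lemma~\ref{lemma:azuma}), take a union bound over a finite $\epsilon$-cover $\mathcal{N}_\epsilon(\mathcal{R})$ and over the $T$ possible values of $t$, and finally extend from the cover to all $\pi\in\Pi_{\mathcal{R}}$ using the Lipschitz estimate of Lemma~\ref{lemma:pi_ratio}.

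Concretely, fix $\pi=\pi_r$ with $r\in\mathcal{R}$ and set $X_i(\pi):=\log\frac{\pi(a_i^{(-1)}|x_i)}{\pi_{r^*}(a_i^{(-1)}|x_i)}$ for $i=1,\dots,T$. Since Algorithm~\ref{online_alg} draws $x_i\sim\rho$ and $a_i^{(-1)}\sim\pi_{\rm ref}(\cdot|x_i)$ independently of the history, we have $\mathbb{E}\big[X_i(\pi)\mid X_1(\pi),\dots,X_{i-1}(\pi)\big]=\mathbb{E}_{x\sim\rho,a\sim\pi_{\rm ref}(\cdot|x)}\big[\log\frac{\pi(a|x)}{\pi_{r^*}(a|x)}\big]$, so the quantity inside the absolute value in the statement is exactly $\sum_{i=1}^{t}\big[X_i(\pi)-\mathbb{E}(X_i(\pi)\mid X_1,\dots,X_{i-1})\big]$. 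Moreover, since $r,r^*\in\mathcal{R}$, Assumption~\ref{assum:R} gives $\|r-r^*\|_\infty\le R$, and Lemma~\ref{lemma:pi_ratio} then yields $|X_i(\pi)|=\big|\log\frac{\pi_r(a_i^{(-1)}|x_i)}{\pi_{r^*}(a_i^{(-1)}|x_i)}\big|\le\frac{2R}{\beta}$ almost surely. Applying Lemma~\ref{lemma:azuma} with $C=2R/\beta$ and horizon $t$, and noting $\sqrt{8t\log(2/\delta')}=2\sqrt{2t\log(2/\delta')}$, gives for each fixed $t$ and fixed $\pi=\pi_{r^\dag}$ with $r^\dag\in\mathcal{N}_\epsilon(\mathcal{R})$ the bound $\big|\sum_{i=1}^t[X_i-\mathbb{E}(X_i\mid\cdot)]\big|\le\frac{4R}{\beta}\sqrt{2t\log(2/\delta')}$ with failure probability $\delta'$. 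Choosing $\delta'=\delta/(T|\mathcal{N}_\epsilon(\mathcal{R})|)$ and taking a union bound over the $T$ values of $t$ and the $|\mathcal{N}_\epsilon(\mathcal{R})|$ elements of the cover makes this hold simultaneously with probability at least $1-\delta$, and $\log(2/\delta')=\log\big[2T|\mathcal{N}_\epsilon(\mathcal{R})|/\delta\big]$, matching the first term of the claim.

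To pass from the cover to an arbitrary $\pi=\pi_r\in\Pi_{\mathcal{R}}$, pick $r^\dag\in\mathcal{N}_\epsilon(\mathcal{R})$ with $\|r-r^\dag\|_\infty\le\epsilon$. By Lemma~\ref{lemma:pi_ratio}, $\big|\log\frac{\pi_r(a|x)}{\pi_{r^\dag}(a|x)}\big|\le\frac{2\epsilon}{\beta}$ for every $x,a$, hence both $\big|X_i(\pi_r)-X_i(\pi_{r^\dag})\big|\le\frac{2\epsilon}{\beta}$ and $\big|\mathbb{E}_{x\sim\rho,a\sim\pi_{\rm ref}}\big[\log\frac{\pi_r(a|x)}{\pi_{r^*}(a|x)}\big]-\mathbb{E}_{x\sim\rho,a\sim\pi_{\rm ref}}\big[\log\frac{\pi_{r^\dag}(a|x)}{\pi_{r^*}(a|x)}\big]\big|\le\frac{2\epsilon}{\beta}$; summing over $i=1,\dots,t$ adds at most $\frac{2t\epsilon}{\beta}+\frac{2t\epsilon}{\beta}=\frac{4t\epsilon}{\beta}$, which is exactly the remaining term in the stated bound.

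I expect the only real subtlety to be bookkeeping: taking the union bound over \emph{both} $t\in\{1,\dots,T\}$ and the cover, so that the inequality holds for all $t$ simultaneously rather than only at the terminal time that Lemma~\ref{lemma:azuma} literally provides, and verifying that the numeric constants line up ($\sqrt{8t}=2\sqrt{2t}$ and $C=2R/\beta$). There is no genuine analytic difficulty beyond invoking Lemmas~\ref{lemma:pi_ratio} and~\ref{lemma:azuma}.
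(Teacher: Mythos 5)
Your proposal is correct and follows essentially the same route as the paper: define $X_i = \log\frac{\pi_r(a_i^{(-1)}|x_i)}{\pi_{r^*}(a_i^{(-1)}|x_i)}$, bound it by $2R/\beta$ via Lemma~\ref{lemma:pi_ratio} and Assumption~\ref{assum:R}, apply Lemma~\ref{lemma:azuma} with a union bound over $t\in\{1,\dots,T\}$ and the cover $\mathcal{N}_\epsilon(\mathcal{R})$, and extend to all of $\Pi_{\mathcal{R}}$ by the $2\epsilon/\beta$ Lipschitz estimate on both the empirical sum and the expectation, yielding the extra $4t\epsilon/\beta$. The constants ($C=2R/\beta$, $\sqrt{8t}=2\sqrt{2t}$, $\log[2T|\mathcal{N}_\epsilon(\mathcal{R})|/\delta]$) all line up with the paper's argument.
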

\begin{proof}
    For any $r\in\mathcal{R}$, denote $X_i(r)=\log\frac{\pi_r(a_i^{(-1)}|x_i)}{\pi_{r^*}(a_i^{(-1)}|x_i)}$ which satisfies  $|X_i(r)|\le\frac{2R}{\beta}$ based on Lemma \ref{lemma:pi_ratio} and Assumption \ref{assum:R}. 
    
    Then by applying Lemma \ref{lemma:azuma} to $X_i(r)$ with union bound, we obtain the following inequality which holds for all $t=0,1,\ldots,T-1$ and $r'\in\mathcal{N}_{\epsilon}(\mathcal{R})$ simultaneously with probability at least $1-\delta$. 
    \begin{align}
    \Bigg|\sum_{i=1}^{t}[X_i(r')-\mathbb{E}_{\mu_i}X_i(r')]\Bigg|\le \frac{2R}{\beta}\sqrt{8t\log\Big[\frac{2T\mathcal{N}_{\epsilon}(\mathcal{R})}{\delta}\Big]}.\label{eq:Xierr_narrow}
    \end{align}
    where $\mu_i$ denotes the distribution of the $i$-th online data sample $(x_i,a_i^{(-1)},a_i^{(1)},y_i)$ generated by Algorithm \ref{online_alg}.

    For any $r\in\mathcal{R}$, there exists $r^{\dag}\in\mathcal{N}_{\epsilon}(\mathcal{R})$ satisfying $\|r^{\dag}-r\|_{\infty}\le\epsilon$, so Lemma \ref{lemma:pi_ratio} implies that
    $$|X_i(r^{\dag})-X_i(r)|=\Bigg|\log\frac{\pi_{r^{\dag}}(a_i^{(-1)}|x_i)}{\pi_r(a_i^{(-1)}|x_i)}\Bigg|\le\frac{2\epsilon}{\beta}.$$ 
    Therefore, if the above high probability event $\mathcal{E}:=\{{\rm Eq.~\eqref{eq:Xierr_narrow}~holds~for~all~}r'\in\mathcal{N}_{\epsilon}(\mathcal{R})\}$ occurs, then the following inequality holds for any $r\in\mathcal{R}$.
    \begin{align}
        \Bigg|\sum_{i=1}^{t}[X_i(r)-\mathbb{E}_{\mu_i}X_i(r)]\Bigg|\le \frac{2R}{\beta}\sqrt{8t\log\Big[\frac{2T\mathcal{N}_{\epsilon}(\mathcal{R})}{\delta}\Big]}+\frac{4t\epsilon}{\beta}.\label{eq:Xierr_wide}
    \end{align}
    
    For any $\pi\in\Pi_{\mathcal{R}}\overset{\rm def}{=}\{\pi_r:r\in\mathcal{R}\}$, there exists $r\in\mathcal{R}$ satisfying $\pi=\pi_r$. Then we have
    \begin{align}
        X_i(r)=\log\frac{\pi(a_i^{(-1)}|x_i)}{\pi_{r^*}(a_i^{(-1)}|x_i)}.\nonumber
    \end{align}
    and thus
    \begin{align}
        \mathbb{E}_{\mu_i}X_i(r)=\mathbb{E}_{x_i\sim\rho,a_i^{(-1)}\sim\pi_{\rm ref}(\cdot|x)}\Bigg[\log\frac{\pi(a_i^{(-1)}|x_i)}{\pi_{r^*}(a_i^{(-1)}|x_i)}\Bigg]=\mathbb{E}_{x\sim\rho,a\sim\pi_{\rm ref}(\cdot|x)}\Big[\log\frac{\pi(a|x)}{\pi_{r^*}(a|x)}\Big].\nonumber
    \end{align}
    Substituting the above two equalities into Eq. \eqref{eq:Xierr_wide} concludes the proof. 
\end{proof}

\begin{lemma}\label{lemma:offline_concentration}
Suppose that the offline dataset $\{x_i,a_i^w,a_i^{\ell},y_i\}_{i=1}^{N}$ is generated from Assumption \ref{assum:offline_data}, and select the baseline policy $\pi_{\rm base}$ to be the distribution of $a_i^w$ given $x_i$. Then fixing any $\epsilon>0$, $\delta\in(0,1)$, the following inequality holds for all $\pi\in\Pi_{\mathcal{R}}\overset{\rm def}{=}\{\pi_r:r\in\mathcal{R}\}$ simultaneously with probability at least $1-\delta$. 
\begin{align}
    \Bigg|\Big[\sum_{i=1}^{N}\log\frac{\pi(a_i^w|x_i)}{\pi_{r^*}(a_i^w|x_i)}\Big]\!-\!N\mathbb{E}_{x\sim \rho,a\sim \pi_{\rm base}(\cdot|x)}\!\Big[\!\log\frac{\pi(a|x)}{\pi_{r^*}(a|x)}\Big]\Bigg|\!\le\! \frac{4R}{\beta}\sqrt{2N\log\Big[\frac{2\mathcal{N}_{\epsilon}(\mathcal{R})}{\delta}\Big]}\!+\!\frac{4N\epsilon}{\beta}.\nonumber
\end{align}
\end{lemma}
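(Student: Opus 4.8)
The plan is to mirror the proof of Lemma~\ref{lemma:online_concentration}, with the role of $a_i^{(-1)}$ and $\pi_{\rm ref}$ played by $a_i^w$ and $\pi_{\rm base}$, and with the martingale-difference structure replaced by plain independence since the offline sample is i.i.d. First I would introduce, for each $r\in\mathcal{R}$, the random variables $X_i(r):=\log\frac{\pi_r(a_i^w|x_i)}{\pi_{r^*}(a_i^w|x_i)}$ for $i=1,\dots,N$. Since $r,r^*\in\mathcal{R}\subseteq[0,R]^{|\mathcal{X}||\mathcal{A}|}$ by Assumption~\ref{assum:R}, we have $\|r-r^*\|_{\infty}\le R$, so Lemma~\ref{lemma:pi_ratio} gives $|X_i(r)|\le 2R/\beta$ almost surely.

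Next I would verify that $\{(x_i,a_i^w)\}_{i=1}^N$ is i.i.d. with $a_i^w\mid x_i\sim\pi_{\rm base}(\cdot|x_i)$. The tuples $(x_i,a_i^{(1)},a_i^{(-1)},y_i)$ are independent across $i$, but not obviously identically distributed because the label law~\eqref{eq:noisy_y} depends on the index-dependent corruption $\xi_i^*$. However, conditioning on $x_i=x$ and on the values $a_i^{(1)},a_i^{(-1)}\sim\pi_b(\cdot|x)$, the probability that the realized winner equals a given response picks up the factor $\sigma(\Delta+\xi_i^*)+\sigma(-\Delta-\xi_i^*)$ with $\Delta$ the corresponding reward gap, and the identity $\sigma(z)+\sigma(-z)=1$ collapses this to $1$; hence the conditional law of $a_i^w$ given $x_i=x$ is exactly $\pi_b(\cdot|x)$ for every $i$, irrespective of $\xi_i^*$. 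Thus $(x_i,a_i^w)$ is i.i.d., $\pi_{\rm base}=\pi_b$ by the definition in the lemma, so $\{X_i(r)\}_i$ is i.i.d. and $\mathbb{E}\big[X_i(r)\mid X_1(r),\dots,X_{i-1}(r)\big]=\mathbb{E}_{x\sim\rho,a\sim\pi_{\rm base}(\cdot|x)}\big[\log\frac{\pi_r(a|x)}{\pi_{r^*}(a|x)}\big]$, exactly the population quantity in the statement.

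Then I would apply Azuma--Hoeffding (Lemma~\ref{lemma:azuma}) with $C=2R/\beta$ together with a union bound over the finite $\epsilon$-cover $\mathcal{N}_{\epsilon}(\mathcal{R})$: with probability at least $1-\delta$, for every $r'\in\mathcal{N}_{\epsilon}(\mathcal{R})$ one has $\big|\sum_{i=1}^N[X_i(r')-\mathbb{E}X_i(r')]\big|\le\frac{2R}{\beta}\sqrt{8N\log(2|\mathcal{N}_{\epsilon}(\mathcal{R})|/\delta)}=\frac{4R}{\beta}\sqrt{2N\log(2|\mathcal{N}_{\epsilon}(\mathcal{R})|/\delta)}$. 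To pass from the cover to an arbitrary $r\in\mathcal{R}$, I would choose $r^{\dag}\in\mathcal{N}_{\epsilon}(\mathcal{R})$ with $\|r^{\dag}-r\|_{\infty}\le\epsilon$; Lemma~\ref{lemma:pi_ratio} then bounds $|X_i(r^{\dag})-X_i(r)|\le 2\epsilon/\beta$ and likewise $|\mathbb{E}X_i(r^{\dag})-\mathbb{E}X_i(r)|\le 2\epsilon/\beta$, so summing over $i$ inflates the bound by at most $4N\epsilon/\beta$. Finally, for $\pi\in\Pi_{\mathcal{R}}$ I write $\pi=\pi_r$, so that $X_i(r)=\log\frac{\pi(a_i^w|x_i)}{\pi_{r^*}(a_i^w|x_i)}$ and $\mathbb{E}X_i(r)=\mathbb{E}_{x\sim\rho,a\sim\pi_{\rm base}(\cdot|x)}[\log\frac{\pi(a|x)}{\pi_{r^*}(a|x)}]$; substituting into the displayed bound yields the claim.

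The only mildly delicate step — and the one place this is not a verbatim transcription of Lemma~\ref{lemma:online_concentration} — is the i.i.d. claim above: a priori the relabeling $(a_i^{(1)},a_i^{(-1)})\mapsto a_i^w$ could make the marginal of the winner depend on the index-dependent corruption $\xi_i^*$, and one must check that the symmetry $\sigma(z)+\sigma(-z)=1$ cancels exactly that dependence. Everything else is strictly easier than the online argument, which is why the log-factor is $2|\mathcal{N}_{\epsilon}(\mathcal{R})|/\delta$ rather than $2T|\mathcal{N}_{\epsilon}(\mathcal{R})|/\delta$: there is no time index $t$ to union-bound over, and the underlying sequence is i.i.d. rather than merely adapted.
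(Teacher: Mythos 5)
Your overall architecture is exactly the paper's: the paper's proof of this lemma is literally ``same as Lemma~\ref{lemma:online_concentration}, but union-bound only over the cover $\mathcal{N}_{\epsilon}(\mathcal{R})$ and not over $t$,'' and your bounded-increment bound $|X_i(r)|\le 2R/\beta$ via Lemma~\ref{lemma:pi_ratio}, the Azuma--Hoeffding step with constant $2R/\beta$ giving $\frac{4R}{\beta}\sqrt{2N\log(2|\mathcal{N}_{\epsilon}(\mathcal{R})|/\delta)}$, and the $4N\epsilon/\beta$ inflation from the cover all match. However, the one step you single out as the delicate addition --- the claim that the law of $a_i^w$ given $x_i$ equals $\pi_b(\cdot|x_i)$ --- is wrong, and the symmetry computation behind it contains a sign error. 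Fix a candidate winner $a$ and a competitor $a'$, and let $\Delta=r^*(x,a)-r^*(x,a')$. The event $\{a_i^w=a\}$ decomposes as $\{a_i^{(1)}=a,\,y_i=1\}\cup\{a_i^{(-1)}=a,\,y_i=-1\}$. The first piece contributes $\sigma(\Delta+\xi_i^*)$, but the second contributes $1-\sigma(-\Delta+\xi_i^*)=\sigma(\Delta-\xi_i^*)$, not $\sigma(-\Delta-\xi_i^*)$: when $a$ sits in the $a^{(-1)}$ slot the reward gap flips sign inside the Bradley--Terry model while $\xi_i^*$ keeps its sign from Eq.~\eqref{eq:noisy_y}, and the complementary probability then flips both. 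The resulting factor $\sigma(\Delta+\xi_i^*)+\sigma(\Delta-\xi_i^*)$ equals $1$ only when $\Delta=0$; in general the winner's conditional law is $\pi_b(a|x)\sum_{a'}\pi_b(a'|x)\big[\sigma(\Delta+\xi_i^*)+\sigma(\Delta-\xi_i^*)\big]$, which is tilted toward high-reward responses (as one should expect of the preferred response) and, moreover, depends on $\xi_i^*$.

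This error does not sink the lemma, because the identification with $\pi_b$ is not needed: the statement \emph{defines} $\pi_{\rm base}(\cdot|x)$ to be the conditional law of $a_i^w$ given $x_i=x$, so $\mathbb{E}[X_i(r)]=\mathbb{E}_{x\sim\rho,a\sim\pi_{\rm base}(\cdot|x)}[\log\frac{\pi_r(a|x)}{\pi_{r^*}(a|x)}]$ holds by that definition together with independence across $i$, and the rest of your argument goes through verbatim. The only genuine content of the step you were trying to establish is that this conditional law is the \emph{same} for every $i$ (so that $\sum_i\mathbb{E}[X_i(r)]$ collapses to $N$ times a single population mean); since the law depends on $\xi_i^*$, this is an implicit homogeneity assumption that the paper itself glosses over rather than something you can derive from a cancellation. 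You should delete the $\pi_{\rm base}=\pi_b$ claim, appeal directly to the lemma's definition of $\pi_{\rm base}$, and at most remark on the identical-distribution caveat.
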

\begin{proof}
    The proof logic is the same as that of Lemma \ref{lemma:online_concentration}. The major difference is that the inequality here only has to hold for any $\pi\in\Pi_{\mathcal{R}}$ while Lemma \ref{lemma:online_concentration} requires to hold also for $t=1,\ldots,T$. As a result, when applying Lemma \ref{lemma:azuma} with union bound, $\frac{2T\mathcal{N}_{\epsilon}(\mathcal{R})}{\delta}$ in the proof of Lemma \ref{lemma:online_concentration} is replaced with $\frac{2\mathcal{N}_{\epsilon}(\mathcal{R})}{\delta}$.
\end{proof}

\begin{lemma}
    Define the following quantity. 
    \begin{align}
        I_t\overset{\rm def}{=}\frac{\big[\mathbb{E}_{x\sim \rho,a^{(1)}\sim \pi_{t+1}(\cdot|x),a^{(-1)}\sim \pi_{\rm ref}(\cdot|x)}f_{\pi_{t+1}}(x,a^{(1)},a^{(-1)})\big]^2}{R^2 + \sum_{i=1}^{t}\mathbb{E}_{x\sim \rho,a^{(1)}\sim \pi_{i}(\cdot|x),a^{(-1)}\sim \pi_{\rm ref}(\cdot|x)}[f_{\pi_{t+1}}^2(x,a^{(1)},a^{(-1)})]},\label{eq:It}
    \end{align}
    where the function $f_{\pi}$ is defined by Eq. \eqref{eq:f_pit}. Then we have
    \begin{align}
        \sum_{t=1}^{T}I_t\le 12G_{\rm on}\log(T+2),\label{eq:sum_It}
    \end{align}
    where $G_{\rm on}$ is defined by Eq. \eqref{eq:Ccov_online}. 
\end{lemma}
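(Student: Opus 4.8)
The plan is to recognize $\sum_{t=1}^{T}I_t$ as a cumulative elliptical‑potential quantity controlled by the coverability coefficient $G_{\rm on}$ and to bound it by a telescoping argument in the style of \citep{xie2024exploratory}. Write $\mu_i$ for the law of the triple $z=(x,a^{(1)},a^{(-1)})$ with $x\sim\rho,\ a^{(1)}\sim\pi_i(\cdot|x),\ a^{(-1)}\sim\pi_{\rm ref}(\cdot|x)$, abbreviate $f_i:=f_{\pi_i}$ and $D_t(z):=\sum_{i=1}^{t}\mu_i(z)$. Two inputs: (i) by Lemma~\ref{lemma:rdiff} and Assumption~\ref{assum:R}, $f_\pi(z)=[r^*(x,a^{(1)})-r^*(x,a^{(-1)})]-[r^{\pi}(x,a^{(1)})-r^{\pi}(x,a^{(-1)})]\in[-2R,2R]$, so $f_\pi(z)^2\le 4R^2$; (ii) taking $\nu\in\Pi_{\mathcal R}$ to attain the infimum in \eqref{eq:Ccov_online} (up to arbitrarily small slack, which we suppress), and letting $\mu_\nu$ be the same construction with $\nu$ replacing $\pi_i$, the laws $\mu_i$ and $\mu_\nu$ agree on the $(x,a^{(-1)})$‑coordinates, so $\mu_i(z)\le G_{\rm on}\,\mu_\nu(z)$ for all $i$ and all $z$.

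First I would remove the changing function $f_{t+1}$ from the ratio. Since $\tfrac14\mathbb{E}_{\mu_\nu}[f_{t+1}^2]\le R^2$ by (i), the denominator of $I_t$ dominates $\sum_z(\tfrac14\mu_\nu(z)+D_t(z))f_{t+1}(z)^2$; applying Cauchy--Schwarz to $\mathbb{E}_{\mu_{t+1}}[f_{t+1}]=\sum_z \mu_{t+1}(z)f_{t+1}(z)$ with the weights $\tfrac14\mu_\nu(z)+D_t(z)$ and dividing through gives the key reduction $I_t\le\sum_z \mu_{t+1}(z)^2\big/\big(\tfrac14\mu_\nu(z)+D_t(z)\big)$. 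Next, setting $E_s(z):=\tfrac14\mu_\nu(z)+D_s(z)$, so $E_{s+1}(z)-E_s(z)=\mu_{s+1}(z)$, the coverability bound (ii) gives $E_{t+1}(z)/E_t(z)=1+\mu_{t+1}(z)/E_t(z)\le 1+4G_{\rm on}$ because $E_t(z)\ge\tfrac14\mu_\nu(z)$. Then, using the elementary inequality $x-1\le\tfrac{C}{\log(1+C)}\log x$ on $[1,1+C]$ (with $C$ an absolute multiple of $G_{\rm on}$) together with $\mu_{t+1}(z)\le G_{\rm on}\mu_\nu(z)$, each summand is at most a constant times $G_{\rm on}\,\mu_\nu(z)\log\!\big(E_{t+1}(z)/E_t(z)\big)$; summing over $t$ telescopes per $z$ to $O(G_{\rm on})\,\mu_\nu(z)\log\!\big(1+4D_{T+1}(z)/\mu_\nu(z)\big)$. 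Finally, summing over $z$ and using Jensen's inequality for $u\mapsto\log(1+u)$ with $\sum_z\mu_\nu(z)=1$ and $\sum_z D_{T+1}(z)=T+1$ collapses the bound to $O\big(G_{\rm on}\log(T+2)\big)$; a careful accounting of constants (separating the ``burn‑in'' rounds $D_t(z)<\mu_\nu(z)$, on which one applies the cheap bound $I_t\le 4$ directly) is expected to yield the stated factor $12$.

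The hard part is this telescoping step. Applying $\mu_{t+1}(z)\le G_{\rm on}\mu_\nu(z)$ bluntly---to both the numerator factor $\mu_{t+1}(z)$ and the running ratio $E_{t+1}(z)/E_t(z)$---introduces a spurious $G_{\rm on}^2$; obtaining the sharp linear dependence on $G_{\rm on}$ requires routing the whole estimate through the single reference measure $\mu_\nu$ so that exactly one power of $G_{\rm on}$ is ``spent'' per round while the remaining $\mu_\nu(z)$ survives as a weight that sums to one, and handling the early rounds---where $E_t(z)$ is still of order $\mu_\nu(z)$, hence the ratio can be as large as $1+4G_{\rm on}$---with the trivial bound $I_t\le 4$ rather than the logarithmic estimate.
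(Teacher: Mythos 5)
Your proposal follows essentially the same route as the paper's proof: bound $f_{\pi}\in[-2R,2R]$, reduce $I_t$ via Cauchy--Schwarz to a ratio of the form $\sum_{z}\mu_{t+1}(z)^2/(\text{cumulative measure})$, invoke the coverability witness $\nu^*$ to get $\mu_{t+1}\le G_{\rm on}\mu_{\nu}$, split per $(x,a^{(1)})$ into burn-in and post--burn-in rounds, and telescope the logarithm of the running cumulative measure on the latter using $u\le 2\log(1+u)$ for $u\in[0,1]$. The only place your sketch is looser than the paper is the burn-in contribution: the correct threshold is $D_t(z)<G_{\rm on}\mu_{\nu}(z)$ (your condition drops the factor $G_{\rm on}$, which is what makes the post--burn-in ratio at most $1$), and since this is a per-$z$ event you cannot simply charge $I_t\le 4$ per round; instead you must keep the indicator inside the squared numerator, use $(\mathbb{E}[\mathbb{I}])^2\le\mathbb{E}[\mathbb{I}]$ against the $R^2$ in the denominator, and then bound $\sum_t\pi_{t+1}(a^{(1)}|x)\,\mathbb{I}\{t\le\tau(x,a^{(1)})\}\le 2G_{\rm on}\nu^*(a^{(1)}|x)$ as the paper does.
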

\begin{proof}
    Applying Assumption \ref{assum:R} and Lemma \ref{lemma:rpi_diff} to the function $f_{\pi}$ defined by Eq. \eqref{eq:f_pit}, we have 
    \begin{align}
        f_{\pi}(x,a^{(1)},a^{(-1)})\!=\!r^*(x,a^{(1)})\!-\!r^*(x,a^{(-1)})\!-\!r^{\pi}(x,a^{(1)})\!+\!r^{\pi}(x,a^{(-1)})\!\in\! [-2R,2R].\label{eq:f_pit_range} 
    \end{align}
    Denote $\nu^*\in\argmin_{\nu\in\Pi_{\mathcal{R}}}\sup_{x\in\mathcal{X},a\in\mathcal{A},\pi\in\Pi_{\mathcal{R}}}\frac{\pi(a|x)}{\nu(a|x)}$ as the policy used in the coverability coefficient \eqref{eq:Ccov_online}. Then we have 
    \begin{align}
        \pi(a^{(1)}|x)\le G_{\rm on}\nu^*(a^{(1)}|x), \quad \forall x\in\mathcal{X},a^{(1)}\in\mathcal{A},\pi\in\Pi_{\mathcal{R}}.\label{eq:Ccov_online2}
    \end{align}
    Then for each $(x,a^{(1)})\in\mathcal{X}\times\mathcal{A}$, define the following quantity ($\min\emptyset=+\infty$ by default)
    \begin{align}
        \tau(x,a^{(1)})=\min\Bigg\{t\ge 1\Bigg|\sum_{i=1}^{t} \pi_{i+1}(a^{(1)}|x) \ge G_{\rm on}\nu^*(a^{(1)}|x)\Bigg\}.\label{eq:tau}
    \end{align}
    Hence,
    \begin{align}
        &\sum_{t=1}^{T}\pi_{t+1}(a^{(1)}|x)\mathbb{I}\{t\le\tau(x,a^{(1)})-1\}<G_{\rm on}\nu^*(a^{(1)}|x),\label{eq:sum_small_t}\\
        &\sum_{i=1}^{t} \pi_i(a^{(1)}|x)\ge G_{\rm on}\nu^*(a^{(1)}|x),\quad\quad \forall t\ge \tau(x,a^{(1)})+1. \label{eq:sum_large_t}
    \end{align}
    
    Then we conclude the proof as follows.
    \begin{align}
        &\sum_{t=1}^{T}I_t\nonumber\\
        =&\sum_{t=1}^{T}\frac{\big[\mathbb{E}_{x\sim \rho,a^{(1)}\sim \pi_{t+1}(\cdot|x),a^{(-1)}\sim \pi_{\rm ref}(\cdot|x)}f_{\pi_{t+1}}(x,a^{(1)},a^{(-1)})\mathbb{I}\{t\le\tau(x,a^{(1)})\}\big]^2}{R^2 + \sum_{i=1}^{t}\mathbb{E}_{x\sim \rho,a^{(1)}\sim \pi_{i}(\cdot|x),a^{(-1)}\sim \pi_{\rm ref}(\cdot|x)}[f_{\pi_{t+1}}^2(x,a^{(1)},a^{(-1)})]}\nonumber\\
        &+\sum_{t=1}^{T}\frac{\big[\mathbb{E}_{x\sim \rho,a^{(1)}\sim \pi_{t+1}(\cdot|x),a^{(-1)}\sim \pi_{\rm ref}(\cdot|x)}f_{\pi_{t+1}}(x,a^{(1)},a^{(-1)})\mathbb{I}\{t\ge\tau(x,a^{(1)})+1\}\big]^2}{R^2 + \sum_{i=1}^{t}\mathbb{E}_{x\sim \rho,a^{(1)}\sim \pi_{i}(\cdot|x),a^{(-1)}\sim \pi_{\rm ref}(\cdot|x)}[f_{\pi_{t+1}}^2(x,a^{(1)},a^{(-1)})]}\nonumber\\
        \overset{(a)}{\le}& \frac{1}{R^2}\sum_{t=1}^{T}(2R\mathbb{E}_{x\sim \rho,a^{(1)}\sim \pi_{t+1}(\cdot|x)}\mathbb{I}\{t\le\tau(x,a^{(1)})\})^2\nonumber\\
        &+\sum_{t=1}^{T}\frac{\big[\mathbb{E}_{x\sim \rho,a^{(1)}\sim \overline{\pi}_t(\cdot|x),a^{(-1)}\sim \pi_{\rm ref}(\cdot|x)}f_{\pi_{t+1}}(x,a^{(1)},a^{(-1)})\cdot\frac{\pi_{t+1}(a^{(1)}|x)}{\overline{\pi}_t(a^{(1)}|x)}\mathbb{I}\{t\ge\tau(x,a^{(1)})+1\}\big]^2}{t\mathbb{E}_{x\sim \rho,a^{(1)}\sim \overline{\pi}_t(\cdot|x),a^{(-1)}\sim \pi_{\rm ref}(\cdot|x)}[f_{\pi_{t+1}}^2(x,a^{(1)},a^{(-1)})]}\nonumber\\
        \overset{(b)}{\le}& 4\sum_{t=1}^{T}\mathbb{E}_{x\sim \rho,a^{(1)}\sim \pi_{t+1}(\cdot|x)}\mathbb{I}\{t\le\tau(x,a^{(1)})\}\nonumber\\
        &+\sum_{t=1}^{T}\frac{1}{t}\mathbb{E}_{x\sim \rho,a^{(1)}\sim \overline{\pi}_t(\cdot|x)}\Big[\frac{\pi_{t+1}(a^{(1)}|x)}{\overline{\pi}_t(a^{(1)}|x)}\Big]^2\mathbb{I}\{t\ge\tau(x,a^{(1)})+1\}\nonumber\\
        =&4\sum_{x,a^{(1)}}\rho(x)\Bigg[\sum_{t=1}^{T}[\pi_{t+1}(a^{(1)}|x)\mathbb{I}\{t\le\tau(x,a^{(1)})-1\}]+\sum_{t=1}^{T}[\pi_{t+1}(a^{(1)}|x)\mathbb{I}\{t=\tau(x,a^{(1)})\}]\Bigg]\nonumber\\
        &+2\sum_{x,a^{(1)}}\rho(x)\sum_{t=1}^{T}\frac{\pi_{t+1}(a^{(1)}|x)}{t\overline{\pi}_t(a^{(1)}|x)+t\overline{\pi}_t(a^{(1)}|x)}[\pi_{t+1}(a^{(1)}|x)\mathbb{I}\{t\ge\tau(x,a^{(1)})+1\}]\nonumber\\
        \overset{(c)}{\le}&4\sum_{x,a^{(1)}}\rho(x)[G_{\rm on}\nu^*(a^{(1)}|x)+G_{\rm on}\nu^*(a^{(1)}|x)]\nonumber\\
        &+2\sum_{x,a^{(1)}}\rho(x)\sum_{t=1}^{T}\frac{\pi_{t+1}(a^{(1)}|x)}{t\overline{\pi}_t(a^{(1)}|x)+G_{\rm on}\nu^*(a^{(1)}|x)}[\pi_{t+1}(a^{(1)}|x)\mathbb{I}\{t\ge\tau(x,a^{(1)})+1\}]\nonumber\\
        \overset{(d)}{\le}&8G_{\rm on}\sum_{x,a^{(1)}}\rho(x)\nu^*(a^{(1)}|x)\nonumber\\
        &\!+\!4\!\!\sum_{x,a^{(1)}}\!\rho(x)\!\sum_{t=1}^{T}\log\!\Big[\frac{(t+1)\overline{\pi}_{t+1}(a^{(1)}|x)+G_{\rm on}\nu^*(a^{(1)}|x)}{t\overline{\pi}_t(a^{(1)}|x)+G_{\rm on}\nu^*(a^{(1)}|x)}\Big][G_{\rm on}\nu^*(a^{(1)}|x)]\nonumber\\
        =&8G_{\rm on}+4G_{\rm on}\sum_{x,a^{(1)}}\rho(x)\nu^*(a^{(1)}|x)\log\Big[\frac{(T+1)\overline{\pi}_{T+1}(a^{(1)}|x)+G_{\rm on}\nu^*(a^{(1)}|x)}{\overline{\pi}_1(a^{(1)}|x)+G_{\rm on}\nu^*(a^{(1)}|x)}\Big]\nonumber\\
        \overset{(e)}{\le}&8G_{\rm on}+4G_{\rm on}\sum_{x,a^{(1)}}\rho(x)\nu^*(a^{(1)}|x)\log\Big[\frac{(T+1)G_{\rm on}\nu^*(a^{(1)}|x)+G_{\rm on}\nu^*(a^{(1)}|x)}{G_{\rm on}\nu^*(a^{(1)}|x)}\Big]\nonumber\\
        \le&12G_{\rm on}\log(T+2),\nonumber
    \end{align}
    where (a) denotes $\overline{\pi}_t=\frac{1}{t}\sum_{i=1}^{t}\pi_i$ and uses Eq. \eqref{eq:f_pit_range} and $(\mathbb{E}X)^2\le\mathbb{E}(X^2)$ for any random variable $X\in\mathbb{R}$, (b) uses Cauchy-Schwartz inequality, (c) uses Eqs. \eqref{eq:Ccov_online2}, \eqref{eq:sum_small_t} and Eq. \eqref{eq:sum_large_t}, (d) uses Eq. \eqref{eq:Ccov_online2} and the inequality that $u\le 2\log(1+u)$ for $u=\frac{\pi_{t+1}(a^{(1)}|x)}{t\overline{\pi}_t(a^{(1)}|x)+G_{\rm on}\nu^*(a^{(1)}|x)}\in[0,1]$ ($u\in[0,1]$ due to Eq. \eqref{eq:Ccov_online2}), (e) uses Eq. \eqref{eq:Ccov_online2}. 
\end{proof}

\section{Proof of Proposition \ref{prop:offline_minr}}
$(\pi,r,\xi)$ is the solution to the offline RLHF-COV objective \eqref{eq:offlineRLHF_COV} means the following two conditions hold
\begin{align}
    \pi\in&{\arg\max}_{\pi'\in\Pi}\mathcal{L}_{N,\lambda}(r,\xi)+\eta V_{\beta,\omega}(\pi',r),\nonumber\\
    (r,\xi)\in&{\arg\min}_{r'\in\mathcal{R},\xi'\in\mathbb{R}^N} \max_{\pi'\in\Pi}\mathcal{L}_{N,\lambda}(r',\xi')+\eta V_{\beta,\omega}(\pi',r'). \nonumber
\end{align}
Based on the notation that $\pi_r\overset{\rm def}{=}{\arg\max}_{\pi'\in\Pi}V_{\beta,\omega}(\pi',r)$, the above two conditions are equivalent to
\begin{align}
    \pi=\pi_r,\quad (r,\xi)\in{\arg\min}_{r'\in\mathcal{R},\xi'\in\mathbb{R}^N} \mathcal{L}_{N,\lambda}(r',\xi')+\eta V_{\beta,\omega}(\pi_{r'},r')\nonumber
\end{align}
Furthermore, based on the notation that $\xi_r\overset{\rm def}{=}{\arg\min}_{\xi\in\mathbb{R}^N}\mathcal{L}_{N,\lambda}(r,\xi)$, the above two conditions are equivalent to
\begin{align}
    \pi=\pi_r,\quad\xi=\xi_r,\quad r={\arg\min}_{r'\in\mathcal{R}} \mathcal{L}_{N,\lambda}(r',\xi_{r'})+\eta V_{\beta,\omega}(\pi_{r'},r').
\end{align}
This prove the first part of the theorem. 

Next, we will obtain the analytical solutions of $\pi_r$ and $\xi_{r,i}$. We rewrite the function \eqref{eq:relV_omega} as follows.
\begin{align}
    &V_{\beta,\omega}(\pi,r)\nonumber\\
    =&\mathbb{E}_{x\sim\rho,a\sim\pi(\cdot|x),a'\sim\pi_{\rm base}(\cdot|x)}\big[r(x,a)+\omega|a|-r(x,a')-\omega|a'|\big]-\beta\mathbb{E}_{x\sim \rho} {\rm KL}\big[\pi(\cdot|x)\big\|\pi_{\rm ref}(\cdot|x)\big]\nonumber\\
    =&\mathbb{E}_{x\sim\rho,a\sim\pi(\cdot|x)}\Big[r(x,a)+\omega|a|-\beta\log\frac{\pi(a|x)}{\pi_{\rm ref}(a|x)}\Big]-\mathbb{E}_{x\sim\rho,a'\sim\pi_{\rm base}(\cdot|x)}\big[r(x,a')+\omega|a'|\big]\nonumber\\
    =&-\beta\mathbb{E}_{x\sim\rho,a\sim\pi(\cdot|x)}\Big[\log\frac{\pi(a|x)/Z_r(x)}{\pi_{\rm ref}(a|x)\exp\big[[r(x,a)+\omega|a|]/\beta\big]/Z_r(x)}\Big]\nonumber\\
    &-\mathbb{E}_{x\sim\rho,a'\sim\pi_{\rm base}(\cdot|x)}\big[r(x,a')+\omega|a'|\big]\nonumber\\
    =&C-\beta\mathbb{E}_{x\sim\rho} {\rm KL}\Big[\pi(\cdot|x)\Big\|\pi_{\rm ref}(\cdot|x)\exp\big[[r(x,\cdot)+\omega|\cdot|]/\beta\big]/Z_r(x)\Big],\nonumber
\end{align}
where $Z_r(x)\overset{\rm def}{=}\sum_{a'\in\mathcal{A}}\pi_{\rm ref}(a'|x)\exp\big[\frac{r(x,a')-\omega|a'|}{\beta}\big]$ and the constant $C=\beta\mathbb{E}_{x\sim\rho}\log Z_r(x)-\mathbb{E}_{x\sim\rho,a'\sim\pi_{\rm base}(\cdot|x)}\big[r(x,a')+\omega|a'|\big]$ is independent of $\pi$. Therefore, $\pi_r\overset{\rm def}{=}{\arg\max}_{\pi'\in\Pi}V_{\beta,\omega}(\pi',r)$ should minimize the above KL term, which gives the analytical solution \eqref{eq:pi_r}. 

Note that the log-likelihood function \eqref{eq:lik_corrupted} can be rewritten as follows. 
\begin{align}
    \mathcal{L}_{N,\lambda}(r,\xi)\overset{\rm def}{=}\frac{1}{N}\sum_{i=1}^Nf_i(\xi_i),\nonumber
\end{align}
where $f_i(v):=\lambda|v|-\log\sigma[r(x_i,a_i^w)-r(x_i,a_i^{\ell})+y_iv]$. Hence, $\xi_r\in {\arg\min}_{\xi}\mathcal{L}_{N,\lambda}(r,\xi)$ is equivalent to the following condition: 
\begin{align}
    \xi_{r,i}\in\mathop{\arg\min}_{v\in\mathbb{R}} f_i(v); i=1,2,\ldots,N.\nonumber
\end{align}
As $f_i$ is a convex function for $\lambda>0$, the above optimality condition is equivalent to the following stationary condition.
\begin{align}
    0\in\partial f_i(\xi_{r,i})=\lambda\partial |\xi_{r,i}|+y_i\big\{\sigma[r(x_i,a_i^w)-r(x_i,a_i^{\ell})+y_i\xi_{r,i}]-1\big\},
\end{align}
where $\partial$ denotes partial differential. Noticing that $y_i\in\{-1,1\}$, it can be easily verified that the above equation has unique solution $\xi_{r,i}$ defined by Eq. \eqref{eq:xi_r}.

\section{Proof of Proposition \ref{prop:offline_DPO_equal}}
Note that
\begin{align}
    \xi^{\pi_r}\overset{(a)}{=}\xi_{r^{\pi_r}}\overset{(b)}{=}\xi_{r},\label{eq:xi_eq}
\end{align}
where (a) uses Eq. \eqref{eq:xi_pi} and (b) substitutes Eq. \eqref{eq:r_diff_eq} into Eq. \eqref{eq:xi_r}. Therefore, by using Lemma \ref{lemma:rpi_diff}, Eq. \eqref{eq:xi_eq}, and substituting Eq. \eqref{eq:r_diff_eq} into Eqs. \eqref{eq:lik_corrupted} and \eqref{eq:relV_omega}, we obtain that
\begin{align}
    \mathcal{L}_{N,\lambda}(r^{\pi_r},\xi^{\pi_r})+\eta V_{\beta,\omega}(\pi_{r^{\pi}},r^{\pi_r})=\mathcal{L}_{N,\lambda}(r,\xi_r)+\eta V_{\beta,\omega}(\pi,r),\label{eq:pir_inv}
\end{align}
Since $\Pi_{\mathcal{R}}\overset{\rm def}{=}\{\pi_r:r\in\mathcal{R}\}$, the following two statements are equivalent. 

(P1): $\pi$ is optimal for the offline DPO-COV objective \eqref{eq:offlineDPO_COV}, i.e., 
$$\pi\in {\arg\min}_{\pi'\in\Pi_{\mathcal{R}}}[\mathcal{L}_{N,\lambda}(r^{\pi'},\xi^{\pi'})+\eta V_{\beta,\omega}(\pi_{r^{\pi'}},r^{\pi'})].$$

(P2): There exists $r\in{\arg\min}_{r'\in\mathcal{R}}[\mathcal{L}_{N,\lambda}(r^{\pi_{r'}},\xi^{\pi_{r'}})+\eta V_{\beta,\omega}(\pi_{r^{\pi_{r'}}},r^{\pi_{r'}})]$ such that $\pi=\pi_r$.  

This along with Eq. \eqref{eq:pir_inv} implies that (P2) is equivalent to the following statement. 

(P3): There exists $r\in{\arg\min}_{r'\in\mathcal{R}}[\mathcal{L}_{N,\lambda}(r',\xi_{r'})+\eta V_{\beta,\omega}(\pi_{r'},r')]$ such that $\pi=\pi_r$. 

By Proposition \ref{prop:offline_minr}, (P3) is equivalent to the following statement. 

(P4): There exist $r\in\mathcal{R}$ and $\xi=\xi_r\in\mathbb{R}^N$ such that $\pi=\pi_r$, and $(\pi,r,\xi)$ is the optimal solution to the offline RLHF-COV objective
\eqref{eq:offlineRLHF_COV}. 

So far, we have proved the equivalence among (P1)-(P4), so the first part of this proposition is correct which states that (P1) and (P4) are equivalent. 

It remains to prove the second part of this proposition, i.e., to figure out $\xi$ and $r$ given $\pi$ under the assumption that (P1)-(P4) hold. Note that based on the analytical solution \eqref{eq:pi_r} of $\pi_r$, $\pi=\pi_r$ required by (P2)-(P4) holds if and only if for any $x\in\mathcal{X}$ there exists $U_{\pi}(x)\in\mathbb{R}$ such that $r(x,\cdot)=r^{\pi}(x,\cdot)+U_{\pi}(x)$. In this case, we have 
$$\xi\overset{(a)}{=}\xi_r\overset{(b)}{=}\xi_{r^{\pi}}\overset{(c)}{=}\xi^{\pi},$$
where (a) uses (P4), (b) substitutes $r(x,\cdot)=r^{\pi}(x,\cdot)+U_{\pi}(x)$ into Eq. \eqref{eq:r_pi}, (c) uses $\xi^{\pi}\overset{\rm def}{=}\xi_{r^{\pi}}$. 

\section{Proof of Proposition \ref{prop:online_DPO_equal}}
The proof logic is exactly the same as that of Proposition \ref{prop:offline_DPO_equal}, with $\eta$ replaced by $-\eta$. 

\section{Proof of Theorem \ref{thm:offline_rate}}
Obtain $\widetilde{\pi}\in{\arg\min}_{\pi\in\Pi_{\mathcal{R}}}\big[\mathcal{L}_{N,\lambda}(r^{\pi},\xi^{\pi})+\eta V_{\beta,\omega}(\pi_{r^{\pi}},r^{\pi})\big]$ by minimizing the offline DPO-COV objective \eqref{eq:offlineDPO_COV}. Then based on Proposition \eqref{prop:offline_DPO_equal}, there exists $\widetilde{r}\in\mathcal{R}$ such that $(\widetilde{\pi},\widetilde{r},\xi^{\widetilde{\pi}})$ ($\xi^{\widetilde{\pi}}$ is defined by Eq. \eqref{eq:xi_pi}) is the optimal solution to the offline RLHF-COV objective \eqref{eq:offlineRLHF_COV}, that is, 
\begin{align}
    &(\widetilde{r},\xi^{\widetilde{\pi}})\in{\arg\min}_{r'\in\mathcal{R},\xi'\in\mathbb{R}^N} \max_{\pi'\in\Pi} \big[\mathcal{L}_{N,\lambda}(r',\xi')+\eta V_{\beta,\omega}(\pi',r')\big],\label{eq:tilde_min}\\
    &\widetilde{\pi}=\pi_{\widetilde{r}}\in{\arg\max}_{\pi'\in\Pi} V_{\beta,\omega}(\pi',\widetilde{r}).\label{eq:tilde_max}
\end{align}
Then denote $\widetilde{\pi}_2\in{\arg\max}_{\pi'\in\Pi}{\min}_{r'\in\mathcal{R},\xi'\in\mathbb{R}^N}\big[\mathcal{L}_{N,\lambda}(r',\xi')+\eta V_{\beta,\omega}(\pi',r')\big]$ and we have
\begin{align}
&\mathcal{L}_{N,\lambda}(\widetilde{r},\xi^{\widetilde{\pi}})+\eta V_{\beta,\omega}(\widetilde{\pi}_2,\widetilde{r})\nonumber\\
\ge&\min_{r'\in\mathcal{R},\xi'\in\mathbb{R}^N} \big[\mathcal{L}_{N,\lambda}(r',\xi')+\eta V_{\beta,\omega}(\widetilde{\pi}_2,r')\big]\nonumber\\
\overset{(a)}{=}&\max_{\pi'\in\Pi}\min_{r'\in\mathcal{R},\xi'\in\mathbb{R}^N} \big[\mathcal{L}_{N,\lambda}(r',\xi')+\eta V_{\beta,\omega}(\pi',r')\big]\nonumber\\
\overset{(b)}{=}&\min_{r'\in\mathcal{R},\xi'\in\mathbb{R}^N} \max_{\pi'\in\Pi}\big[\mathcal{L}_{N,\lambda}(r',\xi')+\eta V_{\beta,\omega}(\pi',r')\big]\\
\overset{(c)}{=}&\max_{\pi'\in\Pi}\big[\mathcal{L}_{N,\lambda}(\widetilde{r},\xi^{\widetilde{\pi}})+\eta V_{\beta,\omega}(\pi',\widetilde{r})\big],
\end{align}
where (a) uses $\widetilde{\pi}_2\in{\arg\max}_{\pi'\in\Pi}{\min}_{r'\in\mathcal{R},\xi'\in\mathbb{R}^N}\big[\mathcal{L}_{N,\lambda}(r',\xi')+\eta V_{\beta,\omega}(\pi',r')\big]$, (b) applies the minimax theorem (Theorem 1 of \citep{fan1953minimax}) to the function $\mathcal{L}_{N,\lambda}(r',\xi')+\eta V_{\beta,\omega}(\pi',r')$ (defined by Eqs. \eqref{eq:lik_corrupted} and \eqref{eq:relV_omega}) which is a concave function of $\pi'\in\Pi$ and a convex function of $(r',\xi')\in\mathcal{R}\times\mathbb{R}^d$, and (c) uses Eq. \eqref{eq:tilde_min}. The above inequality implies that $\widetilde{\pi}_2\in\max_{\pi'\in\Pi}V_{\beta,\omega}(\pi',\widetilde{r})$ and thus $\widetilde{\pi}_2=\pi_{\widetilde{r}}\overset{\eqref{eq:tilde_max}}{=}\widetilde{\pi}$. This means
\begin{align}
    \widetilde{\pi}=\widetilde{\pi}_2\in{\arg\max}_{\pi'\in\Pi}\min_{r'\in\mathcal{R},\xi'\in\mathbb{R}^N}\big[\mathcal{L}_{N,\lambda}(r',\xi')+\eta V_{\beta,\omega}(\pi',r')\big].\label{eq:tilde_max2}
\end{align}
Note that for any $\pi\in\Pi$, Eqs. \eqref{eq:relV_omega}, \eqref{eq:Jfunc} imply that
\begin{align}
    J_{\beta,\omega}(\pi)-J_{\beta,\omega}(\widetilde{\pi})=V_{\beta,\omega}(\pi)-V_{\beta,\omega}(\widetilde{\pi}). \label{eq:JVdiff}
\end{align}
Hence, $\pi_{r^*}\in{\arg\max}_{\pi\in\Pi}V_{\beta,\omega}(\pi)$ also satisfies 
\begin{align}
    \pi_{r^*}\in{\arg\max}_{\pi\in\Pi}J_{\beta,\omega}(\pi).\label{eq:J_amax}
\end{align}

Finally, we prove the generalization error rate \eqref{eq:offline_Jdiff} as follows. 
\begin{align}  
    &\max_{\pi\in\Pi}J_{\beta,\omega}(\pi)-J_{\beta,\omega}(\widetilde{\pi})\nonumber\\
    \overset{(a)}{=}&V_{\beta,\omega}(\pi_{r^*},r^*)-\eta^{-1}\max_{\pi\in\Pi}\min_{r\in\mathcal{R},\xi\in\mathbb{R}^N}\big[\mathcal{L}_{N,\lambda}(r,\xi)+\eta V_{\beta,\omega}(\pi,r)\big]\nonumber\\
    &+\eta^{-1}\min_{r\in\mathcal{R},\xi\in\mathbb{R}^N}\big[\mathcal{L}_{N,\lambda}(r,\xi)+\eta V_{\beta,\omega}(\widetilde{\pi},r)\big]-V_{\beta,\omega}(\widetilde{\pi},r^*)\nonumber\\
    \overset{(b)}{\le}& V_{\beta,\omega}(\pi_{r^*},r^*)-\eta^{-1}\min_{r\in\mathcal{R}}\big[\mathcal{L}_{N,\lambda}(r,\xi_r)+\eta V_{\beta,\omega}(\pi_{r^*},r)\big]\nonumber\\
    &+\eta^{-1}\big[\mathcal{L}_{N,\lambda}(r^*,\xi^*)+\eta V_{\beta,\omega}(\widetilde{\pi},r^*)\big]-V_{\beta,\omega}(\widetilde{\pi},r^*)\nonumber\\
    \overset{(c)}{=}&\max_{r\in\mathcal{R}}\Big\{\mathbb{E}_{x\sim \rho,a\sim\pi_{r^*}(\cdot|x),a'\sim\pi_{\rm base}(\cdot|x)}\big[r^*(x,a)-r^*(x,a')-r(x,a)+r(x,a')\big]\nonumber\\
    &+\eta^{-1}[\mathcal{L}_{N,\lambda}(r^*,\xi^*)-\mathcal{L}_{N,\lambda}(r,\xi_r)]\Big\}\nonumber\\
    \overset{(d)}{\le}&\max_{r\in\mathcal{R}}\Big\{G_{\mathcal{D}}E_r+\frac{2}{N\eta}\Big[\|\xi^*\|_1+\log\Big(\frac{|\mathcal{N}_{1/N}(\mathcal{R})|}{\delta}\Big)\Big]-\frac{E_r^2}{2\eta(3+e^R)^2}+\frac{7}{N\eta}\Big\}\nonumber\\
    \overset{(e)}{\le}&\frac{2}{N\eta}\Big[\|\xi^*\|_1+5\log\Big(\frac{|\mathcal{N}_{1/N}(\mathcal{R})|}{\delta}\Big)\Big]+\frac{\eta G_{\mathcal{D}}^2}{2}(3+e^R)^2\nonumber\\
    \overset{(f)}{\le}&\frac{\red{(G_{\mathcal{D}}^2+1)}(3+e^R)}{\sqrt{N}}\sqrt{\|\xi^*\|_1+5\log[|\mathcal{N}_{1/N}(\mathcal{R})|/\delta]},\label{eq:Jdiff_mid}
\end{align}   
where (a) uses Eqs. \eqref{eq:tilde_max2}, \eqref{eq:JVdiff} and \eqref{eq:J_amax}, (b) uses $\xi_r\in{\arg\min}_{\xi\in\mathbb{R}^N}\mathcal{L}_{N,\lambda}(r,\xi)$ as well as $r^*\in\mathcal{R}$ in Assumption \ref{assum:R}, (c) uses Eq. \eqref{eq:relV_omega}, (d) uses Assumption \ref{assum:coverage} and Lemma \ref{lemma:likdiff} with $\epsilon=1/N$ and $E_r:=\sqrt{\mathbb{E}_{\mathcal{D}}\big|r^*(x_1,a_1^{w})-r^*(x_1,a_1^{\ell})-r(x_1,a_1^{w})+r(x_1,a_1^{\ell})\big|^2}$, (e) uses $1\le\log[|\mathcal{N}_{1/N}(\mathcal{R})|/\delta]$ as well as $bE-aE^2\le \frac{b^2}{4a}$ for any $a>0$ and $b,E\in\mathbb{R}$, (f) uses $\red{\eta=\frac{2\sqrt{\|\xi^*\|_1+5\log[|\mathcal{N}_{1/N}(\mathcal{R})|/\delta]}}{\sqrt{N}(3+e^R)}}$. 

\section{Proof of Theorem \ref{thm:online_rate}}
% Note that $\xi^{\pi,(t)}:=[\xi_1^{\pi},\ldots,\xi_t^{\pi}]\in{\arg\min}_{\xi^{(t)}\in\mathbb{R}^t}\mathcal{L}_{t,\lambda}(r^{\pi},\xi^{\pi,(t)})$ where $\xi_i^{\pi}\overset{\rm def}{=}\xi_{r^{\pi},i}$ is defined in Eq. \eqref{eq:xi_pi}. Therefore, 
The update rule \eqref{eq:onlineDPO_COV2} implies that
\begin{align}
    0\le& t\phi_t(\pi_{r^*})-t\phi_t(\pi_{t+1})\nonumber\\
    \overset{(a)}{=}&\sum_{i=1}^{t}\Bigg\{\lambda(|\xi_{r^*,i}|-|\xi_i^{\pi_{t+1}}|)+\beta\eta\log\frac{\pi_{r^*}(a_i^{(-1)}|x_i)}{\pi_{t+1}(a_i^{(-1)}|x_i)}\nonumber\\
    &+\log\frac{\sigma[r^{\pi_{t+1}}(x_i,a_i^w)-r^{\pi_{t+1}}(x_i,a_i^{\ell})+y_i\xi_i^{\pi_{t+1}}]}{\sigma[r^*(x_i,a_i^w)-r^*(x_i,a_i^{\ell})+y_i\xi_{r^*,i}]}\Bigg\}\nonumber\\
    \overset{(b)}{\le}&\sum_{i=1}^{t}\Bigg\{\lambda(|\xi_i^*|-|\xi_i^{\pi_{t+1}}|)+\beta\eta\log\frac{\pi_{r^*}(a_i^{(-1)}|x_i)}{\pi_{t+1}(a_i^{(-1)}|x_i)}\nonumber\\
    &+\log\frac{\sigma[r^{\pi_{t+1}}(x_i,a_i^w)-r^{\pi_{t+1}}(x_i,a_i^{\ell})+y_i\xi_i^{\pi_{t+1}}]}{\sigma[r^*(x_i,a_i^w)-r^*(x_i,a_i^{\ell})+y_i\xi_i^*]}\Bigg\},\label{eq:psi_t_min}
\end{align}
where (a) uses Eq. \eqref{eq:r_pi}, $\xi_i^{\pi_{r^*}}=\xi_{r^*,i}$ (by Eq. \eqref{eq:xi_eq}), and Lemma \ref{lemma:rdiff} (with $r$ replaced by $r^*$) and (b) uses the fact that $\xi_{r^*,i}\in{\arg\min}_{\xi_i\in\mathbb{R}}\big\{\lambda|\xi_i|-\log\sigma[r^*(x_i,a_i^w)-r^*(x_i,a_i^{\ell})+y_i\xi_i]\big\}$, the $i$-th component of $\mathcal{L}_{t,\lambda}(r^*,\xi)$ defined in Eq. \eqref{eq:lik_corrupted}. 

Based on Lemmas \ref{lemma:online_loglik} and \ref{lemma:online_concentration} (both with $\delta$ replaced by $\delta/2$ and $\pi$ replaced by $\pi_{t+1}$), the following two inequalities hold for $t=1,\ldots,T$ simultaneously with probability at least $1-\delta$. 
\begin{align}
    &\sum_{i=1}^{t}\log\frac{\sigma[r^{\pi_{t+1}}(x_i,a_i^w)-r^{\pi_{t+1}}(x_i,a_i^{\ell})+y_i\xi_i^{\pi_{t+1}}]}{\sigma[r^*(x_i,a_i^w)-r^*(x_i,a_i^{\ell})+y_i\xi_i^*]}\nonumber\\
    \le&2\log\Big(\frac{2T|\mathcal{N}_{\epsilon}(\mathcal{R})|}{\delta}\Big)+4t\epsilon+\sum_{i=1}^{t}\Big\{\frac{1}{4}|\xi_i^*|+\sigma(R)|\xi_i^{\pi_{t+1}}|\nonumber\\
    &-\frac{1}{2(3+e^R)^2}\mathbb{E}_{x\sim \rho,a^{(1)}\sim \pi_i(\cdot|x),a^{(-1)}\sim \pi_{\rm ref}(\cdot|x)}\big[f_{\pi_{t+1}}^2(x,a^{(1)},a^{(-1)})\big]\Big\},\label{eq:online_loglik}
\end{align}
\begin{align}
    \sum_{i=1}^{t}\log\!\frac{\pi_{r^*}(a_i^{(-1)}|x_i)}{\pi_{t+1}(a_i^{(-1)}|x_i)}\!\le\! \frac{4R}{\beta}\sqrt{2t\log\!\Big[\frac{2T\mathcal{N}_{\epsilon}(\mathcal{R})}{\delta}\Big]}\!+\!\frac{4t\epsilon}{\beta}\!+\!t\mathbb{E}_{x\sim \rho,a\sim \pi_{\rm ref}(\cdot|x)}\!\Big[\!\log\!\frac{\pi_{r^*}(a|x)}{\pi_{t+1}(a|x)}\Big].\label{eq:online_concentration}
\end{align}
Substituting Eqs. \eqref{eq:online_loglik} and \eqref{eq:online_concentration} into Eq. \eqref{eq:psi_t_min}, we obtain that
\begin{align}
    0\le& 4\eta R\sqrt{2t\log\Big[\frac{4T\mathcal{N}_{\epsilon}(\mathcal{R})}{\delta}\Big]}+4\eta\epsilon t+\beta\eta t\mathbb{E}_{x\sim \rho,a\sim \pi_{\rm ref}(\cdot|x)}\Big[\log\frac{\pi_{r^*}(a|x)}{\pi_{t+1}(a|x)}\Big]\nonumber\\
    &+\lambda\sum_{i=1}^{t}(|\xi_i^*|-|\xi_i^{\pi_{t+1}}|)+2\log\Big(\frac{2T|\mathcal{N}_{\epsilon}(\mathcal{R})|}{\delta}\Big)+4t\epsilon+\sum_{i=1}^{t}\Big\{\frac{1}{4}|\xi_i^*|+\sigma(R)|\xi_i^{\pi_{t+1}}|\nonumber\\
    &-\frac{1}{2(3+e^R)^2}\mathbb{E}_{x\sim \rho,a^{(1)}\sim \pi_i(\cdot|x),a^{(-1)}\sim \pi_{\rm ref}(\cdot|x)}\big[f_{\pi_{t+1}}^2(x,a^{(1)},a^{(-1)})\big]\Big\}\nonumber\\
    \overset{(a)}{\le}&4\eta R\sqrt{2t\log\Big[\frac{4T\mathcal{N}_{\epsilon}(\mathcal{R})}{\delta}\Big]}+2\log\Big(\frac{2T|\mathcal{N}_{\epsilon}(\mathcal{R})|}{\delta}\Big)+4\eta\epsilon t+4\epsilon t\nonumber\\
    &-\beta\eta t\mathbb{E}_{x\sim \rho,a\sim \pi_{\rm ref}(\cdot|x)}\Big[\log\frac{\pi_{t+1}(a|x)}{\pi_{r^*}(a|x)}\Big]\nonumber\\
    &+\sum_{i=1}^t\Big\{\frac{5}{4}|\xi_i^*|-\frac{1}{2(3+e^R)^2}\mathbb{E}_{x\sim \rho,a^{(1)}\sim \pi_i(\cdot|x),a^{(-1)}\sim \pi_{\rm ref}(\cdot|x)}\big[f_{\pi_{t+1}}^2(x,a^{(1)},a^{(-1)})\big]\Big\},\label{eq:2high_probs}
\end{align}%
where (a) uses $\lambda\in[\sigma(R),1]$. Then, we have 
\begin{align}
    &J_{\beta,\omega}(\pi_{r^*})-J_{\beta,\omega}(\pi_{t+1})\nonumber\\
    \overset{(a)}{=}&\mathbb{E}_{x\sim \rho,a\sim \pi_{r^*}(\cdot|x)}\Big[r^*(x,a)-\omega|a|-\beta\log\frac{\pi_{r^*}(a|x)}{\pi_{\rm ref}(a|x)}\Big]\nonumber\\
    &-\mathbb{E}_{x\sim \rho,a\sim \pi_{t+1}(\cdot|x)}\Big[r^*(x,a)-\omega|a|-\beta\log\frac{\pi_{t+1}(a|x)}{\pi_{\rm ref}(a|x)}\Big]\nonumber\\
    \overset{(b)}{=}&\mathbb{E}_{x\sim \rho,a\sim \pi_{\rm ref}(\cdot|x)}\Big[r^*(x,a)-\omega|a|-\beta\log\frac{\pi_{r^*}(a|x)}{\pi_{\rm ref}(a|x)}\Big]\nonumber\\
    &-\mathbb{E}_{x\sim \rho,a\sim \pi_{t+1}(\cdot|x)}\Big[r^*(x,a)-\omega|a|-\beta\log\frac{\pi_{t+1}(a|x)}{\pi_{\rm ref}(a|x)}\Big]\nonumber\\
    =&\beta\mathbb{E}_{x\sim \rho,a\sim \pi_{\rm ref}(\cdot|x)}\Big[\log\frac{\pi_{t+1}(a|x)}{\pi_{r^*}(a|x)}\Big]+\mathbb{E}_{x\sim \rho,a\sim \pi_{t+1}(\cdot|x)}\Big[\omega|a|+\beta\log\frac{\pi_{t+1}(a|x)}{\pi_{\rm ref}(a|x)}-r^*(x,a)\Big]\nonumber\\
    &-\mathbb{E}_{x\sim \rho,a\sim \pi_{\rm ref}(\cdot|x)}\Big[\omega|a|+\beta\log\frac{\pi_{t+1}(a|x)}{\pi_{\rm ref}(a|x)}-r^*(x,a)\Big]\nonumber\\
    \overset{(c)}{=}&\beta\mathbb{E}_{x\sim \rho,a\sim \pi_{\rm ref}(\cdot|x)}\Big[\log\frac{\pi_{t+1}(a|x)}{\pi_{r^*}(a|x)}\Big]+\mathbb{E}_{x\sim \rho,a\sim \pi_{t+1}(\cdot|x)}\big[r^{\pi_{t+1}}(x,a)-r^*(x,a)\big]\nonumber\\
    &-\mathbb{E}_{x\sim \rho,a\sim \pi_{\rm ref}(\cdot|x)}\big[r^{\pi_{t+1}}(x,a)-r^*(x,a)\big]\nonumber\\
    \overset{(d)}{=}&\beta\mathbb{E}_{x\sim \rho,a\sim \pi_{\rm ref}(\cdot|x)}\Big[\log\frac{\pi_{t+1}(a|x)}{\pi_{r^*}(a|x)}\Big]\!-\!\mathbb{E}_{x\sim \rho,a^{(1)}\sim \pi_{t+1}(\cdot|x),a^{(-1)}\sim \pi_{\rm ref}(\cdot|x)}[f_{\pi_{t+1}}(x,a^{(1)},a^{(-1)})]\nonumber\\
    \overset{(e)}{\le}&\beta\mathbb{E}_{x\sim \rho,a\sim \pi_{\rm ref}(\cdot|x)}\Big[\log\frac{\pi_{t+1}(a|x)}{\pi_{r^*}(a|x)}\Big]+\frac{\eta t}{2}(3+e^R)^2I_t\nonumber\\
    &+\frac{1}{2\eta t(3+e^R)^2}\Big\{R^2 + \sum_{i=1}^{t}\mathbb{E}_{x\sim \rho,a^{(1)}\sim \pi_{i}(\cdot|x),a^{(-1)}\sim \pi_{\rm ref}(\cdot|x)}[f_{\pi_{t+1}}^2(x,a^{(1)},a^{(-1)})]\Big\}\nonumber\\
    \overset{(f)}{\le}&\frac{\eta t}{2}(3+e^R)^2I_t+\frac{1}{2\eta t}+4R\sqrt{\frac{2}{t}\log\Big[\frac{4T\mathcal{N}_{\epsilon}(\mathcal{R})}{\delta}\Big]}+\frac{2}{\eta t}\log\Big(\frac{2T|\mathcal{N}_{\epsilon}(\mathcal{R})|}{\delta}\Big)\nonumber\\
    &+4\epsilon+\frac{4\epsilon}{\eta}+\frac{5}{4\eta t}\sum_{i=1}^t|\xi_i^*|,\nonumber
\end{align}
where (a) uses Eq. \eqref{eq:Jfunc}, (b) uses Eq. \eqref{eq:pi_r} which implies that $r^*(x,a)-\omega|a|-\beta\log\frac{\pi_{r^*}(a|x)}{\pi_{\rm ref}(a|x)}=\beta\log Z_{r^*}(x)$ does not rely on $a$, (c) uses Eqs. \eqref{eq:r_pi}, (d) uses Eq. \eqref{eq:f_pit}, (e) applies Cauchy-Schwartz inequality to Eq. \eqref{eq:It}, (f) uses Eq. \eqref{eq:2high_probs} and $3+e^R>R>0$. Finally, we conclude the proof by averaging the above inequality over $t\in\{1,2,\ldots,T\}$ as follows. 
\begin{align}
    &\mathbb{E}\big[J_{\beta,\omega}(\pi_{r^*})-J_{\beta,\omega}(\pi_{\widehat{T}})\big]=\frac{1}{T}\sum_{t=1}^T\big[J_{\beta,\omega}(\pi_{r^*})-J_{\beta,\omega}(\pi_{t+1})\big]\nonumber\\
    \overset{(a)}{\le}&6\eta G_{\rm on}(3+e^R)^2\log(T+2)+\frac{3\log T}{2\eta T}+8R\sqrt{\frac{2}{T}\log\Big[\frac{4T\mathcal{N}_{\epsilon}(\mathcal{R})}{\delta}\Big]}\nonumber\\
    &+\frac{6\log T}{T\eta}\log\Big(\frac{2T|\mathcal{N}_{\epsilon}(\mathcal{R})|}{\delta}\Big)+4\epsilon+\frac{4\epsilon}{\eta}+\frac{15\log T}{4T\eta }\sum_{i=1}^T|\xi_i^*|\nonumber\\
    \overset{(b)}{\le}&6(3+e^R)\log(T+2)\sqrt{\frac{G_{\rm on}}{T}\Big[\log\Big(\frac{4T|\mathcal{N}_{1/T}(\mathcal{R})|}{\delta}\Big)+\|\xi^*\|_1\Big]}+\frac{3(3+e^R)(\log T)\sqrt{G_{\rm on}}}{2\sqrt{T\log[2T\mathcal{N}_{1/T}(\mathcal{R})/\delta]}}\nonumber\\
    &+8R\sqrt{\frac{2}{T}\log\Big[\frac{4T\mathcal{N}_{1/T}(\mathcal{R})}{\delta}\Big]}+6(3+e^R)(\log T)\sqrt{\frac{G_{\rm on}}{T}\log\Big(\frac{4T|\mathcal{N}_{1/T}(\mathcal{R})|}{\delta}\Big)}\nonumber\\
    &+\frac{4}{T}+4(3+e^R)\sqrt{\frac{G_{\rm on}}{T\log[2T\mathcal{N}_{1/T}(\mathcal{R})/\delta]}}+\frac{15(3+e^R)(\log T)\sqrt{G_{\rm on}}}{4\sqrt{T\log42T\mathcal{N}_{1/T}(\mathcal{R})/\delta]+T\|\xi^*\|_1}}\|\xi^*\|_1\nonumber\\
    \overset{(c)}{\le}&(6+1.5+8\sqrt{2}+6+4+4)(3+e^R)(\log T)\sqrt{\frac{G_{\rm on}}{T}\Big[\log\Big(\frac{4T|\mathcal{N}_{1/T}(\mathcal{R})|}{\delta}\Big)+\|\xi^*\|_1\Big]}\nonumber\\
    &+\frac{15(3+e^R)(\log T)\sqrt{G_{\rm on}}}{4\sqrt{T\log[4T\mathcal{N}_{1/T}(\mathcal{R})/\delta]+T\|\xi^*\|_1}}\big\{\log[4T\mathcal{N}_{1/T}(\mathcal{R})/\delta]+\|\xi^*\|_1\big\}\nonumber\\
    \le&37(3+e^R)(\log T)\sqrt{\frac{G_{\rm on}}{T}\Big[\log\Big(\frac{4T|\mathcal{N}_{1/T}(\mathcal{R})|}{\delta}\Big)+\|\xi^*\|_1\Big]},\nonumber
\end{align}
where (a) uses $\sum_{t=1}^T \frac{1}{t}\le 1+\log T\le 3\log T$, $\sum_{t=1}^T \frac{1}{\sqrt{t}}\le 2\sqrt{T}$ and Eq. \eqref{eq:sum_It}, (b) uses $\eta=\frac{\sqrt{\log[4T\mathcal{N}_{1/T}(\mathcal{R})/\delta]+\|\xi^*\|_1}}{(3+e^R)\sqrt{TG_{\rm on}}}$, $\epsilon=\frac{1}{T}$, and (c) uses $G_{\rm on}\ge 1$ (by Eq. \eqref{eq:Ccov_online}), $R<3+e^R$, $\log(T+2)\le 2\log T$ and $\log\big(\frac{4T|\mathcal{N}_{1/T}(\mathcal{R})|}{\delta}\big)\ge \log T\ge 1$. 
\end{document}